\documentclass[11pt]{article}

\usepackage[a4paper,margin=1in]{geometry}
\usepackage[T1]{fontenc}
\usepackage[utf8]{inputenc}
\usepackage{amsmath}
\usepackage{algorithm}
\usepackage{algpseudocode}
\usepackage{wrapfig}
\usepackage{adjustbox}
\usepackage[table]{xcolor}
\usepackage[authoryear,square]{natbib}
\usepackage[colorlinks=true,linkcolor=blue,citecolor=blue,urlcolor=blue]{hyperref}

\usepackage[utf8]{inputenc} % allow utf-8 input
\usepackage[T1]{fontenc}    % use 8-bit T1 fonts
\usepackage{hyperref}       % hyperlinks
\usepackage{url}            % simple URL typesetting
\usepackage{booktabs}       % professional-quality tables
\usepackage{amsfonts}       % blackboard math symbols
\usepackage{nicefrac}       % compact symbols for 1/2, etc.
\usepackage{microtype}      % microtypography
\usepackage{xcolor}         % colors
\usepackage{enumitem}
\usepackage{amsmath,amssymb,amsthm,mathtools}
\usepackage{hyperref}
\usepackage{enumitem}
\usepackage{microtype}
\usepackage{bm}

\newtheorem{definition}{Definition}
\newtheorem{assumption}{Assumption}
\newtheorem{lemma}{Lemma}
\newtheorem{proposition}{Proposition}
\newtheorem{theorem}{Theorem}

\theoremstyle{remark}
\newtheorem{remark}{Remark}

\definecolor{cMain}{RGB}{0,76,153}   % main descent term (-hA_t)
\definecolor{cBias}{RGB}{194,80,0}   % bias term (residual)
\definecolor{cMart}{RGB}{123,50,148} % martingale term
\definecolor{cSame}{RGB}{0,128,0}    % same-step correction
\definecolor{cRem}{RGB}{0,130,130}   % quadratic remainder

\newcommand{\EE}{\mathbb{E}}
\newcommand{\PP}{\mathbb{P}}
\newcommand{\calF}{\mathcal{F}}
\newcommand{\inner}[2]{\left\langle #1, #2 \right\rangle}

\newcommand{\diag}{\mathrm{diag}}

\newcommand{\epsz}{\epsilon_0}     % stabilizer
\newcommand{\yinit}{y_{\mathrm{init}}}
\newcommand{\underlineD}{\underline D}
\newcommand{\overlineD}{\overline D}
\newcommand{\clip}{\mathrm{clip}}
\newcommand{\one}{\mathbf 1}
\definecolor{greyrow}{RGB}{217,217,217}
\definecolor{lightgreyrow}{RGB}{242,242,242}
\definecolor{lightgreenrow}{RGB}{226,239,218}
\title{Why Clipping Matters in AdaGrad?\\ Toward a High-Probability Theory under Generalized Smoothness}

\author{
  Alokendu Mazumder$^{1}$\quad Ayaan Mohd$^{3}$ \quad Harshit Rawat$^{4}$ \quad Arnab Roy$^{5,1}$ \\ \quad Mayank Baranwal$^{6,7}$\quad Punit Rathore$^{1,2}$ \\
  \small $^1$ Robert Bosch Center for Cyber Physical Systems, IISc Bengaluru \\
  \small $^2$ Centre for Infrastructure, Sustainable Transportation and Urban Planning, IISc Bengaluru  \\
  \small $^3$ Undergraduate Program, IISc Bengaluru\\
  \small $^4$ Department of Computational Data Sciences, IISc Bengaluru\\
  \small $^5$ Department of Computer Science and Automation, IISc Bengaluru\\
  \small $^6$ Tata Consultancy Services Research, Mumbai\\
  \small $^7$ System and Control Engineering, IIT Bombay\\
  \small \texttt{\{alokendum, ayaanmohd, harshitrawat, arnabroy, prathore\}@iisc.ac.in}\\
  \small \texttt{baranwal.mayank@tcs.com}, \texttt{mbaranwal@iitb.ac.in}
}

\begin{document}

\maketitle

\begin{abstract}
We analyze the original same-step coordinate-wise AdaGrad under generalized smoothness and heavy-tailed noise with bounded variance. In this setting, local curvature may grow sub-quadratically with the gradient norm, and stochastic gradients are assumed to have only bounded conditional second moments. We show that unclipped AdaGrad can become \emph{anisotropically miscalibrated}: under heavy-tailed noise, the adaptive denominator can learn the geometry of rare noise shocks rather than the local curvature of the objective, leading to a persistent directional distortion that blocks finite-horizon Euclidean progress. We then prove that clipping repairs this failure mode. Our main result is a finite-horizon high-probability guarantee for the original non-lagged AdaGrad update, yielding $\frac1T\sum_{t=0}^{T-1}\|\nabla f(x_t)\|^2=\mathcal{O}\left(\frac{d\big(\sqrt{\log T} + \log \frac{1}{\delta}\big)}{\sqrt{T}}\right),$ and hence $\widetilde{\mathcal O}(\varepsilon^{-2})$ complexity. This shows that, for AdaGrad under heavy-tailed noise, clipping is a structural stabilizer of the adaptive geometry rather than merely a robustness heuristic.
\end{abstract}
\tableofcontents

\section{Introduction}
\label{sec:intro}

In this paper, we study the following optimization problem
\begin{equation}
\label{eq:1}
    \min_{x\in\mathcal{X}} f(x),
\end{equation}
where $\mathcal{X} \subseteq\mathbb{R}^d$ is the domain of $f$. Classical textbook analyses~\citep{nemirovskij1983problem,nesterov2013introductory} of~(\ref{eq:1}) often require the Lipschitz smoothness condition, which assumes $\|\nabla^2 f(x)\| \le L$ almost everywhere for some $L\geq 0$ called the smoothness constant. This condition is restrictive: it imposes a single global curvature scale and implies that the objective is locally controlled above and below by quadratic models.%This condition, however, is rather restrictive and only satisfied by functions that are both upper and lower bounded by quadratic functions.
% The modern theory of first-order optimization is still built, to a remarkable extent, around one global geometric premise: the gradient is Lipschitz continuous, or equivalently, the Hessian is uniformly bounded by a constant. This assumption is mathematically convenient and has powered a vast literature, but it is also narrow. 
It effectively restricts the objective to a geometry whose curvature is globally controlled by a single number, and therefore excludes many functions whose local curvature naturally changes with the scale of the gradient. In problems arising from modern machine learning, this rigidity is often difficult to justify~\citep{zhang2019gradient,faw2023beyond,wang2024provable,vankov2025optimizing}. The geometry seen by the iterates is typically far from uniform, and the relevant question is not whether the objective is globally well-approximated by a quadratic everywhere, but whether it admits a useful \emph{local} smoothness description along the optimization path. 

% Recently, \citet{NEURIPS2023_7e8bb8d1} proposed a generalized smoothness framework that relaxes the classical global Lipschitz-gradient assumption. Their framework allows curvature to vary with the local gradient scale through a condition of the form
% \[
% \|\nabla^2 f(x)\| \le \ell(\|\nabla f(x)\|),
% \]
% where $\ell$ is a nondecreasing function, typically assumed to be sub-quadratic in the regime of interest. This substantially enlarges the admissible function class beyond globally smooth objectives and captures settings in which curvature grows with gradient magnitude. More importantly, it changes the logic of the analysis. Under this model, one does not assume bounded gradients as a primitive condition. Rather, the point of generalized smoothness is that gradient control can be \emph{derived} from trajectory control itself. If the iterates remain in a bounded sublevel set, then generalized smoothness implies a corresponding bound on the gradient magnitude, which in turn induces an effective local smoothness constant \(L=\ell(2G)\) along the trajectory. Once this happens, the objective behaves locally like a classically smooth function on the region explored by the iterates. In this sense, generalized smoothness replaces a fixed global geometric description by a \emph{trajectory-dependent} one.
Recently, \citet{NEURIPS2023_7e8bb8d1} proposed a generalized smoothness framework that relaxes global Lipschitz smoothness by allowing curvature to scale with the gradient, $\|\nabla^2 f(x)\| \le \ell(\|\nabla f(x)\|)$, where \(\ell\) is nondecreasing and typically sub-quadratic. This framework covers objectives beyond globally smooth ones and includes, as special or related cases, several relaxed smoothness models such as $(L_0,L_1)$-smoothness~\citep{zhang2019gradient}. More importantly, it changes the logic of the analysis: rather than assuming a fixed global curvature bound, one controls the trajectory, derives gradient bounds from sublevel-set control, and thereby obtains an effective local smoothness constant along the path. In this sense, generalized smoothness replaces a static global geometry by a \emph{trajectory-dependent} one.
% Under this model, one no longer begins with a fixed global curvature constant and derives descent uniformly over the whole space. Instead, one first seeks to control the trajectory itself: if the gradients along the optimization path remain bounded by a constant $G>0$, then this induces an \emph{effective local smoothness constant} $L = \ell(2G)$, and the objective behaves locally like a classically smooth function. In this sense, the generalized smoothness framework replaces a global geometric description by a \emph{trajectory-dependent} one.

This trajectory-dependent perspective makes adaptive methods such as AdaGrad~\citep{streeter2010less,duchi2011adaptive} and Adam~\citep{kingma2015Adam} especially natural for modern machine-learning objectives~\citep{vaswani2017attention,you2019large,nikishina2022cross,li2022sp2,abdukhakimov2024stochastic,li2024enhancing,schaipp2023momo,loizou2021stochastic,moskvoretskii2024large,shi2021ai}. If the local geometry changes with the gradient scale, then algorithms that rescale their steps according to observed gradient magnitudes should, at least in principle, be better matched to the landscape than fixed-step methods. AdaGrad is the canonical example: it accumulates coordinate-wise squared gradients and uses them to define an adaptive diagonal metric. In benign stochastic regimes, this mechanism can be interpreted as learning a useful coordinate-wise scale for the problem. Under heavy-tailed noise, however, this interpretation becomes questionable. The accumulator does not observe curvature directly; it observes squared stochastic gradients. Our results show that rare extreme realizations can dominate particular coordinates, causing the learned metric to reflect the noise anisotropy over the objective anisotropy.
\begin{algorithm}[H]
\caption{AdaGrad~\citep{duchi2011adaptive} (Clipped and Unclipped)}
\label{alg:non_lagged_adagrad}
\begin{algorithmic}[1]
\State \textbf{Input:} Initial $x_0 \in \mathbb{R}^d$, accumulator $y_0 = y_{\mathrm{init}}\mathbf{1}$, stepsize $h > 0$, parameter $\epsilon_0 > 0$, clipping threshold $C > 0$
\For{$t = 0, 1, \dots, T-1$}
    \State $g_t = \nabla f(x_t) + \xi_t$
    \State $y_{t+1} \coloneqq y_t + g_t^{\odot 2}$
    \State $D_{t+1} \coloneqq \operatorname{diag}(\sqrt{y_{t+1}} + \epsilon_0)$
    \State $x_{t+1} \coloneqq x_t - h D_{t+1}^{-1} g_t$
\EndFor
\end{algorithmic}
\end{algorithm}
%We hypothesize that if rare extreme realizations dominate some coordinates, the learned metric may reflect the noise's anisotropy rather than the objective's anisotropy.

This issue is not visible in the classical smooth theory at the level we need here. Existing lower-bound results for AdaGrad already reveal several distinct pathologies, but they do so from perspectives different from ours. In the heavy-tailed setting, \citet{chezhegov2025clipping} show that \emph{AdaGrad-Norm} and \emph{Adam-Norm} (with and without delay) can have provably poor high-probability convergence, and that gradient clipping restores polylogarithmic dependence on the confidence level. In the relaxed-smoothness setting, \citet{crawshaw2025complexity} prove complexity lower bounds for several AdaGrad variants under \((L_0,L_1)\)-smoothness~\citep{zhang2019gradient}, showing that the dependence on \(\Delta,L_0,L_1\) can be intrinsically worse than in the globally smooth case. Under a different refined anisotropic model, \citet{pmlr-v291-jiang25c} provide supporting lower bounds specific to AdaGrad to show tightness of their upper bounds and to separate AdaGrad from SGD. These results are important, but they leave open a different question that becomes central in the generalized-smoothness regime studied here: 
\begin{quote}
    \centering \itshape
        What geometric object is the unclipped adaptive metric actually learning?
\end{quote}
Since generalized-smoothness descent is local and path-dependent, it is not enough for the adaptive denominator to merely stay finite; it must encode the \emph{right directional scale}. Our point is that, without clipping, this can fail in a sharper way than mere slowdown: the adaptive metric can become dominated by anisotropy in the noise rather than anisotropy in the objective.

This leads to the central questions of the paper:

\begin{quote}
    \centering \itshape
    Can unclipped AdaGrad preserve the correct adaptive geometry under generalized smoothness and heavy-tailed noise? If not, can clipping repair this failure and yield finite-horizon high-probability guarantees?
    %Can unclipped AdaGrad provably preserve the correct adaptive geometry under generalized smoothness and heavy-tailed noise? If not, does clipping repair precisely this failure and allow a finite-horizon high-probability theory?
\end{quote}

Our answer is negative to the first question and positive to the second. 

\subsection{Our Contributions}
\label{sec:contributions}

The main contributions of this work are summarized below. We emphasize that the principal contribution of this paper is theoretical: we provide an in-depth analysis of clipping mechanisms within established adaptive methods like AdaGrad, without aiming to present an empirical study.

\begin{itemize}[leftmargin=*]
    \item \textbf{Negative Result for AdaGrad (Proposition~\ref{thm:anisotropic_miscalibration}).}  
    We prove a finite-horizon lower bound showing that, under anisotropic heavy-tailed noise, AdaGrad can mislearn the problem geometry. Specifically, we construct a convex stochastic optimization problem with bounded conditional second moments in which a single rare shock distorts the adaptive denominator in the wrong coordinate and keeps the Euclidean stationarity average bounded away from zero.
    % We prove a finite-horizon lower bound showing that under anisotropic heavy-tailed noise, AdaGrad update can learn the wrong directional scale: the adaptive denominator becomes dominated by the noise geometry, and the Euclidean stationarity average stays bounded away from zero. In particular, we design an example of convex stochastic optimization problem such that the noise is heavy tailed and....

    \item \textbf{Clipping fixes AdaGrad and New Upper Bounds (Theorem~\ref{thm:main}).} We show that gradient clipping repairs this failure mode. Under sub-quadratic generalized smoothness and heavy-tailed noise with bounded conditional second moments, we prove a finite-horizon high-probability convergence guarantee for the original clipped AdaGrad update, yielding a $\widetilde{\mathcal{O}}(\varepsilon^{-2})$ complexity bound with polylogarithmic dependence on the failure probability. %We prove that the above issue can be addressed via gradient clipping. Concretely, we develop a finite-horizon high-probability theory for the original clipped AdaGrad update under sub-quadratic generalized smoothness and heavy-tailed noise with bounded conditional second moments, yielding a high-probability $\tilde{\mathcal{O}}(\varepsilon^{-2})$ complexity bound. 

    \item \textbf{Analytical Parameter Regimes (Section~\ref{sec:param}).} We provide explicit, theoretically constructible bounds for the stepsize, clipping level, and warm start. This shows that true $\varepsilon$-target Euclidean stationarity is theoretically attainable under a strict, mathematically defined regime, solidifying the high-probability bounds into a complete non-asymptotic guarantee.
    % \item \textbf{A high-probability theory for clipped AdaGrad.}
    % We establish Euclidean stationarity guarantees for clipped AdaGrad under sub-quadratic generalized smoothness and heavy-tailed noise with bounded conditional second moments, yielding a high-probability \(\widetilde{\mathcal O}(\varepsilon^{-2})\) complexity bound.

    \item \textbf{Novel Proof Architecture for Same-Step Updates (Lemma~\ref{lem:samestep}).}
A major theoretical hurdle in analyzing AdaGrad is the correlation between the current gradient and the concurrent adaptive denominator $D_{t+1}^{-1}$ \citep{wang2023convergence,faw2022power}. We resolve this by cleanly decoupling the historical metric from the same-step correction, bounding the latter via a deterministic telescoping argument.
    
    % \item \textbf{A Proof Architecture for the Original Same-Step Update (Lemma~\ref{lem:samestep}).}
    % We overcome one of the core analytical complications of same-step dependence in AdaGrad, the correlation between the current gradient and the adaptive denominator $D_{t+1}^{-1}$ \citep{wang2023convergence,faw2022power}. We handle this by separating the past metric from the same-step correction and controlling the latter through a deterministic telescoping argument.

%         \item \textbf{An explicit finite-horizon \(\varepsilon\)-target regime.}
% We show that with a suitable calibration of the clipping threshold, warm start, and stepsize, clipped AdaGrad attains a true finite-horizon Euclidean guarantee with \(\widetilde{\mathcal O}(\varepsilon^{-2})\) complexity. This turns the high-probability theory into a concrete and usable parameter regime rather than a purely qualitative convergence statement.
\end{itemize}

Most stochastic optimization guarantees are stated in expectation (e.g., $\mathbb{E}\|\nabla f(x)\|^2\le \varepsilon$), obscuring the algorithm's pathwise behavior. Markov conversions give high probability only by requiring expected accuracy $\varepsilon\delta$; since standard nonconvex complexities scale as $\mathcal{O}(\varepsilon^{-2})$, this leads to a prohibitive $\mathcal{O}(\varepsilon^{-2}\delta^{-2})$ dependence. A meaningful high-probability theory should instead incur only polylogarithmic dependence on $1/\delta$. This is challenging for adaptive methods because stepsizes are random and require pathwise control \citep{li2020high,attia2023sgd,chezhegov2025clipping}. In AdaGrad, the main obstacle is \emph{same-step dependence}: the metric $D_{t+1}^{-1}$ uses the same stochastic gradient as the update. Consequently, many prior analyses use delayed variants to recover conditional independence \citep{li2019convergence,li2020high,faw2022power,savarese2021domain,chakrabarti2024methodology,chezhegov2025clipping}. We analyze the original non-lagged update directly. Gradient clipping controls heavy-tailed noise, while a deterministic telescoping argument isolates the same-step correction, yielding a genuine $\widetilde{\mathcal{O}}(\varepsilon^{-2}\log(1/\delta))$ complexity bound without delays or average-case reductions.

\subsection{Preliminaries}
\label{sec:prelim}

In this section, we define the noise model and the class of functions to which the objective $f$ belongs. 
% We begin with a standard assumption in unconstrained optimization. Unless stated otherwise, these apply throughout the remaining sections.
\begin{assumption}
\label{ass:1}
    The objective function $f$ is differentiable and closed within its open domain $\text{dom}(f) \subseteq \mathbb{R}^d$ and is bounded from below, i.e., $f^* \coloneqq \inf_x f(x) > -\infty$.
\end{assumption}

\subsubsection{Generalized Smoothness}
We now formally introduce the generalized smoothness condition~\cite{NEURIPS2023_7e8bb8d1}, and present its properties.

\begin{definition}[$\ell$-smoothness]
\label{def:1}
A real-valued differentiable function $f : \mathcal{X} \to \mathbb{R}$ is $\ell$-smooth for some non-decreasing continuous function $\ell : [0, +\infty) \to (0, +\infty)$ if
$$
\|\nabla^2 f(x)\| \le \ell(\|\nabla f(x)\|)
$$
almost everywhere (with respect to the Lebesgue measure) in $\mathcal{X}$.
\end{definition}

\begin{remark}
Definition 1 reduces to the classical $L$-smoothness when $\ell \equiv L$ is a constant function. It reduces to the $(L_0, L_1)$-smoothness proposed in \cite{zhang2019gradient} when $\ell(u) = L_0 + L_1 u$ is an affine function.
\end{remark}

First, we provide the following lemma, which is very useful in our analyses in this paper.

\begin{lemma}[\cite{NEURIPS2023_7e8bb8d1}]
\label{lem:gen-smooth}
If $f$ is $\ell$-smooth, for any $x \in \mathcal{X}$ satisfying $\|\nabla f(x)\| \le G$, we have (1) $\mathcal{B}(x, G/L) \subseteq \mathcal{X}$, and (2) for any $x_1, x_2 \in \mathcal{B}(x, G/L)$,
    \begin{equation}
    \label{eq:lips}
    \|\nabla f(x_1) - \nabla f(x_2)\| \le L \|x_1 - x_2\|, \quad f(x_1) \le f(x_2) + \langle \nabla f(x_2), x_1 - x_2 \rangle + \frac{L}{2} \|x_1 - x_2\|^2,
    \end{equation}
where $L \coloneqq \ell(2G)$ and $\mathcal{B}(x,R)$ denotes euclidean ball with radius $R$ and center at $x$.
\end{lemma}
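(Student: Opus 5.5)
The plan is to prove a Grönwall-type comparison along line segments, applied to the scalar function $u(s)=\|\nabla f(\gamma(s))\|$. First we establish the ball inclusion (1), and then we use it to obtain the Lipschitz and descent inequalities in (2). Fix $x$ with $\|\nabla f(x)\|\le G$, set $L=\ell(2G)$, and take any unit direction $v$. Consider the path $\gamma(s)=x+sv$ for $s\in[0,G/L)$, and let $s^\ast$ be the supremum of those $s$ such that $\gamma([0,s])\subseteq\mathcal{X}$ and $\|\nabla f(\gamma(r))\|<2G$ for all $r\le s$. Since $\mathcal{X}$ is open and $\nabla f$ is continuous, $s^\ast>0$.

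On $[0,s^\ast)$ we want to bound $\frac{d}{ds}\nabla f(\gamma(s))$ by $\|\nabla^2 f(\gamma(s))\|\le \ell(\|\nabla f(\gamma(s))\|)\le \ell(2G)=L$. The catch is that the Hessian bound holds only almost everywhere, so it may fail on the measure-zero segment itself. I would handle this with a Fubini/averaging argument. Because $\ell$ is continuous, the strict inequality $\|\nabla f\|<2G$ holds on a tubular neighbourhood of the compact segment $\gamma([0,s])$. For almost every parallel translate $x'+sv$ with $x'$ near $x$, the Hessian bound then holds for a.e.\ $s$ along that segment. For such segments the fundamental theorem of calculus applies, because $\nabla f$ is locally Lipschitz: the bounded Hessian a.e.\ on an open convex set gives this. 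The result is
\[
\|\nabla f(x'+sv)-\nabla f(x')\|\le L s .
\]
We then pass to the limit $x'\to x$ using continuity of $\nabla f$. Consequently $\|\nabla f(\gamma(s))\|\le G+Ls<2G$ for $s<G/L$.

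This gives a contradiction argument. If $s^\ast<G/L$, the gradient stays strictly below $2G$ up to $s^\ast$, so the only way the path can stop is by exiting $\mathcal{X}$. But $f$ is closed (lower semicontinuous with open domain, so $f\to\infty$ at the boundary of $\mathcal{X}$). Meanwhile $f(\gamma(s))$ is bounded on $[0,s^\ast)$, since its derivative $\langle\nabla f,v\rangle$ is bounded by $2G$. Hence $\gamma(s^\ast)\in\mathcal{X}$, and openness extends the interval beyond $s^\ast$, contradicting maximality. Therefore $s^\ast\ge G/L$ in every direction, which gives (1) $\mathcal{B}(x,G/L)\subseteq\mathcal{X}$ together with $\|\nabla f\|\le 2G$ on that ball.

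For (2), take $x_1,x_2\in\mathcal{B}(x,G/L)$. The ball is convex, so the segment $[x_2,x_1]$ lies inside it, and on it $\|\nabla f\|\le 2G$, hence $\|\nabla^2 f\|\le L$ a.e. The same almost-every-segment argument yields $\|\nabla f(x_1)-\nabla f(x_2)\|\le L\|x_1-x_2\|$. The descent inequality then follows by writing
\[
f(x_1)-f(x_2)-\langle\nabla f(x_2),x_1-x_2\rangle=\int_0^1\langle\nabla f(x_2+\tau(x_1-x_2))-\nabla f(x_2),x_1-x_2\rangle\,d\tau
\]
and bounding the integrand by $L\tau\|x_1-x_2\|^2$.

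The main obstacle is the first step: the Hessian bound holds only almost everywhere, and the segment must be shown to stay in the domain. I would resolve it through the neighbourhood-averaging argument combined with closedness of $f$ as described above.
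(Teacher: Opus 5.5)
The paper does not prove this lemma; it quotes it from \citet{NEURIPS2023_7e8bb8d1}, so your argument has to stand on its own. Its architecture is sensible: continuation along rays, closedness to keep each ray inside $\mathcal{X}$, and integrating a Hessian bound along segments. But the step you yourself flag as the main obstacle is not resolved. You say the fundamental theorem of calculus applies on almost every parallel segment ``because $\nabla f$ is locally Lipschitz: the bounded Hessian a.e.\ on an open convex set gives this.'' This is circular, since local Lipschitzness of $\nabla f$ is essentially what part (2) asserts. For a pointwise a.e.\ Hessian as in Definition~\ref{def:1}, it is also false. Unless $\nabla f$ is absolutely continuous along lines, your Fubini averaging yields $\|\partial_s\nabla f(x'+sv)\|\le L$ for a.e.\ $s$, but gives no bound on $\nabla f(x'+sv)-\nabla f(x')$.

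Here is a counterexample. Let $c$ be the Cantor function, extended by $0$ on $(-\infty,0]$ and by $1$ on $[1,\infty)$, and set $f(x)=\int_0^x c(t)\,dt$ on $\mathbb{R}$. Then $f$ is $C^1$, convex, nonnegative and closed, and $f''=0$ a.e. So $f$ is $\ell$-smooth with $\ell\equiv 1$ and meets all the standing assumptions. At $x=0$ we have $\|\nabla f(0)\|=0\le G$ for every $G>0$, and $L=1$. Take $x_2=0$ and $x_1=3^{-n}\in\mathcal{B}(0,G)$ for $n$ large. Self-similarity gives $|f'(x_1)-f'(x_2)|=2^{-n}>3^{-n}$. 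It also gives $f(x_1)=6^{-n}/2>9^{-n}/2=f(x_2)+f'(x_2)(x_1-x_2)+\tfrac{L}{2}(x_1-x_2)^2$. So the statement read literally is false, and no argument can close your gap without an extra hypothesis.

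What is missing is a regularity assumption that Definition~\ref{def:1}, as written, does not contain. One option is $f\in C^2(\mathcal{X})$: then the a.e.\ inequality holds everywhere by continuity of both sides, and no averaging is needed. Another is that $\nabla f$ is locally Lipschitz, which is what you get if the Hessian bound is read in the distributional sense. Under such a hypothesis your proof goes through as outlined:
\begin{itemize}
\item averaging over parallel segments and letting $x'\to x$ gives the Lipschitz estimate;
\item the continuation argument is correct, because along the ray $f$ stays bounded (its derivative has absolute value at most $2G$), whereas closedness forces $f\to\infty$ at $\partial\mathcal{X}$;
\item the integral identity gives the descent inequality.
\end{itemize}
You also use continuity of $\nabla f$ repeatedly: for $s^\ast>0$, for the tubular neighbourhood on which $\|\nabla f\|<2G$, and to extend past $s^\ast$. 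Mere differentiability does not provide it. That neighbourhood comes from continuity of $\nabla f$ plus compactness of the segment, not from continuity of $\ell$. The added regularity covers this as well.
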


\textbf{Lemma}~\ref{lem:gen-smooth} states that, if the gradient at $x$ is bounded by some constant $G$, then within its neighborhood with a radius $G/L$, we can obtain (\ref{eq:lips}), the same inequalities that were derived in the textbook analyses \citep{nesterov2013introductory} under the standard Lipschitz smoothness condition. With (\ref{eq:lips}), the analysis for generalized smoothness is not much harder than that for standard smoothness. Since we mostly choose $x = x_2 = x_t$ and $x_1 = x_{t+1}$ in the analysis, in order to apply \textbf{Lemma}~\ref{lem:gen-smooth}, we need two conditions: $\|\nabla f(x_t)\| \le G$ and $\|x_{t+1} - x_t\| \le G/L$ for some constant $G$. The latter is usually directly implied by the former for most deterministic methods with a sufficiently small step size; furthermore, the former can be obtained via the approach by \citet{NEURIPS2023_7e8bb8d1}, which bounds the gradients along the trajectory. We show it in the following lemma.

\begin{lemma}[\citet{NEURIPS2023_7e8bb8d1}]
\label{lem:levelset}
Assume $f$ is $\ell$-smooth with sub-quadratic $\ell$.
If $f(x)-f^\star \le F$ for some $x \in \mathcal{X}$ and $F\geq 0$, define $
G \coloneqq \sup\Big\{u\ge 0:\ u^2 \le 2\,\ell(2u)\,F\Big\},$ $L \coloneqq \ell(2G),$
then they satisfy $G^2 = 2\ell(2G)F$ and we have $G<\infty$ and $\|\nabla f(x)\|\le G$.
\end{lemma}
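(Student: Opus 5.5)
We argue by contradiction, following the structure of the argument in \citet{NEURIPS2023_7e8bb8d1}. There are three things to establish: that $G<\infty$, that $G$ satisfies the equation $G^2=2\ell(2G)F$, and that $\|\nabla f(x)\|\le G$ whenever $f(x)-f^\star\le F$.

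\emph{Finiteness and the fixed-point identity.} Sub-quadratic means $\ell(u)/u^2\to 0$ as $u\to\infty$. Then $2\ell(2u)F/u^2 = 8F\,\ell(2u)/(2u)^2\to 0$, so $u^2\le 2\ell(2u)F$ fails for all large $u$, and the admissible set is bounded. It is nonempty, since $u=0$ belongs to it. Hence $G<\infty$. The set $\{u\ge 0: u^2-2\ell(2u)F\le 0\}$ is closed because $\ell$ is continuous, so $G$ is attained and $G^2\le 2\ell(2G)F$. For the reverse inequality, suppose $G^2<2\ell(2G)F$ strictly. By continuity the inequality would persist at $u=G+\eta$ for small $\eta>0$, contradicting maximality. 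Thus $G^2=2\ell(2G)F$. The degenerate case $F=0$ gives $G=0$, which is consistent since $\ell>0$.

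\emph{Gradient bound.} Suppose instead that $\|\nabla f(x)\|=g>G$. Because $u\mapsto u^2-2\ell(2u)F$ is positive beyond $G$, we get $g^2>2\ell(2g)F$. Now apply Lemma~\ref{lem:gen-smooth} at $x$, with gradient bound $g$ in place of $G$ and $L_g:=\ell(2g)$. This gives $\mathcal{B}(x,g/L_g)\subseteq\mathcal{X}$, together with the quadratic upper bound on that ball. Take the gradient step $x_1=x-\nabla f(x)/L_g$, which has $\|x_1-x\|=g/L_g$ and so lies in the closed ball; I expect this to be fine by continuity, or one can use a slightly shorter step and take a limit. The descent inequality in \eqref{eq:lips} yields
\[
f(x_1)\le f(x)-\frac{g^2}{2L_g}.
\]
Therefore
\[
f^\star\le f(x_1)\le f(x)-\frac{g^2}{2\ell(2g)} < f(x)-F\le f^\star,
\]
which is a contradiction. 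Hence $\|\nabla f(x)\|\le G$.

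\emph{Main obstacle.} The delicate step is justifying $g^2>2\ell(2g)F$ for every $g>G$, i.e.\ that no admissible $u$ exists beyond the supremum. This is immediate from the definition of $G$ as a supremum: any $u>G$ is not in the set. So the real care goes into two places. The first is the boundary issue of applying Lemma~\ref{lem:gen-smooth} at radius exactly $g/L_g$; if needed, use the step $x-\theta\nabla f(x)/L_g$ with $\theta<1$, which gives decrease $(\theta-\theta^2/2)g^2/L_g$, and let $\theta\uparrow1$, using the strict inequality. The second is checking that $x_1\in\mathcal{X}$, so that $f(x_1)\ge f^\star$ is meaningful; this is exactly part (1) of Lemma~\ref{lem:gen-smooth}.
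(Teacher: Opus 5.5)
Your proof is correct and is essentially the argument of \citet{NEURIPS2023_7e8bb8d1}, to which the paper defers without giving its own proof. There, a gradient step of length $\|\nabla f(x)\|/\ell(2\|\nabla f(x)\|)$ inside the ball from Lemma~\ref{lem:gen-smooth} gives $\|\nabla f(x)\|^2\le 2\ell(2\|\nabla f(x)\|)(f(x)-f^\star)\le 2\ell(2\|\nabla f(x)\|)F$, so $\|\nabla f(x)\|$ lies in the admissible set and is at most $G$; your contradiction is just the contrapositive of this, and your continuity argument for $G^2=2\ell(2G)F$ and the $\theta\uparrow 1$ device for the ball's boundary are both sound.
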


Therefore, in order to bound the gradients along the trajectory as we discussed in \textbf{Lemma}~\ref{lem:levelset}, it
suffices to bound the function values, which is usually easier.

\begin{assumption}
    \label{ass:gen}
    The objective function $f$ is $\ell$-smooth with subquadratic $\ell$.
    \end{assumption}

The standard smooth function class is very restrictive as it only contains functions that are upper and
lower bounded by quadratic functions. The $(L_0, L_1)$ smooth function class~\citet{zhang2019gradient} is more general since it also
contains, e.g., univariate polynomials and exponential functions. Assumption~\ref{ass:gen} is even more general and
contains univariate rational functions, double exponential functions, etc. Next, we make a standard assumption on the gradient noise model. 

At each iteration $t$, we observe a stochastic gradient $g_t = \nabla f(x_t) + \xi_t$. Here $\xi$ is a random variable
following some distribution that may be dependent on $x$
and time. We define the filtration $\mathcal{F}_t \coloneqq \sigma(g_0, \dots, g_t)$ and denote the conditional expectation as $\mathbb{E}_{t-1}[\cdot] \coloneqq \mathbb{E}[\cdot | \mathcal{F}_{t-1}]$. $g_t$ might be unbounded due to potentially unbounded gradient noise. 
% The clipping operator with clipping level $C > 0$ is defined as $\text{clip}(x, C) \coloneqq \min\left\{1, \frac{C}{\|x\|}\right\}x$ for $x \neq 0$ and $\text{clip}(x, C) \coloneqq 0$ for $x = 0$.

 \begin{assumption}[\cite{nemirovski2009robust,ghadimi2012optimal,takac2013mini,NEURIPS2023_7e8bb8d1}]
 \label{ass:noise}
     $\mathbb{E}_{t-1}[\xi_t] = 0$ and $\mathbb{E}_{t-1}[\|\xi_t\|^2] \le \sigma^2$ for some $\sigma \ge 0$.
 \end{assumption}

% The above assumption is standard and used in many prior works~\citep{}.

% Existing lower bounds for unclipped Adam/AdaGrad-type methods under heavy-tailed noise, such as \citet{chezhegov2025clipping}, establish poor high-probability behavior in globally smooth settings and show that clipping repairs this failure. Those results are important, but they do not identify what geometric object the unclipped adaptive metric is actually learning. In the generalized-smoothness regime studied in this paper, this question becomes central: since descent is local and geometry is path-dependent, the adaptive metric must not merely stay finite; it must encode the \emph{right directional scale}. Our claim is that, without \emph{clipping}, this can fail in a sharper way than mere slowdown.
\section{Anistropic Miscalibration and The Failure of AdaGrad}
\label{sec:LB}
We demonstrate a negative convergence result for AdaGrad (\textbf{Algorithm}~\ref{alg:non_lagged_adagrad}) caused by \emph{anisotropic miscalibration}. Heavy-tailed noise in flat directions can dominate the adaptive metric, severely under-updating steeper directions and misaligning with local geometry.

\begin{proposition}[Anisotropic miscalibration lower bound for AdaGrad]
\label{thm:anisotropic_miscalibration}
Let
\[
    f(x_1,x_2)=\frac14 x_1^4+\frac12 x_1^2+8x_2^2
\]
and \(x_0=(2,1)\). Run \textbf{Algorithm}~\ref{alg:non_lagged_adagrad}
with \(d=2\), \(h=1\), \(y_{\mathrm{init}}=0\), and \(\epsilon_0=1\).
For every horizon \(T\ge 2\) and every noise scale \(\sigma>0\), there exists
a stochastic oracle satisfying Assumption~\ref{ass:noise} with noise level
\(\sigma\) such that, with probability at least $p_T\coloneqq\frac{\sigma^2}
    {\bigl(32(T-1)-10\bigr)^2+\sigma^2},$
the following inequalities hold simultaneously for every \(t=1,\dots,T-1\):
\[
    \partial_{11}^2 f(x_t)<\partial_{22}^2 f(x_t),\qquad
    (D_t)_{11}>\frac{16}{9}(T-1)(D_t)_{22},\qquad
    \|\nabla f(x_t)\|>1.
\]
Consequently, since \(\|\nabla f(x_0)\|>1\), we have $\mathbb P\!\left(
    \frac1T\sum_{t=0}^{T-1}\|\nabla f(x_t)\|^2>1
    \right)\ge p_T$. Therefore, any uniform guarantee of the form $\mathbb P\!\left(
    \frac1T\sum_{t=0}^{T-1}\|\nabla f(x_t)\|^2\le 1
    \right)\ge 1-\delta$ over all stochastic oracles satisfying Assumption~\ref{ass:noise} requires $T\ge
    1+\frac{1}{32}
    \left(
        10+\sigma\sqrt{\frac{1-\delta}{\delta}}
    \right)$.
    
\end{proposition}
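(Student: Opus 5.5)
The plan is to take a one-shot, one-sided, mean-zero noise that is active only at $t=0$ and only in the flat coordinate $x_1$. The rare shock inflates $(D_t)_{11}$ enough to freeze $x_1$ for the whole horizon, while $x_2$ is optimized normally and its denominator stays near $17$.

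\textbf{Construction.} Set $a\coloneqq 32(T-1)-10>0$ (since $T\ge 2$). Let $\xi_{0}=(\zeta,0)$, where
\[
\zeta=a \text{ with probability } \frac{\sigma^2}{a^2+\sigma^2},\qquad \zeta=-\frac{\sigma^2}{a} \text{ otherwise}.
\]
Then $\EE\zeta=0$ and $\EE\zeta^2=\sigma^2$. For $t\ge 1$ take $\xi_t=0$. This oracle satisfies Assumption~\ref{ass:noise}, and the shock event has probability exactly $p_T$. On this event the rest of the trajectory is deterministic.

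\textbf{The shock step.} Since $\nabla f(2,1)=(10,16)$, we get $g_0=(32(T-1),16)$. Hence $(D_1)_{11}=32(T-1)+1$ and $(D_1)_{22}=17$, so
\[
x_{1,1}=2-\frac{32(T-1)}{32(T-1)+1}\in(1,2),\qquad x_{1,2}=\frac{1}{17}.
\]

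\textbf{Coordinate $1$ stays frozen.} For $t\ge1$ we have $(D_t)_{11}\ge 32(T-1)$. By induction $x_{t,1}$ stays in $(0,2)$: it is decreasing, and each step moves it by at most $(x^3+x)/(D_t)_{11}\le 10/(32(T-1))$. Over $T-1$ steps the total displacement is at most $10/32$, so $x_{t,1}>1-10/32=22/32$. Then $\partial_1 f=x_1^3+x_1>1$, which gives $\|\nabla f(x_t)\|>1$. Also $x_{t,1}<2<\sqrt5$ gives $\partial^2_{11}f=3x_1^2+1<16=\partial^2_{22}f$.

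\textbf{Coordinate $2$: one-dimensional AdaGrad on $8x_2^2$.} Suppose inductively that $(D_t)_{22}\in[17,18)$. Then
\[
x_{t+1,2}=x_{t,2}\Bigl(1-\frac{16}{(D_{t+1})_{22}}\Bigr),
\]
and the factor lies in $[0,1/9)$. So $|x_{t,2}|\le \frac{1}{17}\,9^{-(t-1)}$. The subsequent contributions $(16x_{t,2})^2$ therefore sum to at most $(16/17)^2/(1-1/81)<1$. Hence $y_{t,2}<257$ and $(D_t)_{22}<\sqrt{257}+1<18$, which closes the induction.

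\textbf{Ratio claim.} Combining, $(D_t)_{11}\ge 32(T-1)>\tfrac{16}{9}(T-1)\cdot 18>\tfrac{16}{9}(T-1)(D_t)_{22}$.

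\textbf{Consequences.} Every term of the stationarity average exceeds $1$ on the shock event, including $t=0$. So $\PP\bigl(\frac1T\sum_t\|\nabla f(x_t)\|^2>1\bigr)\ge p_T$. A uniform guarantee with failure probability $\delta$ then forces $p_T\le\delta$, i.e. $a^2\ge\sigma^2(1-\delta)/\delta$. Rearranging gives the stated bound on $T$.

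\textbf{Main obstacle.} The delicate step is the coupled induction. It keeps $x_{t,1}$ inside $(22/32,2)$ while simultaneously keeping $(D_t)_{22}<18$. The first inequality uses the crude bound $|g_{t,1}|\le 10$, valid because $x_{t,1}\le 2$. The second needs the contraction factor to stay nonnegative and below $1/9$, which is why $D_{22}\in[17,18)$ must be carried along in the same induction.
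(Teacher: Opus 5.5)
Your proof is correct and follows the paper's argument essentially step for step: the same two-point shock $\xi_0=(\zeta,0)$ with $a=32(T-1)-10$, the same explicit first step, the same induction giving $x_{t,2}\le \frac{1}{17}9^{-(t-1)}$ and $y_{t,2}<257$, and the same per-step displacement bound that freezes $x_{t,1}$. The differences are minor: you use the cruder bounds $x_{t,1}<2$ and $|g_{t,1}|\le 10$, which yield $x_{t,1}>11/16$, instead of the paper's $x_{t,1}<4/3$ and $g_{t,1}<4$, which yield $x_{t,1}>7/8$; you should state explicitly that $(11/16)^3+11/16\approx 1.012>1$, since the margin is thin. In the ratio chain, $32(T-1)=\tfrac{16}{9}(T-1)\cdot 18$ is an equality rather than a strict inequality, but strictness is recovered from $(D_t)_{22}<18$ (or from $(D_t)_{11}\ge 32(T-1)+1$), so the claim stands.
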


% \begin{proposition}[Anisotropic miscalibration lower bound for AdaGrad]
% \label{thm:anisotropic_miscalibration}
% Let 
% \[
% f(x_1,x_2)=\frac14 x_1^4+\frac12 x_1^2+8x_2^2
% \]
% and \(x_0=(2,1)\). Run \textbf{Algorithm}~\ref{alg:non_lagged_adagrad} with \(d=2\), \(h=1\), \(y_{\mathrm{init}}=0\), and \(\epsilon_0=1\). For every horizon \(T\ge 2\) and every \(\sigma>0\), there exists a stochastic oracle satisfying Assumption~\ref{ass:noise} such that, with probability at least $p_T\coloneqq\frac{\sigma^2}{(32(T-1)-10)^2+\sigma^2},$ one has for every \(t=1,\dots,T-1\),
% \[
% \partial_{11}^2 f(x_t)<\partial_{22}^2 f(x_t),\qquad
% (D_t)_{11}>\frac{16}{9}(T-1)(D_t)_{22},\qquad
% \|\nabla f(x_t)\|>1.
% \]
% Consequently, $\mathbb P\!\left(
% \frac1T\sum_{t=0}^{T-1}\|\nabla f(x_t)\|^2>1
% \right)\ge p_T.$ Hence, any guarantee of the form
% $\mathbb P\!\left(
% \frac1T\sum_{t=0}^{T-1}\|\nabla f(x_t)\|^2\le 1
% \right)\ge 1-\delta
% $
% requires $T\ge 1+\frac{1}{32}\!\left(10+\sigma\sqrt{\frac{1-\delta}{\delta}}\right).
% $
% \end{proposition}

\begin{proof}[Proof Sketch]
We construct a heavy-tailed oracle triggering a shock $\xi_0 = (B, 0)$ with probability $p_T = \mathcal{O}(\sigma^2/B^2)$, permanently inflating the adaptive metric to $(D_t)_{11} = \Omega(T)$. Under noiseless subsequent updates, the second coordinate contracts exponentially while the penalized first coordinate stalls ($x_{t,1} > 7/8$), establishing a strict anisotropic mismatch $(D_t)_{11} \gg (D_t)_{22}$. This learned metric inversion prevents Euclidean convergence, bounding the gradient norm $\|\nabla f(x_t)\| > 1$ for all $t < T$. Forcing the stationarity failure probability $p_T \le \delta$ algebraically isolates $B$ and mandates the finite-horizon lower bound $T = \Omega\bigl(\sigma\sqrt{1/\delta}\bigr)$. \emph{Proof is deferred to Appendix~\ref{sec:ams}.}
% We construct a heavy-tailed oracle triggering an initial shock \(\xi_0 = (B, 0)\) with probability \(p_T = \mathcal{O}(\sigma^2/B^2)\), permanently inflating the adaptive metric to \((D_t)_{11} = \Omega(T)\). Under noiseless subsequent updates, the second coordinate contracts exponentially while the artificially penalized first coordinate stalls (\(x_{t,1} > 7/8\)), establishing a strict anisotropic mismatch \((D_t)_{11} \gg (D_t)_{22}\). This learned metric inversion prevents Euclidean convergence, strictly bounding the gradient norm \(\|\nabla f(x_t)\| > 1\) for all \(t < T\). Forcing the stationarity failure probability \(p_T \le \delta\) therefore algebraically isolates \(B\) and mandates the finite-horizon lower bound \(T = \Omega\bigl(\sigma\sqrt{1/\delta}\bigr)\). \emph{Full proof is deferred to Appendix~\ref{sec:ams}.}
\end{proof}

While \cite[Theorem~2.1]{chezhegov2025clipping} shows that heavy-tailed noise slows AdaGrad's concentration, Proposition~\ref{thm:anisotropic_miscalibration} reveals a more fundamental obstruction: an irreversible collapse of the algorithm's geometry. This finite-horizon failure leaves the average gradient norm strictly bounded away from zero, establishing the structural necessity of clipping. Rather than a mere robustness heuristic, clipping is the exact mechanism required to prevent rare shocks from permanently poisoning the adaptive denominator. Therefore, our contributions follow a clear trajectory: we formalize this structural failure of plain AdaGrad, prove that clipping systematically removes the obstruction, and develop a novel proof architecture to analyze the exact same-step update.
\section{Related Work}

\begin{table}[ht]
\caption{
Summary of existing high probability convergence results for AdaGrad.
\textbf{None of these works show}
$
\frac1T\sum_{t=1}^T \|\nabla f(x_t)\|^2
= \tilde{\mathcal O}(T^{-1/2})
$
\textbf{for clipped AdaGrad under generalized smoothness.} (1) Uses $L_1$ average gradient norm stationarity. (2) They assume the objective function is uniformly upper-bounded. $\$$ denotes that the rates are given for $\alpha = 2$.} 
\label{tab:my-table}
\resizebox{\columnwidth}{!}{%
\begin{tabular}{lcccccl}
\toprule
 &
  \begin{tabular}[c]{@{}c@{}}Allow \\unbounded\\ gradient\end{tabular} &
  Smooth &
  Noise &
  Rate &
  Conv. &
  Comments \\ \midrule
Ward et al. [2020]~\citep{ward2020adagrad} &
  $\times$ &
  $L$ &
  bounded variance &
  $O(\frac{\log T}{\delta^{3/2}\sqrt{T}} + \frac{\log^2T}{\delta^2 T})$ &
 best iterate &
  AdaGrad Norm \\
Li et al. [2020]~\cite{li2020high} &
  $\checkmark$ &
  $L$ &
  sub-Gaussian &
  $O\Big(\frac{d\log^{3/2}(T/\delta)}{\sqrt{T}} + \frac{d^2\log^2(T/\delta)}{T}\Big)$ &
 best iterate &
  Delayed AdaGrad with momentum \\
Kavis et al. [2022]~\cite{kavis2022high} &
  $\times$ &
  $L$ &
  bounded variance&
  $O(\frac{\log T + \log (1/\delta) + \sqrt{\log(1/\delta)}}{\sqrt T})$ & average stationarity&
  AdaGrad-Norm \\
Attia et al. [2023]~\cite{attia2023sgd} &
  $\checkmark$ &
  $L$ &affine noise&
 $O\left( \frac{\log^2(T/\delta)}{\sqrt{T}} + \frac{\log^4(T/\delta) \log(1/\delta)}{T} \right)$ &
  average stationarity &
  SGD with AdaGrad stepsize\\
Liu et al. [2023]~\cite{liu2023high} &
  $\checkmark$ &
  $L$ &sub-Gaussian&
 $\mathcal{O}\left( \frac{d^3 \log^{3/2}(dT/\delta)}{\sqrt{T}} + \frac{d^3 \log^{3/2}(dT/\delta) \log(d/\delta)}{T} \right)$ &
  average stationarity$^{(1)}$ &
  AdaGrad\\
Li et al. [2023]~\cite{li2023high} &
  $\checkmark^{(2)}$ &
  $L$ &bounded non-central $\alpha$ moment of stoc. grad.&
 $\mathcal{O}\Big(\frac{\log(1/\delta)}{\sqrt T}\Big)^{\$}$ &
  average stationarity &
  Clipped AdaGrad-Norm\\
Zhou et al. [2024]~\cite{zhou2024on} &
  $\times$ &
  $L$ & sub-Gaussian&
  $O(\sqrt{d/T}\log(1/\delta)+d/T)$ &
  average stationarity &
  AdaGrad\\  
Chezhegov et al. [2026]~\cite{chezhegov2025clipping} &
  $\checkmark$ &
  $L$ &
  \begin{tabular}[c]{@{}c@{}}coordinate wise heavy-tail \\ ($\alpha$-moment)\end{tabular} &
  $O\Big(\frac{d^{2/3}\log^{2/3}(T/\delta)}{\sqrt{T}}\Big)^{\$}$ &
  average stationarity&
  Clipped Delayed AdaGrad\\
\rowcolor{greyrow}
\textcolor{blue}{This Work} &
  $\checkmark$ &
  $\ell$ (sub-quad.) &
  bounded variance&
  $\mathcal{O}\left( \frac{d\big(\sqrt{\log T}+ \log(1/\delta)\big)}{\sqrt{T}} \right) $ &
  average stationarity &
  Clipped AdaGrad\\ \bottomrule
\end{tabular}%
}
\end{table}
\textbf{Adaptive Methods and Classical Smoothness.} Adaptive methods like AdaGrad~\cite{duchi2011adaptive} and Adam~\citep{kingma2015Adam} anchor stochastic optimization. Extensive literature analyzes variants (e.g., AMSGrad, Yogi, Padam, AdaBound) to improve convergence guarantees and practical behavior~\cite{reddi2019convergence, zaheer2018adaptive, chen2018closing, luo2019adaptive}. While their convergence theory traditionally assumes standard $L$-smoothness, studies establish sharp nonconvex guarantees for AdaGrad under relaxed assumptions~\cite{ward2020adagrad, wang2023convergence, hong2024revisiting}. Similarly, Adam's convergence has been refined under various stochastic conditions~\cite{reddi2019convergence, zou2019sufficient, defossez2020simple, hong2024convergence}.\\
% Adaptive methods like AdaGrad~\cite{duchi2011adaptive} and Adam~\citep{kingma2015Adam} form the backbone of stochastic optimization. A vast literature analyzes their variants (e.g., AMSGrad, Yogi, Padam, AdaBound) to improve convergence guarantees and practical behavior~\cite{reddi2019convergence, zaheer2018adaptive, chen2018closing, luo2019adaptive}. Convergence theory for these methods traditionally assumes standard $L$-smoothness. For AdaGrad, studies have established sharp nonconvex guarantees under relaxed assumptions~\cite{ward2020adagrad, wang2023convergence, hong2024revisiting}. Similarly, Adam's convergence has been extensively refined and corrected under various stochastic conditions~\cite{reddi2019convergence, zou2019sufficient, defossez2020simple, hong2024convergence}.
\textbf{Generalized Smoothness and Relaxed Curvature.}
Recent works replace global smoothness with generalized conditions where local smoothness scales with the gradient norm. Introduced via $(L_0,L_1)$-smoothness~\cite{zhang2019gradient}, this framework has been expanded to sub-quadratic generalized smoothness and variance-reduced methods~\cite{NEURIPS2023_7e8bb8d1, chen2023generalized}, as well as parameter-agnostic and mirror-descent algorithms~\cite{hubler2024parameter, vankov2025optimizing, yu2025mirror}. Under these relaxed conditions, AdaGrad's convergence has been analyzed~\cite{wang2023convergence}, with evidence that coordinate-wise adaptivity exploits directional curvature well~\cite{liu2024adagrad}. Adam has similarly been proven to effectively exploit non-uniform smoothness~\cite{li2023convergence, wang2024provable, wang2024convergence, crawshaw2022robustness, hong2024convergence}.\\
\textbf{Gradient Clipping and Heavy-Tailed Noise.}
Gradient clipping is widely recognized as a robust stabilizer against non-uniform curvature and rapidly growing objectives~\cite{zhang2019gradient, mai2021stability}, including in smooth convex optimization with heavy-tailed noise~\cite{gorbunov2020stochastic}. High-probability guarantees under heavy-tailed noise further demonstrate that combining clipping with momentum or normalization stabilizes trajectories~\cite{cutkosky2021high, hubler2024gradient, fatkhullin2025can, hubler2024parameter}. Most relevant to our work, \cite{chezhegov2025clipping} shows that clipping repairs the vulnerability of AdaGrad/Adam-Norm to heavy-tailed shocks, yielding polylogarithmic confidence dependence.

\section{New Upper Bounds}
\label{sec:upper}
The theoretical analysis in this section is therefore structured to prevent 
\emph{anisotropic miscalibration}, all while retaining the exact \emph{same-step} AdaGrad update. The point is not simply that clipping reduces variance. In the present generalized-smoothness regime, descent is local and path-dependent, so the adaptive denominator must
encode a \emph{usable} directional scale along the entire trajectory. Once gradients are clipped, the only
new difficulty in the original non-lagged AdaGrad update is the \emph{same-step coupling} between the
metric \(D_{t+1}^{-1}\) and the current clipped gradient \(\tilde g_t\) as mentioned in Section~\ref{sec:contributions}. To be precise, at iteration \(t\), predictability is understood with respect to the filtration
\(\mathcal F_t \coloneqq \sigma(g_0,\dots,g_t)\). Since \(D_t\) depends only on
\(\{\tilde g_0,\dots,\tilde g_{t-1}\}\), the matrix \(D_t^{-1}\) is \(\mathcal F_{t-1}\)-measurable.
In contrast, \(D_{t+1}^{-1}\) depends on the current clipped gradient \(\tilde g_t\) through
\(y_{t+1}=y_t+\tilde g_t^{\odot 2}\), and is therefore only \(\mathcal F_t\)-measurable. Hence \(D_{t+1}^{-1}\) is not predictable. 

\textbf{Clipped Noise:} We decompose the clipped gradient $\tilde g_t$ into a predictable mean and a martingale part. Define the clipping residual \(r_t\coloneqq\tilde g_t\!-\!g_t\) and the predictable mean \(\mu_t \coloneqq \mathbb{E}_{t-1}[\tilde g_t] = \nabla f(x_t) + \mathbb{E}_{t-1}[r_t]\). Then the clipped noise, \(\tilde \xi_t\!\coloneqq\!\tilde g_t\!-\!\mu_t\), satisfies \(\mathbb{E}_{t-1}[\tilde \xi_t]\!=\!0, \|\tilde \xi_t\| \le 2C\).

Furthermore, we show that this dependence can be exactly isolated as a deterministic correction, pathwise telescoped, and absorbed through a matched warm start combined with a cubic clipping calibration. This establishes clipping not as a generic robustness heuristic, but as a structural stabilizer of the adaptive geometry, consistent with its recent applications in high-probability optimization under heavy-tailed noise \citep{gorbunov2020stochastic,cutkosky2021high,chezhegov2025clipping}. Methodologically, our generalized-smoothness localization adopts the trajectory-control perspective of \cite{NEURIPS2023_7e8bb8d1}, while successfully navigating the same-step dependence that has notoriously complicated prior AdaGrad analyses \citep{wang2023convergence,faw2022power,li2020high}.

\begin{algorithm}[H]
\caption{Clipped AdaGrad}
\label{alg:clipped_adagrad}
\begin{algorithmic}[1]
\State \textbf{Input:} Initial $x_0 \in \mathbb{R}^d$, accumulator $y_0 = y_{\mathrm{init}}\mathbf{1}$, stepsize $h > 0$, parameter $\epsilon_0 > 0$, clipping threshold $C > 0$.
\For{$t = 0, 1, \dots, T-1$}
    \State $g_t = \nabla f(x_t) + \xi_t$
    \State $\tilde{g}_t = \clip(g_t;C)$
    \State $y_{t+1} \coloneqq y_t + \tilde{g}_t^{\odot 2}$
    \State $D_{t+1} \coloneqq \operatorname{diag}(\sqrt{y_{t+1}} + \epsilon_0)$
    \State $x_{t+1} \coloneqq x_t - h D_{t+1}^{-1} \tilde{g}_t$
\EndFor
\end{algorithmic}
\end{algorithm}

\subsection{Pathwise Localization}
% As in the generalized-smoothness framework, we first localize the trajectory to a level set on which the
% objective admits an effective quadratic upper model from Lemma~\ref{lem:gen-smooth}. We fix a tolerance level $F$ satisfying $F \geq f(x_0) - f^*$. We further define a predictable stopping time for the horizon $T\geq 1$ as follows
Following the generalized-smoothness framework, we first localize the trajectory to a level set where the objective admits an effective quadratic upper model (Lemma~\ref{lem:gen-smooth}). Fixing a tolerance $F \geq f(x_0) - f^*$, we define a predictable stopping time for horizon $T \geq 1$ as follows:
\[
\tau \coloneqq \min\{t \le T : f(x_t)-f^\star > F\}\wedge T.
\]
We define the localized second-moment bound $V \coloneqq (G)^2 + \sigma^2$. For $t<\tau$, Lemma~\ref{lem:levelset} gives $\|\nabla f(x_t)\|\le G$. Thus, the generalized-smoothness descent lemma (Lemma~\ref{lem:gen-smooth}) applies with local smoothness constant $L=\ell(2G)$ as long as the step remains within the local radius $G/L$. Clipping enforces $\|\tilde g_t\|\le C$, and the warm start guarantees $D_{t+1}\!\succeq\! \underline{D}I$, where $\underline D\!\coloneqq\!\sqrt{y_{\mathrm{init}}}\!+\!\epsilon_0$. Consequently, for every step taken by Algorithm \ref{alg:clipped_adagrad} before the trajectory leaves the level set, the following holds:
% We additionally define the localized second-moment bound $V \coloneqq (G)^2 + \sigma^2$. For every $t<\tau$, Lemma~\ref{lem:levelset} gives $\|\nabla f(x_t)\|\le G$. Therefore, the generalized-smoothness descent lemma (Lemma~\ref{lem:gen-smooth}) applies with local smoothness constant $L=\ell(2G)$ as long as the step remains within the local radius $G/L$. Clipping enforces the bound $\|\tilde g_t\|\le C$, and the warm start guarantees that $D_{t+1}\succeq \underline D I$, where $\underline D\coloneqq\sqrt{y_{\mathrm{init}}}+\epsilon_0$. Consequently, for every step taken by Algorithm \ref{alg:clipped_adagrad} before the trajectory leaves the level set, the following holds:

\begin{equation}
\label{eq:xx}
\|x_{t+1}-x_t\| = h\|D_{t+1}^{-1}\tilde g_t\| \le h\|D_{t+1}^{-1}\|\,\|\tilde g_t\| \le \frac{h}{\underline D}C \le \frac{G}{L}.
\end{equation}
% Since clipping enforces \(\|\tilde g_t\|\le C\) and the warm start gives
% \(D_{t+1}\succeq \underline D I\), where \(\underline D\coloneqq\sqrt{y_{\mathrm{init}}}+\epsilon_0\), it follows that, for every step taken by Algorithm~\ref{alg:clipped_adagrad}, before the trajectory leaves the level set,
% \begin{equation}
% \label{eq:xx}
% \|x_{t+1}-x_t\|
% =
% h\|D_{t+1}^{-1}\tilde g_t\|
% \le
% h\|D_{t+1}^{-1}\|\,\|\tilde g_t\|
% \le
% \frac{h}{\underline D}C
% \le
% \frac{G}{L}.
% \end{equation}
Hence, the condition for the updates of Algorithm~\ref{alg:clipped_adagrad} to stay inside the  generalized smoothness neighborhood is
\begin{equation}
\label{eq:neigh}
\boxed{\quad \frac{h}{\underline D}C \le \frac{G}{L}\quad} \quad \text{for stepsize $h >0$.}
\end{equation}

We now turn to the \emph{same-step dependence}. We begin by bounding the one-step progress using the local quadratic model. Substituting the update rule $x_{t+1} - x_t = -h D_{t+1}^{-1} \tilde{g}_t$ into the local descent lemma (Lemma~\ref{lem:gen-smooth}), we obtain
\begin{equation}
    \label{eq:descent}
    f(x_{t+1}) - f(x_t) \le -h \langle \nabla f(x_t), D_{t+1}^{-1} \tilde{g}_t \rangle + \frac{L}{2} h^2 \| D_{t+1}^{-1} \tilde{g}_t \|^2.
\end{equation}

The core difficulty lies in the inner product $-\langle \nabla f(x_t), D_{t+1}^{-1} \tilde{g}_t \rangle$, where the metric $D_{t+1}^{-1}$ is correlated with $\tilde{g}_t$. To break this dependence, we add and subtract the prior metric $D_t^{-1}$:
\[
    -\langle \nabla f(x_t), D_{t+1}^{-1} \tilde{g}_t \rangle = -\langle \nabla f(x_t), D_t^{-1} \tilde{g}_t \rangle + \textcolor{cSame}{\underbrace{\langle \nabla f(x_t), (D_t^{-1} - D_{t+1}^{-1}) \tilde{g}_t \rangle}_{\coloneqq E_t}},
\]
where \textcolor{cSame}{$E_t$} isolates the exact algebraic penalty incurred by the same-step correction. To cleanly handle the predictable inner product $\langle \nabla f(x_t), D_t^{-1} \tilde{g}_t \rangle$ above, we decompose the clipped gradient into its conditional mean and martingale noise as $\tilde{g}_t = \mu_t + \tilde{\xi}_t$, with $\mu_t \coloneqq \nabla f(x_t) + \mathbb{E}_{t-1}[r_t]$. This allows us to define the predictable preconditioned progress \textcolor{cMain}{$A_t \coloneqq \langle \nabla f(x_t), D_t^{-1} \nabla f(x_t) \rangle$} and the stopped martingale term \textcolor{cMart}{$M_t \coloneqq \langle D_t^{-1} \nabla f(x_t), \tilde{\xi}_t \rangle$}. Substituting these extracted components back into the descent inequality (\ref{eq:descent}) establishes an one-step progress bound, formalized below.

\begin{lemma}[One-step descent with same-step correction]
\label{lem:onestep}
Assume (\ref{eq:neigh}), for every \(t<\tau\),
\[
f(x_{t+1}) - f(x_t)
\le
-h\textcolor{cMain}{A_t}
-h\textcolor{cBias}{\langle D_t^{-1}\nabla f(x_t), \mathbb{E}_{t-1}[r_t]\rangle}
-h\textcolor{cMart}{M_t}
+h\textcolor{cSame}{E_t}
+\frac{L}{2}h^2\textcolor{cRem}{\|D_{t+1}^{-1}\tilde g_t\|^2}.
\]
\end{lemma}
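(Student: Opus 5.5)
The plan is to derive the bound directly from the local quadratic upper model and then split the inner product algebraically; no probabilistic estimates are needed at this stage.

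\textbf{Step 1 (localization).} Fix $t<\tau$. By the definition of $\tau$, we have $f(x_t)-f^\star\le F$. Lemma~\ref{lem:levelset} therefore gives $\|\nabla f(x_t)\|\le G$, and Lemma~\ref{lem:gen-smooth} provides the ball $\mathcal B(x_t,G/L)\subseteq\mathcal X$ on which (\ref{eq:lips}) holds with $L=\ell(2G)$. To apply (\ref{eq:lips}) with $x=x_2=x_t$ and $x_1=x_{t+1}$, we need $x_{t+1}\in\mathcal B(x_t,G/L)$. This follows from (\ref{eq:xx}). Clipping gives $\|\tilde g_t\|\le C$. Since $y_{t+1}\ge y_{\mathrm{init}}\one$ coordinatewise (the accumulator only increases), each diagonal entry of $D_{t+1}$ is at least $\underline D$, so $\|D_{t+1}^{-1}\|\le 1/\underline D$. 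Hence $\|x_{t+1}-x_t\|\le hC/\underline D\le G/L$ by (\ref{eq:neigh}). Substituting $x_{t+1}-x_t=-hD_{t+1}^{-1}\tilde g_t$ into the quadratic upper bound yields (\ref{eq:descent}).

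\textbf{Step 2 (algebraic decomposition).} Write
\[
-\langle\nabla f(x_t),D_{t+1}^{-1}\tilde g_t\rangle=-\langle\nabla f(x_t),D_t^{-1}\tilde g_t\rangle+E_t .
\]
Then insert $\tilde g_t=\nabla f(x_t)+\mathbb E_{t-1}[r_t]+\tilde\xi_t$, which holds by the definitions of $\mu_t$ and $\tilde\xi_t$. Because $D_t^{-1}$ is diagonal and hence symmetric, we have $\langle\nabla f(x_t),D_t^{-1}v\rangle=\langle D_t^{-1}\nabla f(x_t),v\rangle$ for any $v$. The term $-\langle\nabla f(x_t),D_t^{-1}\tilde g_t\rangle$ therefore splits as
\[
-A_t-\langle D_t^{-1}\nabla f(x_t),\mathbb E_{t-1}[r_t]\rangle-M_t .
\]
Multiplying by $h$ and adding the quadratic remainder $\frac L2h^2\|D_{t+1}^{-1}\tilde g_t\|^2$ gives exactly the claimed inequality. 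This step is an identity, so no inequality is lost.

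\textbf{Main obstacle.} The only delicate point is Step 1. The localization must be justified for every $t<\tau$, and the bound $D_{t+1}\succeq\underline D I$ must hold even though $D_{t+1}$ depends on the current sample. Both are deterministic consequences of the clipping and the warm start, so the obstacle is bookkeeping rather than substance. One should also note that $\mathbb E_{t-1}[r_t]$ is well defined because $\|r_t\|\le\|g_t\|$ and $g_t$ has finite conditional second moment by Assumption~\ref{ass:noise}.
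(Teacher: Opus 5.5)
Your proposal is correct and follows essentially the same route as the paper. It localizes via $t<\tau$, Lemma~\ref{lem:levelset} and the step bound (\ref{eq:xx}) to apply the quadratic upper model, then adds and subtracts $D_t^{-1}$ to isolate $E_t$ and splits $\tilde g_t=\nabla f(x_t)+\mathbb{E}_{t-1}[r_t]+\tilde\xi_t$. Your extra remarks (the deterministic bound $D_{t+1}\succeq\underline D I$, symmetry of the diagonal $D_t^{-1}$, and well-definedness of $\mathbb{E}_{t-1}[r_t]$) only make explicit details the paper leaves implicit.
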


\begin{proof}
    Proof is deferred to Appendix~\ref{sec:l3}.
\end{proof}
The point of Lemma~\ref{lem:onestep} is that the same-step issue has now been reduced to a single
deterministic term \(\textcolor{cSame}{E_t}\). We now move to bound $\textcolor{cSame}{E_t}$.

\begin{lemma}[Deterministic bound on the cumulative same-step correction]
\label{lem:samestep}
For every sample path, $\sum_{t<\tau} |\textcolor{cSame}{E_t}| \le \frac{dGC}{\underline D}.
$ Consequently,$\sum_{t<\tau} \textcolor{cSame}{E_t} \le \frac{dGC}{\underline D}.
$
\end{lemma}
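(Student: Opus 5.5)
We bound $|E_t|$ coordinatewise and then telescope the decrease of each diagonal entry of $D_t^{-1}$. Since $D_t$ and $D_{t+1}$ are diagonal, we can write
\[
E_t=\sum_{i=1}^d \partial_i f(x_t)\,\bigl((D_t^{-1})_{ii}-(D_{t+1}^{-1})_{ii}\bigr)\,(\tilde g_t)_i .
\]
The accumulator $y_t$ is coordinatewise nondecreasing, so each difference $(D_t^{-1})_{ii}-(D_{t+1}^{-1})_{ii}$ is nonnegative. For $t<\tau$, Lemma~\ref{lem:levelset} gives $|\partial_i f(x_t)|\le\|\nabla f(x_t)\|\le G$, and clipping gives $|(\tilde g_t)_i|\le\|\tilde g_t\|\le C$. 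Hence
\[
|E_t|\le GC\sum_{i=1}^d\bigl((D_t^{-1})_{ii}-(D_{t+1}^{-1})_{ii}\bigr).
\]
No conditional expectation enters, so the bound holds pathwise.

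Next we sum over $t<\tau$ and telescope in each coordinate:
\[
\sum_{t<\tau}\bigl((D_t^{-1})_{ii}-(D_{t+1}^{-1})_{ii}\bigr)=(D_0^{-1})_{ii}-(D_\tau^{-1})_{ii}\le (D_0^{-1})_{ii}.
\]
With the warm start, $(D_0)_{ii}=\sqrt{y_{\mathrm{init}}}+\epsilon_0=\underline D$, so $(D_0^{-1})_{ii}=1/\underline D$. The sum of $|E_t|$ is therefore at most $dGC/\underline D$. The signed bound $\sum_{t<\tau}E_t\le\sum_{t<\tau}|E_t|$ then follows immediately.

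The one delicate point is the convention for $D_0$. If one defines it as $\mathrm{diag}(\sqrt{y_0}+\epsilon_0)$, it equals $\underline D I$ and the argument is exact. The only other ingredients are that $\tau$ is a stopping time, so every $t<\tau$ satisfies $f(x_t)-f^\star\le F$ and hence the gradient bound, and that dropping $-(D_\tau^{-1})_{ii}\le 0$ is legitimate. A slightly sharper bound $\sqrt d\,GC/\underline D$ is available via Cauchy–Schwarz, but the stated $d$-dependence is all that is claimed.
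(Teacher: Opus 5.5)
Your proof is correct and follows the same argument as the paper. Both bound $|E_t|$ coordinatewise using $|\partial_i f(x_t)|\le G$, $|\tilde g_{t,i}|\le C$ and the monotonicity of $D_{t,i}^{-1}$, then telescope each coordinate with $D_{0,i}=\underline D$ and drop $D_{\tau,i}^{-1}\ge 0$.

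Your closing aside, that Cauchy--Schwarz yields a sharper $\sqrt d\,GC/\underline D$ bound, is not justified and should be dropped. Cauchy--Schwarz only gives $|E_t|\le GC\max_i\bigl(D_{t,i}^{-1}-D_{t+1,i}^{-1}\bigr)$. If each of $d$ steps places the clipped gradient (and the gradient) on a fresh coordinate with $C\gg\underline D$, these per-step maxima sum to nearly $d/\underline D$, and the $E_t$ themselves add up to order $d$, not $\sqrt d$.
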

\begin{proof}[Proof Sketch]
    Since $y_{t+1,i}\ge y_{t,i}$, we have $D_{t+1,i}^{-1}\le D_{t,i}^{-1}$. Hence, using
$|\nabla_i f(x_t)|\le G$ and $|\tilde g_{t,i}|\le C$, $|E_t|
\le
GC\sum_{i=1}^d \left(D_{t,i}^{-1}-D_{t+1,i}^{-1}\right).
$ Summing over $\!t<\!\tau$ telescopes coordinatewise:
$\sum_{t<\tau}\!|E_t|
\!\le\!
GC\sum_{i=1}^d \!\left(D_{0,i}^{-1}\!-\!D_{\tau,i}^{-1}\!\right)
\!\le\!
GC\sum_{i=1}^d\!D_{0,i}^{-1}
\!=\!
\frac{dGC}{\underline D}.
$ Full proof is deferred to Appendix~\ref{sec:l4}.  
\end{proof}
\begin{remark}
    Lemma~\ref{lem:samestep} is the structural reason the original same-step update remains analyzable.
Once the term \(\textcolor{cSame}{E_t}\) is isolated, the non-predictable part of the descent inequality no longer appears
as an uncontrolled stochastic contribution. It is converted into a deterministic telescoping correction
that can be bounded pathwise, enabling a high-probability
analysis of the original same-step AdaGrad.
\end{remark}
At this stage, the \emph{same-step} dependence has been fully isolated using Lemma~\ref{lem:samestep}. The remaining question is \emph{under what parameter scales} the resulting error terms of Lemma~\ref{lem:onestep} can be made compatible with a finite-horizon Euclidean target? This is where calibration enters. We choose the stepsize $(h)$, clipping threshold $(C)$, and warm start so the \textcolor{cSame}{same-step correction} and \textcolor{cBias}{clipping residual} can be absorbed into the high-probability preconditioned descent bound, while converting the \textcolor{cMain}{preconditioned estimate} to a Euclidean bound still yields a genuine finite-horizon target.
% At this stage, the \emph{same-step} dependence has been fully isolated using Lemma~\ref{lem:samestep}. Now, the remaining question is \emph{under what parameter scales}
% the resulting error terms of Lemma~\ref{lem:onestep} can be made simultaneously compatible with a finite-horizon Euclidean target?
% This is exactly where the calibration enters. We now choose the stepsize $(h)$, clipping threshold $(C)$, and warm start so that the \textcolor{cSame}{same-step correction} and \textcolor{cBias}{clipping residual} can be absorbed in the high-probability preconditioned descent bound, while the later conversion from the \textcolor{cMain}{preconditioned estimate} to a Euclidean bound still yields a genuine finite-horizon target.

\subsection{Calibrating Parameters}
\label{sec:param}
To explicitly motivate our hyperparameter choices, we first establish a preconditioned descent estimate up to the stopping time $\tau$.

\begin{lemma}[Summed descent inequality up to \(\tau\)]
\label{lem:summed_descent}
Assume the neighborhood condition (\ref{eq:neigh}) holds. Then, for every $\delta \in (0,1)$, with probability at least $1-\delta$, we have
\begin{equation}
\label{eq:3}
f(x_\tau)-f^\star + \frac{h}{2}\sum_{t<\tau} \textcolor{cMain}{A_t}
\le
\Delta_0
+ \underbrace{\textcolor{cBias}{\frac{hT V^2}{\underline D C^2}}}_{\mathclap{\substack{\text{clipping-residual}\\\text{contribution}}}}
+ \quad 
\underbrace{\textcolor{cRem}{\frac{L}{2} h^2 H_T}}_{\mathclap{\substack{\text{generalized}\\\text{smoothness remainder}}}}
+ \quad 
\underbrace{h \textcolor{cSame}{\frac{dGC}{\underline D}}}_{\mathclap{\substack{\text{same-step}\\ \text{correction}}}}
+
h\textcolor{cMart}{\Gamma\log(1/\delta)},
\end{equation}
where $H_T = d\log\left(1+\frac{TC^2}{y_{\mathrm{init}} + \varepsilon_0^2}\right) + d\frac{C^2}{y_{\mathrm{init}} + \varepsilon_0^2}$ and $\Gamma = \frac{2V+ \frac{2}{3}CG}{\underline D}$.
\end{lemma}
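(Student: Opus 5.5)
We sum Lemma~\ref{lem:onestep} over $t<\tau$ and bound each coloured term separately. All terms except the martingale one will be bounded deterministically; the martingale term is handled by a Freedman-type inequality that yields $h\Gamma\log(1/\delta)$ and leaves a residual of $\frac h2\sum A_t$ to be absorbed on the left-hand side.

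Telescoping $f(x_{t+1})-f(x_t)$ gives
\[
f(x_\tau)-f(x_0)\le -h\sum_{t<\tau}A_t - h\sum_{t<\tau}\langle D_t^{-1}\nabla f(x_t),\EE_{t-1}[r_t]\rangle - h\sum_{t<\tau}M_t + h\sum_{t<\tau}E_t + \frac L2h^2\sum_{t<\tau}\|D_{t+1}^{-1}\tilde g_t\|^2,
\]
with $f(x_0)-f^\star\le\Delta_0$. The same-step sum is at most $dGC/\underline D$ by Lemma~\ref{lem:samestep}.

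\textbf{Quadratic remainder.} Coordinatewise, $(D_{t+1})_{ii}^2\ge y_{t+1,i}+\epsilon_0^2$. Hence
\[
\|D_{t+1}^{-1}\tilde g_t\|^2\le\sum_i \frac{\tilde g_{t,i}^2}{y_{\mathrm{init}}+\epsilon_0^2+\sum_{s\le t}\tilde g_{s,i}^2}.
\]
The standard inequality $\sum_t a_t/(b+\sum_{s\le t}a_s)\le \log(1+\sum a_t/b)$ with $b=y_{\mathrm{init}}+\epsilon_0^2$ and $a_t\le C^2$ gives $d\log(1+TC^2/b)$. The extra term $dC^2/b$ in $H_T$ accommodates the variant that compares with the lagged denominator $b+\sum_{s<t}a_s$, which costs one term of size $\le C^2/b$ per coordinate. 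So the remainder is at most $\frac L2h^2H_T$.

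\textbf{Clipping bias.} Since $r_t\ne0$ only when $\|g_t\|>C$, and $\|r_t\|\le\|g_t\|$, we have
\[
\|\EE_{t-1}r_t\|\le \EE_{t-1}\bigl[\|g_t\|\mathbf 1\{\|g_t\|>C\}\bigr]\le \EE_{t-1}\|g_t\|^2/C\le V/C.
\]
Note that $\EE_{t-1}\|g_t\|^2\le \|\nabla f(x_t)\|^2+\sigma^2\le V$ for $t<\tau$, because $\|\nabla f(x_t)\|\le G$ there by Lemma~\ref{lem:levelset}. Together with $\|D_t^{-1}\nabla f(x_t)\|\le G/\underline D\le \sqrt V/\underline D$, this gives a per-step bias of order $V^{3/2}/(\underline DC)$. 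To match the stated $hTV^2/(\underline DC^2)$ I would instead use Young's inequality:
\[
|\langle D_t^{-1}\nabla f,\EE r\rangle|\le \tfrac14A_t+\|\EE r\|^2/\underline D,
\]
and bound $\|\EE r\|^2\le V^2/C^2$ via Cauchy–Schwarz plus Markov, i.e.\ $\EE[\|g\|\mathbf 1]^2\le \EE\|g\|^2\,\PP(\|g\|>C)\le V^2/C^2$. This spends $\frac14 hA_t$ and yields exactly $hTV^2/(\underline DC^2)$, with $\tau\le T$.

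\textbf{Martingale term (main obstacle).} Set $Z_t=-M_t\mathbf 1\{t<\tau\}$. Because $\tau$ is predictable ($\{t<\tau\}\in\calF_{t-1}$ since $x_t$ is $\calF_{t-1}$-measurable) and $D_t$ is $\calF_{t-1}$-measurable, $Z_t$ is a martingale difference sequence. It satisfies $|Z_t|\le 2CG/\underline D$ and
\[
\EE_{t-1}Z_t^2\le \|D_t^{-1}\nabla f\|^2\,\EE_{t-1}\|\tilde\xi_t\|^2\le \frac{A_t}{\underline D}\cdot V,
\]
using $\|D_t^{-1}\nabla f\|^2\le A_t/\underline D$ and $\EE\|\tilde\xi\|^2\le\EE\|\tilde g\|^2\le V$. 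I would apply the exponential supermartingale / Freedman bound in the form: with probability $\ge1-\delta$, for $\lambda$ satisfying $\lambda b\le 3$,
\[
\sum Z_t\le \frac{\lambda}{2(1-\lambda b/3)}\sum\EE_{t-1}Z_t^2+\frac{\log(1/\delta)}{\lambda}.
\]
Choosing $\lambda$ so that the variance term is at most $\frac14\sum A_t$ requires $\lambda\asymp \underline D/(V+CG/3)$, which gives $\log(1/\delta)/\lambda\le\Gamma\log(1/\delta)$ with $\Gamma=(2V+\frac23CG)/\underline D$. The delicate part is the constant bookkeeping so that the bias and martingale terms together consume exactly $\frac h2\sum A_t$. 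Collecting everything and moving $\frac h2\sum A_t$ to the left-hand side gives \eqref{eq:3}.
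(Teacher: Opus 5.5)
Your proposal is correct and follows the paper's proof essentially step by step. You sum Lemma~\ref{lem:onestep}, use Young's inequality on the clipping bias at a cost of $\frac14 A_t$, invoke the telescoping bound for $E_t$, and apply the Bernstein--Freedman supermartingale bound. For the last step the exact choice is $\lambda=\underline D/(2V+\frac23 CG)$, which makes the variance coefficient exactly $\frac14$ and gives $1/\lambda=\Gamma$. The one minor deviation is the quadratic remainder: you bound it directly with the non-lagged denominator $s_{t+1}$, which yields $d\log(1+TC^2/s_0)$ without the extra $dC^2/s_0$ term, whereas the paper goes through the lagged harmonic lemma and implicitly uses $D_{t+1}\succeq D_t$.
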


\begin{proof}[Proof sketch]
We sum Lemma~\ref{lem:onestep} over all \(t<\tau\). Besides the main progress term \(\sum_{t<\tau}A_t\), this produces
four error contributions: the \textcolor{cBias}{clipping-residual} term, the \textcolor{cSame}{same-step correction} term, the quadratic
\textcolor{cRem}{generalized-smoothness remainder}, and the \textcolor{cMart}{martingale fluctuation} term. The \textcolor{cBias}{clipping-residual} term is bounded by using Young's inequality, $\big(ab\leq \frac{1}{4}a^2 + b^2\big)$ with $a = \sqrt{A_t}$ and $b = V/c\sqrt{\underlineD}$, which then absorbs one
quarter of the resulting term into \(\sum_{t<\tau}A_t\), leaving a deterministic remainder of order
\textcolor{cBias}{\(TV^2/(\underline D C^2)\)}. The cumulative \textcolor{cSame}{same-step term} is controlled pathwise by Lemma~\ref{lem:samestep}, giving a
deterministic contribution of order \textcolor{cSame}{\(dGC/\underline D\)}. The quadratic \textcolor{cRem}{generalized-smoothness
remainder} is bounded by the pathwise estimate on \(\sum_{t<\tau}\|D_{t+1}^{-1}\tilde g_t\|^2\), producing
the term \textcolor{cRem}{\(\frac{L}{2}h^2H_T\)}. Finally, the \textcolor{cMart}{martingale term} is controlled later with high probability by
invoking a Bernstein--Freedman supermartingale bound with probability atleast $1-\delta$, which turns it into a contribution of order
\textcolor{cMart}{\(\frac14\sum_{t<\tau}A_t+\frac{2V + \frac{2}{3}CG}{\underline D}\log(1/\delta)\)}. Substituting these bounds into the summed form
of Lemma~\ref{lem:onestep} and rearranging gives the stated inequality in (\ref{eq:3}). Full proof is deferred to Appendix~\ref{sec:l5}.
\end{proof}

After this preconditioned estimate is obtained, converting it into
a Euclidean stationarity bound introduces the high-probability upper bound on $D^{-1}_t$.

\begin{lemma}[High-probability bound on $D_t$]
\label{lem:HPD}
Define the stopped clipped-gradient energy $
\widetilde Y_T\coloneqq\sum_{t=0}^{T-1}\|\tilde g_t\|^2\one_{\{t<\tau\}}.$ Then for every $\delta\in(0,1)$, with probability at least $1-\delta$,
\begin{small}
\[
\widetilde Y_T
\le
B_{T,\delta}\coloneqq
TV+\sqrt{2TC^2V\log\frac{1}{\delta}}+\frac{2C^2}{3}\log\frac{1}{\delta}.
\]
\end{small}
Consequently, on $\{\tau=T\}\cap\{\widetilde Y_T\le B_{T,\delta}\}$, we have, $\max_{t\le T}\|D_t\|
\le
\overlineD_{\delta}\coloneqq\sqrt{\yinit+B_{T,\delta}}+\epsz.
$
\end{lemma}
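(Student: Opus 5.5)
The plan is to apply Freedman's (Bernstein-type) inequality for bounded martingale differences to the centered stopped energies. Set
\[
Z_t \coloneqq \bigl(\|\tilde g_t\|^2 - \mathbb E_{t-1}[\|\tilde g_t\|^2]\bigr)\one_{\{t<\tau\}} .
\]
The event $\{t<\tau\}$ is determined by $x_0,\dots,x_t$, and these are $\mathcal F_{t-1}$-measurable. So the indicator is predictable and $\mathbb E_{t-1}[Z_t]=0$. We then write
\[
\widetilde Y_T=\sum_{t<T}\one_{\{t<\tau\}}\mathbb E_{t-1}\|\tilde g_t\|^2+\sum_{t<T}Z_t .
\]

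\textbf{Step 1 (the predictable part).} Clipping is a radial contraction, so $\|\tilde g_t\|\le \|g_t\|$. On $\{t<\tau\}$, Lemma~\ref{lem:levelset} gives $\|\nabla f(x_t)\|\le G$. Combined with Assumption~\ref{ass:noise} ($\mathbb E_{t-1}\xi_t=0$), this yields
\[
\mathbb E_{t-1}\|\tilde g_t\|^2\le \mathbb E_{t-1}\|g_t\|^2=\|\nabla f(x_t)\|^2+\mathbb E_{t-1}\|\xi_t\|^2\le G^2+\sigma^2=V .
\]
Summing over $t<T$ makes the predictable part at most $TV$.

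\textbf{Step 2 (increment and variance bounds).} Since $0\le \|\tilde g_t\|^2\le C^2$, the centered increment satisfies $|Z_t|\le C^2$. For the conditional variance,
\[
\mathbb E_{t-1}Z_t^2\le \one_{\{t<\tau\}}\mathbb E_{t-1}\|\tilde g_t\|^4\le C^2\,\one_{\{t<\tau\}}\mathbb E_{t-1}\|\tilde g_t\|^2\le C^2V .
\]
Hence the predictable quadratic variation is at most $TC^2V$, and this bound is deterministic.

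\textbf{Step 3 (Freedman).} With a deterministic variance proxy $\sigma_T^2= TC^2V$ and increment bound $C^2$, Freedman's inequality gives
\[
\Pr\Bigl(\textstyle\sum_t Z_t\ge \sqrt{2\sigma_T^2\log(1/\delta)}+\tfrac{2C^2}{3}\log(1/\delta)\Bigr)\le\delta .
\]
This is the standard inversion of $\exp\bigl(-\frac{u^2}{2(\sigma_T^2+C^2u/3)}\bigr)$. The $\tfrac23$ constant needs some care: solving the quadratic in $u$ and bounding the root by $\sqrt{2\sigma_T^2\ell}+\tfrac{2}{3}C^2\ell$ with $\ell=\log(1/\delta)$ gives exactly the stated $B_{T,\delta}$. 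I expect this constant bookkeeping to be the only delicate point. One subtlety is that the upper bound $\|Z_t\|\le C^2$ is needed only one-sided, since $Z_t\le C^2$.

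\textbf{Consequence for $D_t$.} On $\{\tau=T\}$ all indicators equal one, so for every $t\le T$ and every coordinate $i$,
\[
y_{t,i}=\yinit+\sum_{s<t}\tilde g_{s,i}^2\le \yinit+\widetilde Y_T\le \yinit+B_{T,\delta}.
\]
Since $D_t$ is diagonal, $\|D_t\|=\max_i(\sqrt{y_{t,i}}+\epsz)\le \overlineD_\delta$, which is the claimed bound.
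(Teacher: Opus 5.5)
Your proposal is correct and follows the paper's proof essentially step for step. You use the same centered stopped energies (your $Z_t$ coincides with the paper's $X_t-\mathbb{E}_{t-1}[X_t]$ because the indicator is predictable), the same bounds $\mathbb{E}_{t-1}[X_t]\le V$, $|Z_t|\le C^2$ and $\mathbb{E}_{t-1}Z_t^2\le C^2V$, Freedman's inequality with the deterministic variance proxy $TC^2V$, and the same coordinatewise monotonicity argument on $\{\tau=T\}$. Your explicit inversion of the Freedman tail, which checks the $\tfrac{2}{3}$ constant, is a detail the paper leaves implicit.
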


\begin{proof}
    Proof is deferred to Appendix~\ref{sec:hpbbb}.
\end{proof}
\begin{remark}
    From Lemma~\ref{lem:HPD}, \(D_t \preceq \overline D_\delta I\) for all \(t\le T\), so \(D_t^{-1} \succeq \overline D_\delta^{-1} I\). Hence \(A_t=\langle \nabla f(x_t),D_t^{-1}\nabla f(x_t)\rangle \ge \overline D_\delta^{-1}\|\nabla f(x_t)\|^2\), and therefore \(\sum_{t<\tau}\|\nabla f(x_t)\|^2 \le \overline D_\delta \sum_{t<\tau} A_t\). Combining this with the preconditioned bound from Lemma~\ref{lem:summed_descent} and dividing by \(T\) yields the Euclidean conversion factor \(\overline D_\delta/(hT)\).
\end{remark}

Lemma~\ref{lem:summed_descent} shows that, before we can prove a Euclidean guarantee, we must control all
four error terms. Among all the terms in the summed descent bound, the \textcolor{cBias}{clipping-residual contribution}, the \textcolor{cSame}{same-step correction}, and the later preconditioned-to-Euclidean conversion (Lemma~\ref{lem:HPD}) are the ones that determine the joint choice of \(h\), \(C\), and the warm-start floor \(\underline D\). Indeed, before coupling \(\underline D\) to \(C\), these two deterministic terms pull \(C\) in opposite directions: \textcolor{cBias}{\(\frac{hTV^2}{\underline D C^2}\)} decreases as \(C\) grows, whereas \textcolor{cSame}{\(\frac{hdGC}{\underline D}\)} increases with \(C\). \textbf{This is the core calibration conflict}. If \(\underline D\) were treated as independent of \(C\), then balancing these two terms with \(h\) essentially fixed would force \(C^3 \asymp T\), and their common scale would still grow like \(\frac{hT^{1/3}}{\underline D}\), hence unbounded. The purpose of the matched warm start is therefore to remove this harmful \(C\)-dependence from the \textcolor{cSame}{same-step term}. By contrast, the \textcolor{cRem}{quadratic generalized-smoothness remainder} and the \textcolor{cMart}{martingale fluctuation} are still
present in the final bound, but they do not drive the main \(h\)–\(C\)–\(\underline D\) coupling. 

This is why the calibration is chosen to keep the following three
quantities at the right scale:
\begin{small}
\[
\underbrace{\textcolor{cBias}{\frac{hT V^2}{\underline D C^2}}}_{\text{clipping-residual contribution}},
\qquad
\underbrace{\textcolor{cSame}{\frac{hdGC}{\underline D}}}_{\text{same-step correction}},
\qquad
\underbrace{\frac{\overline D_\delta}{hT}}_{\text{preconditioned-to-Euclidean conversion}}.
\]
\end{small}

First, we tie the warm-start floor to the clipping scale by setting $y_{\mathrm{init}} = C^2,$ $\epsilon_0 = C.$ Then $\underline D = \sqrt{y_{\mathrm{init}}}+\epsilon_0 = 2C,$
so the \textcolor{cSame}{same-step correction} penalty collapses from \textcolor{cSame}{\(\frac{hdGC}{\underline D}\)} to $\textcolor{cSame}{\frac{h\,dG}{2}}$, leaving \(C\) free to control only the \textcolor{cBias}{clipping-residual contribution}.
% which is now independent of the clipping threshold. This is the key matched cancellation.

Second, once \(\underline D=2C\), the \textcolor{cBias}{clipping-residual} contribution becomes \(\textcolor{cBias}{hTV^2/C^3}\). Since the feasibility condition is imposed at the fixed anchored level \(F\), the \textcolor{cBias}{clipping-residual term \(\frac{hTV^2}{2C^3}\)} must remain \(\mathcal{O}(1)\). This requires \(\frac{hT}{C^3}=\mathcal{O}(1)\), which leads to the natural cubic scale \(C^3 \asymp hT\). The remaining term, \(\overline D_\delta/(hT)\), then measures the cost of converting the resulting \textcolor{cMain}{preconditioned estimate} into a Euclidean bound. To obtain a true finite-horizon Euclidean target, we keep \(h\) almost constant up to
polylogarithmic factors and choose \(C\) on the cubic scale by setting, for parameters
\(\eta,\kappa>0\):
\begin{equation}
\label{eq:calib}
\boxed{h = \frac{\eta}{\sqrt{\Lambda_{T,\delta}}},
\qquad
C = \kappa T^{1/3},
\qquad
\Lambda_{T,\delta} \coloneqq 1+\log(1+T)+\log^2\!\frac{2}{\delta}.}
\end{equation}
This is the logic behind the calibration. It is not an ad hoc theorem-only schedule. It is the scale at
which the \textcolor{cSame}{same-step correction} stops growing with \(C\), the \textcolor{cBias}{clipping-residual} contribution stops growing
with \(T\), and the Euclidean conversion still yields an \(\widetilde{\mathcal O}(T^{-1/2})\) bound. 

\emph{We can now state the main finite-horizon theorem.}

\begin{theorem}[True finite-horizon Euclidean stationarity for clipped same-step AdaGrad]
\label{thm:main}
Suppose Assumptions~\ref{ass:1},~\ref{ass:gen}, and~\ref{ass:noise} holds. Let
\[
\Delta_0 \coloneqq f(x_0)-f^\star > 0,
\quad
F^\circ \coloneqq 2\Delta_0,
\quad
G^\circ \coloneqq G(F^\circ),
\quad
L^\circ \coloneqq \ell(2G^\circ),
\quad
V^\circ \coloneqq (G^\circ)^2+\sigma^2.
\]
Fix \(\kappa>0\), choose \((h,C)\) from~\eqref{eq:calib}, and set $y_{\mathrm{init}} \coloneqq C^2,$ $
\epsilon_0 \coloneqq C$. Suppose moreover that
\[
\eta \le
\min\left\{
\frac{2G^\circ}{L^\circ},
\sqrt{\frac{\Delta_0}{4L^\circ d}},
\frac{\Delta_0\kappa}{8V^\circ},
\frac{\Delta_0\kappa^3}{4(V^\circ)^2},
\frac{3\Delta_0}{8G^\circ},
\frac{\Delta_0}{dG^\circ}
\right\},
\] then the neighborhood condition (\ref{eq:neigh}) holds at $F^\circ$. Then, with probability at least \(1-\delta\),
\[
f(x_t)-f^\star \le 2\Delta_0
\qquad \text{for all } t=0,\dots,T, \quad \text{and}
\]
\[
    \frac{1}{T}\!\sum_{t=0}^{T-1}\!\|\nabla f(x_t)\|^2\!
    \le\!
    \frac{4\Delta_0\sqrt{\Lambda_{T,\delta}}}{\eta\sqrt{T}}
    \!\!\left[\!
        \!\left(\!
        V^\circ
        \!+\!
        \kappa\sqrt{2V^\circ\log\frac{2}{\delta}}\,T^{-1/6}
        \!+\!
        \kappa^2\!\!\left(\!1\!+\!\frac{2}{3}\log\frac{2}{\delta}\!\right)\!T^{-1/3}\!
        \right)^{\!\!1/2}
        \!\!\!\!\!\!\!+
        \!\kappa T^{-1/6}
    \!\right]\!\!.
\]
Consequently, for fixed problem parameters, $
\frac{1}{T}\sum_{t=0}^{T-1}\|\nabla f(x_t)\|^2 \le \epsilon
\quad\Longrightarrow\quad
T = \widetilde{\mathcal O}(\epsilon^{-2}).$
\end{theorem}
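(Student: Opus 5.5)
The proof works at the anchored level \(F=F^\circ=2\Delta_0\), so that \(G=G^\circ\), \(L=L^\circ\), \(V=V^\circ\), and combines Lemma~\ref{lem:summed_descent} with Lemma~\ref{lem:HPD}, splitting \(\delta\) as \(\delta/2+\delta/2\) via a union bound. This split is what produces the \(\log\frac{2}{\delta}\) terms.

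\textbf{Step 1: neighborhood condition.} With \(y_{\mathrm{init}}=\epsilon_0=C\) we get \(\underline D=2C\). The neighborhood condition \eqref{eq:neigh} then reads \(h/2\le G^\circ/L^\circ\). This follows from \(h\le\eta\le 2G^\circ/L^\circ\), since \(\Lambda_{T,\delta}\ge1\). So Lemma~\ref{lem:onestep} and Lemma~\ref{lem:summed_descent} apply for all \(t<\tau\).

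\textbf{Step 2: bound each term in \eqref{eq:3}.} Invoke \eqref{eq:3} at confidence \(\delta/2\) and show that its right-hand side is at most \(2\Delta_0\) minus slack. The terms are handled as follows.
\begin{itemize}
\item Clipping-residual term: it equals \(hT(V^\circ)^2/(2C^3)=h(V^\circ)^2/(2\kappa^3)\le\Delta_0/8\), using \(\eta\le\Delta_0\kappa^3/(4(V^\circ)^2)\).
\item Same-step term: it is \(h\,dG^\circ/2\le\Delta_0/2\), using \(\eta\le\Delta_0/(dG^\circ)\).
\item Smoothness remainder: \(\frac{L^\circ}{2}h^2H_T\) has \(H_T=d\log(1+T/2)+d/2\). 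I expect to bound \(\log(1+T)\le\Lambda_{T,\delta}\), so that \(h^2H_T\le\eta^2 d\cdot\tfrac32\). Then \(\eta^2\le\Delta_0/(4L^\circ d)\) gives a contribution of order \(\Delta_0/5\).
\item Martingale term: we have \(\Gamma=(2V^\circ+\frac23CG^\circ)/(2C)=V^\circ/C+G^\circ/3\). Hence \(h\Gamma\log\frac2\delta\le \eta\,(V^\circ/\kappa+G^\circ/3)\) after using \(\log\frac2\delta\le\sqrt{\Lambda_{T,\delta}}\) and \(T^{1/3}\ge1\). The constraints \(\eta\le\Delta_0\kappa/(8V^\circ)\) and \(\eta\le 3\Delta_0/(8G^\circ)\) give at most \(\Delta_0/8+\Delta_0/8\).
\end{itemize}
The constants must be arranged so that the total error is at most \(\Delta_0\). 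I expect this bookkeeping to be the fiddly part, and the constants in the statement presumably do exactly this. The conclusion is
\[
f(x_\tau)-f^\star+\tfrac h2\sum_{t<\tau}A_t\le 2\Delta_0 .
\]

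\textbf{Step 3: the stopping time cannot fire.} This is the main obstacle, a localization argument. If \(\tau<T\), then by definition \(f(x_\tau)-f^\star>2\Delta_0\), which contradicts the bound just obtained since \(A_t\ge0\). Hence \(\tau=T\) on this event. Moreover, every \(f(x_t)\) with \(t<\tau\) is in the level set by definition, and \(x_T\) is controlled by the same inequality. The subtlety is that \eqref{eq:3} is itself established only up to \(\tau\), with the martingale bound being a stopped Freedman bound. So the argument is consistent because the bound holds on the stopped process before we conclude \(\tau=T\).

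\textbf{Step 4: conversion to a Euclidean bound.} On the intersection with the event of Lemma~\ref{lem:HPD} at level \(\delta/2\), the Remark after that lemma gives
\[
\sum_t\|\nabla f(x_t)\|^2\le\overline D_{\delta}\sum_tA_t\le \overline D_{\delta}\cdot 4\Delta_0/h .
\]
Divide by \(T\) and substitute \(h=\eta/\sqrt{\Lambda_{T,\delta}}\). Write \(\overline D_\delta=\sqrt{C^2+B_{T,\delta}}+C\) and factor out \(\sqrt T\). The term \(C^2/T=\kappa^2T^{-1/3}\) combines with \(\frac{2C^2}{3T}\log\frac2\delta\) into \(\kappa^2(1+\frac23\log\frac2\delta)T^{-1/3}\). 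The middle term is \(\sqrt{2C^2V^\circ\log\frac2\delta/T}=\kappa\sqrt{2V^\circ\log\frac2\delta}\,T^{-1/6}\), and \(C/\sqrt T=\kappa T^{-1/6}\). This reproduces the displayed bound exactly.

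\textbf{Step 5: complexity.} For fixed problem parameters the bracket is \(O(1)\). The resulting rate is \(O(\sqrt{\Lambda_{T,\delta}}/\sqrt T)\), so \(T=\widetilde{\mathcal O}(\epsilon^{-2})\).
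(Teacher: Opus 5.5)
Your proposal follows the paper's proof essentially step for step. It anchors at $F^\circ=2\Delta_0$, splits $\delta$ between Lemma~\ref{lem:summed_descent} and Lemma~\ref{lem:HPD}, and bounds the four error terms. It then rules out $\tau<T$ via the strict exceedance in the definition of $\tau$, and expands $\overline D_\delta$ after factoring out $\sqrt T$.

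The one step that does not close as written is the constant bookkeeping you deferred. Your estimate $h^2H_T\le\frac32\eta^2 d$ gives $\frac{L^\circ}{2}h^2H_T\le\frac{3}{16}\Delta_0$. Adding $\frac{\Delta_0}{8}+\frac{\Delta_0}{2}+\frac{\Delta_0}{8}+\frac{\Delta_0}{8}$ gives a total of $\frac{17}{16}\Delta_0>\Delta_0$. The right-hand side then exceeds $2\Delta_0$, and your Step~3 contradiction does not go through.

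The fix is to bound the two pieces of $H_T$ jointly: $\log(1+T/2)+\frac12\le 1+\log(1+T)\le\Lambda_{T,\delta}$. This gives $h^2H_T\le\eta^2 d$, hence $\frac{L^\circ}{2}h^2H_T\le\frac{\Delta_0}{8}$. The five contributions then sum to exactly $\Delta_0$, which is what the paper does.

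No additional slack below $2\Delta_0$ is needed, and none is available under these constraints. Ruling out $\tau<T$ only requires contradicting $f(x_\tau)-f^\star>F^\circ$, which is a strict inequality.

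Separately, the warm start is $y_{\mathrm{init}}=C^2$, not $C$. You implicitly use the correct value when you write $\underline D=2C$ and $H_T=d\log(1+T/2)+d/2$.
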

    
\begin{proof}
    Proof is deferred to Appendix~\ref{sec:theorem1proof}.
\end{proof}
The choice to anchor Theorem 1 at $F^\circ = 2\Delta_0$ is an analytic convenience rather than a strict algorithmic limitation. Setting the target to twice the initial suboptimality constructs a symmetric feasibility budget: exactly $\Delta_0$ accounts for the initial gap, leaving an equal allowance of $\Delta_0$ to absorb the cumulative algorithmic and stochastic penalties from Lemma~\ref{lem:summed_descent} (specifically: the \textcolor{cBias}{clipping-residual}, \textcolor{cSame}{same-step correction}, \textcolor{cMart}{martingale fluctuation}, and \textcolor{cRem}{smoothness remainder}). Importantly, the $\tilde{\mathcal{O}}(T^{-1/2})$ convergence guarantee holds for any arbitrary target level $F > \Delta_0$. For any strictly positive slack $F - \Delta_0 > 0$, one can always scale down the step-size parameter $\eta$ sufficiently to compress the error budget within this margin. We refer the reader to Appendix~\ref{sec:appex} for a longer discussion.
\begin{remark}
The parameter choices in Theorem~\ref{thm:main} serve as an explicit \emph{proof calibration}, not a practical tuning prescription. Their purpose is to exhibit one concrete regime where the clipping-residual term, the same-step correction, and the final preconditioned-to-Euclidean conversion are simultaneously controlled. This calibration acts as a constructive witness, proving the original same-step AdaGrad update can be stabilized under generalized smoothness and heavy-tailed noise. Standard in non-asymptotic optimization theory, these schedules expose the sharp structural balances required by the analysis, rather than mimicking default practitioner heuristics.
% The parameter choices in Theorem~\ref{thm:main} should be interpreted as an explicit \emph{proof calibration}, not as a practical tuning prescription. Their purpose is to exhibit one concrete regime in which the clipping-residual term, the same-step correction, and the final preconditioned-to-Euclidean conversion are simultaneously controlled at the correct scale. In this sense, the calibration acts as a constructive witness, proving that the original same-step AdaGrad update can be stabilized under generalized smoothness and heavy-tailed noise. As is standard in non-asymptotic optimization theory, these schedules are chosen to expose the sharp structural balances required by the analysis, rather than to mimic default practitioner heuristics.
% The calibration in Theorem~\ref{thm:main} is introduced as a constructive proof device, not as a claim about standard practical tuning. It identifies one explicit parameter regime in which the original same-step AdaGrad update provably avoids the obstruction of Proposition~\ref{thm:anisotropic_miscalibration} and admits a finite-horizon high-probability Euclidean guarantee. In this sense, the calibration is strictly meant to establish the structural feasibility of the theorem rather than the empirical optimality of constants. Because these constraints exist to neutralize worst-case stochastic penalties, evaluating this specific theoretical calibration in standard empirical benchmarks would yield no mathematically relevant insights regarding the theorem's validity.
\end{remark}

\section{Discussion of the Results and Conclusion}
Theorem~\ref{thm:main} establishes a finite-horizon Euclidean stationarity bound for the original clipped same-step AdaGrad update under generalized smoothness and heavy-tailed noise, with polylogarithmic dependence on the confidence level $\delta$. Crucially, our $\tilde{\mathcal{O}}(\epsilon^{-2})$ complexity matches the $\Omega(\epsilon^{-2})$ information-theoretic lower bound for standard non-convex stochastic optimization \citep{arjevani2023lower} up to logarithmic factors. While our high-probability, heavy-tail setting aligns with recent clipped stochastic analyses \citep{cutkosky2021high,pmlr-v202-sadiev23a,nguyen2023improved,chezhegov2025clipping}, we distinctively resolve this for the original coordinate-wise same-step update.
% Theorem~\ref{thm:main} provides a true finite-horizon Euclidean stationarity bound for the original clipped same-step AdaGrad update under generalized smoothness and heavy-tailed noise, with polylogarithmic dependence on the confidence level $\delta$. Crucially, our $\tilde{\mathcal{O}}(\epsilon^{-2})$ complexity matches the fundamental information-theoretic lower bound of $\Omega(\epsilon^{-2})$ established by~\citep{arjevani2023lower} for standard non-convex stochastic optimization upto logarithmic factor. At the same time, our result is obtained in a clipped, high-probability, heavy-tail regime closer in spirit to recent clipped stochastic analyses \citep{cutkosky2021high,pmlr-v202-sadiev23a,nguyen2023improved,chezhegov2025clipping}, but for the original same-step coordinate-wise AdaGrad update under generalized smoothness.

It is also important to distinguish the present theorem from recent analyses of adaptive methods under non-uniform or generalized smoothness. These works show that convergence can still be obtained when local curvature grows with the gradient norm, for settings ranging from adaptive SGD to AdaGrad and Adam (and its variants), but they do not provide the present finite-horizon high-probability Euclidean guarantee for the original same-step coordinate-wise AdaGrad update under heavy-tailed noise \citep{faw2023beyond,wang2023convergence,li2023convergence,wang2024provable}. Further, up to the differences in logarithmic factors, these complexities coincide with the best known ones for Clipped-SGD~\citep{pmlr-v202-sadiev23a,nguyen2023improved}\footnote{Their analysis assumes $L$ smoothness.}. 

Compared with existing high-probability AdaGrad analyses, our result applies in a more demanding regime. Prior coordinate-wise AdaGrad bounds obtain sharper dimension dependence, but assume classical \(L\)-smoothness together with almost-sure bounded gradients and sub-Gaussian noise \citep{zhou2024on}; other results study either delayed AdaGrad under sub-Gaussian noise \citep{li2020high} or scalar AdaGrad-Norm rather than the original coordinate-wise update \citep{kavis2022high}. Heavy-tailed clipped AdaGrad results are closer, but existing theorems either analyze scalar/norm-based AdaGrad with an additional bounded-risk assumption \(f(x)\le M\) \citep{li2023high}, or treat delayed coordinate-wise variants rather than the original same-step update \citep{chezhegov2025clipping}.

% On the other hand, recent clipping-based high-probability results for adaptive methods under heavy-tailed noise focus on delayed, scalar/norm-based, or globally smooth variants such as delayed AdaGrad with momentum, clipped adaptive methods under additional boundedness assumptions, or AdaGrad-Norm/Adam-Norm under classical smoothness \citep{li2023high,chezhegov2025clipping}. Unlike prior works, we directly analyze the original same-step anisotropic preconditioner under generalized smoothness, without relying on delays or scalar surrogates.

It is also worth mentioning that the existing high-probability
complexities for Adam/AdaGrad-type (without clipping)
methods either have inverse power dependence on $\delta$~
\cite{wang2023convergence} or have polylogarithmic dependence
on $\delta$ but rely on the assumption that the noise is sub-Gaussian/bounded~\cite{li2020high,liu2023high,li2023convergence}, which is stronger than bounded variance assumption.

\clearpage
\bibliographystyle{plainnat} % NeurIPS standard bibliography style
\bibliography{reference.bib} % The name of your .bib file without the .bib extension
\newpage
\appendix
\renewcommand{\theequation}{A.\arabic{equation}}
\setcounter{equation}{0}
\section{Proofs}

\subsection{Anisotropic Miscalibration and Failure of AdaGrad}
\label{sec:ams}
\begin{proposition}[Anisotropic miscalibration lower bound for AdaGrad]
\label{thm:anisotropic_miscalibration_}
Let
\[
    f(x_1,x_2)=\frac14 x_1^4+\frac12 x_1^2+8x_2^2
\]
and \(x_0=(2,1)\). Run \textbf{Algorithm}~\ref{alg:non_lagged_adagrad}
with \(d=2\), \(h=1\), \(y_{\mathrm{init}}=0\), and \(\epsilon_0=1\).
For every horizon \(T\ge 2\) and every noise scale \(\sigma>0\), there exists
a stochastic oracle satisfying Assumption~\ref{ass:noise} with noise level
\(\sigma\) such that, with probability at least
\begin{small}
\[
    p_T\coloneqq\frac{\sigma^2}
    {\bigl(32(T-1)-10\bigr)^2+\sigma^2},
\]
\end{small}
the following inequalities hold simultaneously for every \(t=1,\dots,T-1\):
\[
    \partial_{11}^2 f(x_t)<\partial_{22}^2 f(x_t),\qquad
    (D_t)_{11}>\frac{16}{9}(T-1)(D_t)_{22},\qquad
    \|\nabla f(x_t)\|>1.
\]
Consequently, since \(\|\nabla f(x_0)\|>1\), we have 
\[\mathbb P\!\left(
    \frac1T\sum_{t=0}^{T-1}\|\nabla f(x_t)\|^2>1
    \right)\ge p_T\]. 
Therefore, any uniform guarantee of the form 
\[\mathbb P\!\left(
    \frac1T\sum_{t=0}^{T-1}\|\nabla f(x_t)\|^2\le 1
    \right)\ge 1-\delta\]
    over all stochastic oracles satisfying Assumption~\ref{ass:noise} requires 
    \[T\ge
    1+\frac{1}{32}
    \left(
        10+\sigma\sqrt{\frac{1-\delta}{\delta}}
    \right)\].
    
\end{proposition}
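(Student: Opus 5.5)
We construct the oracle explicitly, following the sketch after Proposition~\ref{thm:anisotropic_miscalibration}. Set $B\coloneqq 32(T-1)-10>0$ (positive since $T\ge2$) and $p\coloneqq p_T=\sigma^2/(B^2+\sigma^2)$. At $t=0$ the oracle returns $\xi_0=(\xi_{0,1},0)$ with $\xi_{0,1}=B$ with probability $p$ and $\xi_{0,1}=-pB/(1-p)$ otherwise. Then $\mathbb E[\xi_0]=0$ and
\[
\mathbb E\|\xi_0\|^2=pB^2+\frac{p^2B^2}{1-p}=\frac{pB^2}{1-p}=\sigma^2 .
\]
For $t\ge1$ we set $\xi_t=0$. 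This gives a valid oracle for Assumption~\ref{ass:noise} with level $\sigma$, and it exhibits the rare-shock geometry. We work on the shock event $\mathcal E$, which has probability $p_T$.

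On $\mathcal E$ we compute the first step. Since $\nabla f(x_0)=(2^3+2,\,16)=(10,16)$, we get $g_0=(10+B,16)=(32(T-1),16)$. Hence $y_1=(1024(T-1)^2,256)$, $D_1=\mathrm{diag}(32(T-1)+1,\,17)$, and
\[
x_1=\Bigl(2-\tfrac{32(T-1)}{32(T-1)+1},\;1-\tfrac{16}{17}\Bigr)=\Bigl(1+\tfrac{1}{32(T-1)+1},\;\tfrac1{17}\Bigr).
\]
The shift $-10$ in $B$ was chosen precisely to make $g_{0,1}$ equal to $32(T-1)$.

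For $t\ge1$ the dynamics are deterministic and decoupled across coordinates. We prove by induction three facts.
\begin{itemize}
\item[(i)] The first coordinate satisfies $x_{t,1}\in(7/8,\,x_{1,1}]$. Each step moves $x_{t,1}$ by at most $g_{t,1}/(D_t)_{11}\le (x_1^3+x_1)/(32(T-1))\lesssim 2.1/(32(T-1))$, and there are at most $T-2$ further steps, so the total drift is below $1/8$.
\item[(ii)] The second coordinate satisfies $|x_{t,2}|\le 1/17$. This is AdaGrad on $8x_2^2$ with a denominator already at least $17$; the step factor $1-16/(\sqrt{y_{t,2}}+1)$ has absolute value at most $1$, so $|x_{t,2}|$ is nonincreasing.
\item[(iii)] The second diagonal entry is bounded: $(D_t)_{22}\le 17+\sqrt{\sum_{s\ge1}(16x_{s,2})^2}$, while $(D_t)_{11}\ge 32(T-1)+1$.
\end{itemize}
For (iii) we need $(D_t)_{22}<18$. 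The geometric contraction only gives $\sum_s 256x_{s,2}^2\le 256\cdot(1/17)^2\cdot O(1)$, so we will instead exploit contraction by a factor $1/17$ at step $1$ and bound the tail sum directly; this yields $(D_t)_{22}$ slightly above $17$. Then
\[
(D_t)_{11}/(D_t)_{22}>\frac{32(T-1)}{18}=\frac{16}{9}(T-1).
\]

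The three conclusions then follow.
\begin{itemize}
\item Hessian comparison: $\partial^2_{11}f=3x_1^2+1\le 3(1+1/33)^2+1<16=\partial^2_{22}f$.
\item Gradient bound: $\|\nabla f(x_t)\|\ge x_{t,1}^3+x_{t,1}>(7/8)^3+7/8>1$.
\item Since $\|\nabla f(x_0)\|>1$ as well, on $\mathcal E$ every term of the average exceeds $1$, so $\mathbb P(\frac1T\sum\|\nabla f(x_t)\|^2>1)\ge p_T$.
\end{itemize}

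For the consequence: a uniform guarantee with failure probability $\delta$ forces $p_T\le\delta$, i.e. $\sigma^2(1-\delta)\le\delta B^2$. This gives $32(T-1)-10\ge\sigma\sqrt{(1-\delta)/\delta}$, and rearranging yields the stated lower bound on $T$.

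The main obstacle is step (iii): keeping $(D_t)_{22}<18$ uniformly in $t$ without losing the $\tfrac{16}{9}$ constant. The first thing to try is to track $x_{2,2}=x_{1,2}\bigl(1-16/(\sqrt{256+256/289}+1)\bigr)$ explicitly and show the later squares sum to something tiny. If that margin is too thin, we will enlarge $B$ slightly or weaken the constant.
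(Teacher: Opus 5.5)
Your construction, the explicit first step, the drift argument in (i) and the final algebra from $p_T\le\delta$ all follow the paper's route. The shock $-pB/(1-p)$ is exactly the paper's $-\sigma^2/B$. Your drift bound via $x_{t,1}\le x_{1,1}$ works just as well as the paper's $g_{t,1}<4$.

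The one place the proposal is incomplete is (iii), which you flag yourself. Two things are missing there:
\begin{itemize}
\item Fact (ii) by itself only gives $|x_{s,2}|\le 1/17$. That yields $\sum_{s=1}^{t}(16x_{s,2})^2\le 256(T-1)/289$, which is not uniform in $T$.
\item A geometric rate for $x_{t,2}$ needs an upper bound on $(D_{t+1})_{22}$, which is exactly what (iii) is trying to prove.
\end{itemize}

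The paper breaks this circularity with a simultaneous induction on $0<x_{t,2}\le 1/(17\cdot 9^{t-1})$ and $y_{t,2}<257$. The key point is that $y_{t+1,2}=256+\sum_{s=1}^{t}(16x_{s,2})^2$ depends only on $x_{1,2},\dots,x_{t,2}$, so the hypothesis gives
\[
y_{t+1,2}\le 256+\frac{256}{289}\sum_{j\ge 0}81^{-j}=256+\frac{256}{289}\cdot\frac{81}{80}<257 .
\]
Hence $(D_{t+1})_{22}\in[17,\sqrt{257}+1)\subset[17,18)$. The factor $1-16/(D_{t+1})_{22}$ then lies in $(0,1/9)$, and $0<x_{t+1,2}\le x_{t,2}/9$ closes the induction.

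The margin is not thin. Even your cruder estimate gives $(D_t)_{22}\le 17+\tfrac{16}{17}\sqrt{81/80}<17.95$ under the same inductive hypothesis. So your fallback of enlarging $B$ or weakening $\tfrac{16}{9}$ is unnecessary. It would also not prove the proposition: $B$ is fixed by $p_T$, and the constant $\tfrac{16}{9}$ is part of the statement.
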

\begin{proof}
Set
\[
B\coloneqq32(T-1)-10,
\qquad
p_T\coloneqq\frac{\sigma^2}{B^2+\sigma^2}.
\]
We define the stochastic oracle as follows. At time \(t=0\),
\[
g_0=\nabla f(x_0)+\xi_0,
\qquad
\xi_0=(\zeta,0),
\]
where
\[
\zeta=
\begin{cases}
B, & \text{with probability } p_T,\\[1ex]
-\dfrac{\sigma^2}{B}, & \text{with probability } 1-p_T.
\end{cases}
\]
For every \(t\ge 1\), we set
\[
g_t=\nabla f(x_t),
\qquad\text{i.e.}\qquad
\xi_t\equiv 0.
\]

We first verify Assumption~\ref{ass:noise}. Since
\[
p_T B+\Bigl(1-p_T\Bigr)\Bigl(-\frac{\sigma^2}{B}\Bigr)=0,
\]
we have \(\mathbb E[\zeta]=0\), hence \(\mathbb E[\xi_0]=0\). Also,
\[
\mathbb E[\zeta^2]
=
p_T B^2+\Bigl(1-p_T\Bigr)\frac{\sigma^4}{B^2}
=
\frac{\sigma^2 B^2}{B^2+\sigma^2}
+
\frac{B^2}{B^2+\sigma^2}\frac{\sigma^4}{B^2}
=
\sigma^2.
\]
Therefore
\[
\mathbb E\|\xi_0\|^2=\sigma^2,
\qquad
\mathbb E\|\xi_t\|^2=0\le \sigma^2 \quad (t\ge 1),
\]
and the oracle satisfies Assumption~\ref{ass:noise}.

Now define the event
\[
\mathcal E\coloneqq\{\zeta=B\}.
\]
By construction,
\[
\mathbb P(\mathcal E)=p_T.
\]
We will show that on \(\mathcal E\), for every \(t=1,\dots,T-1\),
\[
\partial_{11}^2 f(x_t)<\partial_{22}^2 f(x_t),\qquad
(D_t)_{11}>\frac{16}{9}(T-1)(D_t)_{22},\qquad
\|\nabla f(x_t)\|>1.
\]

\medskip
\noindent
\textbf{Step 1: explicit first iterate and first denominator on \(\mathcal E\).}

For
\[
f(x_1,x_2)=\frac14 x_1^4+\frac12 x_1^2+8x_2^2,
\]
we have
\[
\nabla f(x_1,x_2)=
\begin{pmatrix}
x_1^3+x_1\\
16x_2
\end{pmatrix},
\qquad
\nabla^2 f(x_1,x_2)=
\begin{pmatrix}
3x_1^2+1 & 0\\
0 & 16
\end{pmatrix}.
\]
At the starting point \(x_0=(2,1)\),
\[
\nabla f(x_0)=(10,16).
\]
Hence, on \(\mathcal E\),
\[
g_0=(10+B,16)=\bigl(32(T-1),16\bigr).
\]
Since \(y_{\mathrm{init}}=0\) and \(\epsilon_0=1\), the non-lagged AdaGrad update gives
\[
y_1=g_0^{\odot 2}
=
\bigl((32(T-1))^2,16^2\bigr),
\]
and therefore
\[
D_1
=
\diag\bigl(32(T-1)+1,17\bigr).
\]
Because \(h=1\),
\[
x_1
=
x_0-D_1^{-1}g_0
=
\left(
2-\frac{32(T-1)}{32(T-1)+1},
\,
1-\frac{16}{17}
\right)
=
\left(
1+\frac{1}{32(T-1)+1},
\,
\frac1{17}
\right).
\]
In particular,
\[
1<x_{1,1}<\frac43,
\qquad
x_{1,2}=\frac1{17}>0.
\]

\medskip
\noindent
\textbf{Step 2: control of the second coordinate.}

For every \(t\ge 1\), the oracle is noiseless, so
\[
g_{t,2}=16x_{t,2},
\qquad
y_{t+1,2}=y_{t,2}+g_{t,2}^2.
\]
We claim that for every \(t\ge 1\),
\begin{equation}
\label{eq:x2_decay_clean}
0<x_{t,2}\le \frac{1}{17\cdot 9^{\,t-1}},
\end{equation}
and
\begin{equation}
\label{eq:y2_bound_clean}
y_{t,2}<257.
\end{equation}

We prove both statements simultaneously by induction on \(t\).

For \(t=1\), we already computed
\[
x_{1,2}=\frac{1}{17},
\qquad
y_{1,2}=16^2=256<257,
\]
so \eqref{eq:x2_decay_clean}--\eqref{eq:y2_bound_clean} hold.

Now assume that for every \(s=1,\dots,t\),
\[
0<x_{s,2}\le \frac{1}{17\cdot 9^{\,s-1}},
\qquad
y_{s,2}<257.
\]
We first bound \(y_{t+1,2}\). Since \(y_{1,2}=256\) and \(g_{s,2}=16x_{s,2}\) for \(s\ge 1\),
\[
y_{t+1,2}
=
256+\sum_{s=1}^{t} g_{s,2}^2
=
256+\sum_{s=1}^{t} (16x_{s,2})^2.
\]
Using the induction hypothesis,
\[
(16x_{s,2})^2
\le
\frac{256}{17^2\,9^{\,2(s-1)}}
=
\frac{256}{289}\cdot \frac{1}{81^{\,s-1}}.
\]
Therefore
\[
y_{t+1,2}
\le
256+\frac{256}{289}\sum_{j=0}^{t-1}\frac{1}{81^j}
<
256+\frac{256}{289}\cdot \frac{1}{1-1/81}
=
256+\frac{256}{289}\cdot \frac{81}{80}
<
257.
\]
Thus \eqref{eq:y2_bound_clean} holds at time \(t+1\).

Next, since \(y_{t+1,2}\ge y_{1,2}=256\), we have
\[
(D_{t+1})_{22}=\sqrt{y_{t+1,2}}+1\ge 17,
\]
and since \(y_{t+1,2}<257\), we also have
\[
(D_{t+1})_{22}<\sqrt{257}+1<18.
\]
Hence
\[
0<1-\frac{16}{(D_{t+1})_{22}}<1-\frac{16}{18}=\frac19.
\]
Therefore
\[
x_{t+1,2}
=
x_{t,2}-\frac{16x_{t,2}}{(D_{t+1})_{22}}
=
x_{t,2}\left(1-\frac{16}{(D_{t+1})_{22}}\right),
\]
so in particular \(x_{t+1,2}>0\), and moreover
\[
x_{t+1,2}\le \frac{x_{t,2}}{9}
\le
\frac{1}{17\cdot 9^{\,t}}.
\]
Thus \eqref{eq:x2_decay_clean} holds at time \(t+1\), completing the induction.

Consequently, for every \(t\ge 1\),
\[
(D_t)_{22}=\sqrt{y_{t,2}}+1<\sqrt{257}+1<18.
\]

\medskip
\noindent
\textbf{Step 3: control of the first coordinate.}

For every \(t\ge 1\), the oracle is noiseless, so
\[
g_{t,1}=x_{t,1}^3+x_{t,1}.
\]
Since \(y_{t,1}\) is nondecreasing and \(y_{1,1}=(32(T-1))^2\),
\[
(D_t)_{11}=\sqrt{y_{t,1}}+1\ge 32(T-1)+1
\qquad\text{for all } t\ge 1.
\]
We first show that \(x_{t,1}\) stays below \(4/3\). Since \(x_{1,1}<4/3\), it suffices to show that
\(x_{t,1}\) is nonincreasing. But if \(x_{t,1}>0\), then \(g_{t,1}=x_{t,1}^3+x_{t,1}>0\), so
\[
x_{t+1,1}=x_{t,1}-\frac{g_{t,1}}{(D_{t+1})_{11}}<x_{t,1}.
\]
Thus, by induction,
\[
0<x_{t,1}\le x_{1,1}<\frac43
\qquad\text{for all } t\ge 1.
\]
Using \(x_{t,1}<4/3\), we obtain
\[
g_{t,1}=x_{t,1}^3+x_{t,1}
\le
\left(\frac43\right)^3+\frac43
=
\frac{100}{27}
<4.
\]
Hence, for every \(t\ge 1\),
\[
x_{t+1,1}
=
x_{t,1}-\frac{g_{t,1}}{(D_{t+1})_{11}}
\ge
x_{t,1}-\frac{4}{32(T-1)+1}.
\]
Iterating this bound from time \(1\) up to time \(t\le T-1\) yields
\[
x_{t,1}
\ge
x_{1,1}-\frac{4(t-1)}{32(T-1)+1}
\ge
x_{1,1}-\frac{4(T-2)}{32(T-1)+1}.
\]
Using the explicit formula for \(x_{1,1}\),
\[
x_{t,1}
\ge
1+\frac{1}{32(T-1)+1}-\frac{4(T-2)}{32(T-1)+1}
=
\frac{28T-22}{32T-31}.
\]
Since
\[
8(28T-22)=224T-176>224T-217=7(32T-31),
\]
we get
\[
x_{t,1}>\frac78
\qquad\text{for all } t=1,\dots,T-1.
\]

\medskip
\noindent
\textbf{Step 4: anisotropic miscalibration and nonstationarity on \(\mathcal E\).}

Fix \(t\in\{1,\dots,T-1\}\). Since \(x_{t,1}<4/3\),
\[
\partial_{11}^2 f(x_t)=3x_{t,1}^2+1
<
3\left(\frac43\right)^2+1
=
\frac{19}{3}
<16
=
\partial_{22}^2 f(x_t).
\]
This proves
\[
\partial_{11}^2 f(x_t)<\partial_{22}^2 f(x_t).
\]

Next, by Steps 2 and 3,
\[
(D_t)_{11}\ge 32(T-1)+1,
\qquad
(D_t)_{22}<18.
\]
Therefore
\[
(D_t)_{11}
>
32(T-1)
=
\frac{16}{9}(T-1)\cdot 18
>
\frac{16}{9}(T-1)(D_t)_{22}.
\]
This proves the claimed anisotropic denominator inflation.

Finally, since \(x_{t,1}>7/8\),
\[
\partial_1 f(x_t)=x_{t,1}^3+x_{t,1}
>
\left(\frac78\right)^3+\frac78
=
\frac{343}{512}+\frac{448}{512}
=
\frac{791}{512}
>1.
\]
Hence
\[
\|\nabla f(x_t)\|\ge |\partial_1 f(x_t)|>1.
\]
Thus all three displayed properties in the theorem hold for every \(t=1,\dots,T-1\) on the event
\(\mathcal E\).

\medskip
\noindent
\textbf{Step 5: lower bound on the averaged Euclidean stationarity measure.}

On \(\mathcal E\), we have just shown that
\[
\|\nabla f(x_t)\|>1
\qquad\text{for all } t=1,\dots,T-1.
\]
Also
\[
\|\nabla f(x_0)\|^2=10^2+16^2=356>1.
\]
Therefore, on \(\mathcal E\),
\[
\frac1T\sum_{t=0}^{T-1}\|\nabla f(x_t)\|^2>1.
\]
Since \(\mathbb P(\mathcal E)=p_T\), it follows that
\[
\mathbb P\!\left(
\frac1T\sum_{t=0}^{T-1}\|\nabla f(x_t)\|^2>1
\right)\ge p_T.
\]

\medskip
\noindent
\textbf{Step 6: consequence for any \((1-\delta)\)-high-probability guarantee.}

Suppose
\[
\mathbb P\!\left(
\frac1T\sum_{t=0}^{T-1}\|\nabla f(x_t)\|^2\le 1
\right)\ge 1-\delta.
\]
Then necessarily
\[
\mathbb P\!\left(
\frac1T\sum_{t=0}^{T-1}\|\nabla f(x_t)\|^2>1
\right)\le \delta.
\]
By the lower bound proved above, this implies
\[
p_T\le \delta.
\]
Using
\[
p_T=\frac{\sigma^2}{B^2+\sigma^2},
\qquad
B=32(T-1)-10,
\]
we get
\[
\frac{\sigma^2}{B^2+\sigma^2}\le \delta
\quad\Longrightarrow\quad
\sigma^2\le \delta(B^2+\sigma^2)
\quad\Longrightarrow\quad
(1-\delta)\sigma^2\le \delta B^2.
\]
Hence
\[
B\ge \sigma\sqrt{\frac{1-\delta}{\delta}}.
\]
Substituting back \(B=32(T-1)-10\) yields
\[
32(T-1)-10\ge \sigma\sqrt{\frac{1-\delta}{\delta}},
\]
that is,
\[
T\ge 1+\frac{1}{32}\left(10+\sigma\sqrt{\frac{1-\delta}{\delta}}\right).
\]
This proves the theorem.
\end{proof}

Before proving Theorem~\ref{thm:main}, we introduce several auxiliary lemmas and definitions that will be used throughout the proof.

\subsection{Proofs of Auxiliary Lemmas}

\begin{definition}[Predictable level-set stopping time]
\label{def:tau}
Define
\[
\tau := \min\{t\le T:\ f(x_t)-f^\star>F\}\wedge T.
\]
Then $\one_{\{t<\tau\}}$ is $\calF_{t-1}$-measurable.
\end{definition}

\begin{lemma}[Before $\tau$: gradient bound]
\label{lem:beforetau_grad}
For all $t<\tau$, $f(x_t)-f^\star\le F$ and thus $\|\nabla f(x_t)\|\le G$.
\end{lemma}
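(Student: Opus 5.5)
The claim follows from two facts: the definition of $\tau$ as the first level-set exit time, and Lemma~\ref{lem:levelset} applied at the fixed level $F$.

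\textbf{Step 1: the iterate stays in the level set before $\tau$.} I first show $f(x_t)-f^\star\le F$ for every $t<\tau$. By Definition~\ref{def:tau}, $\tau$ is the minimum of $T$ and the first index $s\le T$ with $f(x_s)-f^\star>F$. So any $t<\tau$ is strictly smaller than that first exceedance index, if one exists. If no exceedance occurs, $\tau=T$ and the condition $t<T$ together with the absence of exceedances gives the same conclusion. In both cases $f(x_t)-f^\star\le F$ follows directly from the minimality in the definition of $\tau$. The case $t=0$ is consistent with this, since $F\ge f(x_0)-f^\star=\Delta_0$ by the choice of the tolerance $F$.

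\textbf{Step 2: convert the function-value bound into a gradient bound.} I then invoke Lemma~\ref{lem:levelset} at the point $x=x_t$ with level $F$. This requires Assumption~\ref{ass:gen}, namely that $f$ is $\ell$-smooth with sub-quadratic $\ell$. It also requires $x_t\in\mathcal X$, which holds because $x_t$ is an iterate along which $f$ is evaluated. Along the trajectory this membership is maintained inductively by Lemma~\ref{lem:gen-smooth}(1) combined with the step bound \eqref{eq:xx}. The lemma then gives $\|\nabla f(x_t)\|\le G$ with $G=G(F)$, and $G$ is finite and depends only on $F$ and $\ell$.

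\textbf{Main point of care.} The only subtle issue is that $G$ must be a deterministic constant, depending on $F$ and not on $t$ or on the sample path. This holds because Lemma~\ref{lem:levelset} defines $G$ as a supremum over $u$ that involves only $F$ and $\ell$. The bound is therefore uniform over all $t<\tau$ and all sample paths, which is exactly what the later pathwise arguments, such as Lemma~\ref{lem:samestep}, require.
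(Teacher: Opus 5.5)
Your proof is correct and matches the paper's argument, which simply cites Definition~\ref{def:tau} (minimality of the exit time gives $f(x_t)-f^\star\le F$ for $t<\tau$) and Lemma~\ref{lem:levelset} at the fixed level $F$ (giving the path-independent bound $\|\nabla f(x_t)\|\le G(F)$). One small remark: your inductive justification of $x_t\in\mathcal X$ via Lemma~\ref{lem:gen-smooth}(1) and \eqref{eq:xx} silently uses the neighborhood condition \eqref{eq:neigh}, which is not a hypothesis of this lemma; it is also unnecessary, since $f(x_t)-f^\star\le F<\infty$ for $t<\tau$ already presupposes that $x_t$ lies in the domain.
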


\begin{proof}
Immediate from Definition~\ref{def:tau} and Lemma~\ref{lem:levelset}.
\end{proof}

\subsubsection{Proof of Lemma~\ref{lem:onestep}}
\label{sec:l3}
\begin{lemma}[One-step descent with same-step correction]
\label{lem:onestep_}
Assume (\ref{eq:neigh}), for every \(t<\tau\),
\begin{equation}
\label{eq:lemma3_eq}
f(x_{t+1}) - f(x_t)
\le
-h\textcolor{cMain}{A_t}
-h\textcolor{cBias}{\langle D_t^{-1}\nabla f(x_t), \mathbb{E}_{t-1}[r_t]\rangle}
-h\textcolor{cMart}{M_t}
+h\textcolor{cSame}{E_t}
+\frac{L}{2}h^2\textcolor{cRem}{\|D_{t+1}^{-1}\tilde g_t\|^2}.
\end{equation}
\end{lemma}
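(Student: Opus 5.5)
The plan is to start from the local quadratic upper model of Lemma~\ref{lem:gen-smooth}, which is legitimate for every \(t<\tau\). For such \(t\), Lemma~\ref{lem:beforetau_grad} gives \(f(x_t)-f^\star\le F\) and hence \(\|\nabla f(x_t)\|\le G\). Lemma~\ref{lem:gen-smooth} then applies on the ball \(\mathcal B(x_t,G/L)\) with \(L=\ell(2G)\). By~\eqref{eq:xx}, clipping gives \(\|\tilde g_t\|\le C\) and the warm start gives \(D_{t+1}\succeq \underline D I\). Under the neighborhood condition~\eqref{eq:neigh} the step satisfies \(\|x_{t+1}-x_t\|\le hC/\underline D\le G/L\), so \(x_{t+1}\in\mathcal B(x_t,G/L)\). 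Taking \(x=x_2=x_t\) and \(x_1=x_{t+1}\) in Lemma~\ref{lem:gen-smooth} and substituting \(x_{t+1}-x_t=-hD_{t+1}^{-1}\tilde g_t\) yields exactly~\eqref{eq:descent}. The last term of~\eqref{eq:descent} is already the remainder \(\frac{L}{2}h^2\|D_{t+1}^{-1}\tilde g_t\|^2\) appearing in the statement.

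It remains to rewrite the cross term exactly, as an algebraic identity with no inequality. I add and subtract \(D_t^{-1}\):
\[
-\langle \nabla f(x_t),D_{t+1}^{-1}\tilde g_t\rangle=-\langle \nabla f(x_t),D_t^{-1}\tilde g_t\rangle+E_t .
\]
Next I substitute \(\tilde g_t=\mu_t+\tilde\xi_t\) with \(\mu_t=\nabla f(x_t)+\mathbb E_{t-1}[r_t]\). Since \(D_t\) is diagonal and symmetric, \(\langle \nabla f(x_t),D_t^{-1}v\rangle=\langle D_t^{-1}\nabla f(x_t),v\rangle\) for any vector \(v\). Expanding gives
\[
\langle \nabla f(x_t),D_t^{-1}\tilde g_t\rangle = A_t+\langle D_t^{-1}\nabla f(x_t),\mathbb E_{t-1}[r_t]\rangle+M_t .
\]
Multiplying by \(-h\) and inserting this into~\eqref{eq:descent} produces~\eqref{eq:lemma3_eq} term by term.

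The only genuinely delicate point is checking that the objects are well defined. First, \(\mu_t\) and \(\tilde\xi_t\) must make sense: \(\mathbb E_{t-1}[r_t]\) is finite because \(\|r_t\|\le\|g_t\|+C\) and \(\mathbb E_{t-1}\|g_t\|^2<\infty\) by Assumption~\ref{ass:noise} together with \(\|\nabla f(x_t)\|\le G\). Second, the descent lemma is used only on \(\{t<\tau\}\), which is \(\mathcal F_{t-1}\)-measurable, so the decomposition can be applied pathwise without any expectation being taken. I would also note that \(D_t\succeq\underline D I\) holds at \(t=0\) because \(y_0=y_{\mathrm{init}}\mathbf 1\), so \(D_t^{-1}\) is well defined. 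Beyond these checks the proof is pure bookkeeping.
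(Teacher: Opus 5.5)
Your proposal is correct and follows essentially the same route as the paper. You apply the local descent lemma at $(x_t,x_{t+1})$ using the pre-$\tau$ gradient bound and the step bound~\eqref{eq:xx}, add and subtract $D_t^{-1}$ to isolate $E_t$, and split $\tilde g_t=\mu_t+\tilde\xi_t$ to read off $A_t$, the bias term and $M_t$. Your extra well-definedness checks ($\mathcal F_{t-1}$-measurability of $\{t<\tau\}$ and $D_0=\underline D I$) are valid and only make explicit what the paper leaves implicit.
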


\begin{proof}
For $t<\tau$, Lemma~\ref{lem:beforetau_grad} and (\ref{eq:xx}) allow us to apply Lemma~\ref{lem:gen-smooth} at $(x_t,x_{t+1})$.
Substituting $x_{t+1}-x_t=-hD_{t+1}^{-1}\tilde g_t$ from Algorithm~\ref{alg:clipped_adagrad} gives
\[
f(x_{t+1})-f(x_t)
\le
-h\inner{\nabla f(x_t)}{D_{t+1}^{-1}\tilde g_t}
+\frac{L}{2}h^2\|D_{t+1}^{-1}\tilde g_t\|^2.
\]
Add and subtract $D_t^{-1}$ in the first-order term:
\[
-\inner{\nabla f(x_t)}{D_{t+1}^{-1}\tilde g_t}
=
-\inner{\nabla f(x_t)}{D_t^{-1}\tilde g_t}+E_t,
\]
where $E_t = \langle \nabla f(x_t), (D_t^{-1} - D_{t+1}^{-1}) \tilde{g}_t \rangle$. Now decompose $\tilde g_t=\mu_t+\tilde\xi_t$ and recall $\mu_t=\nabla f(x_t)+\EE_{t-1}[r_t]$ (from \textbf{Clippied Noise} paragraph of Section~\ref{sec:upper}).
This yields
\[
-\inner{\nabla f(x_t)}{D_t^{-1}\tilde g_t}
=
-A_t
-\inner{D_t^{-1}\nabla f(x_t)}{\EE_{t-1}[r_t]}
-M_t,
\]
which proves the claim.
\end{proof}

Before presenting the proof of Lemma~\ref{lem:summed_descent}, we must sum (\ref{eq:lemma3_eq}) in Lemma~\ref{lem:onestep_} and individually bound each resulting term.

\subsubsection{Bounding $\textcolor{cBias}{\inner{D_t^{-1}\nabla f(x_t)}{\EE_{t-1}[r_t]}}$}

\begin{lemma}[Clipping residual first moment]
\label{lem:clip1}
For all $t<\tau$,
\[
\EE_{t-1}\|r_t\|\le \frac{\EE_{t-1}\|g_t\|^2}{C}\le \frac{V}{C}.
\]
\end{lemma}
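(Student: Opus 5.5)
We need $\EE_{t-1}\|r_t\|\le \EE_{t-1}\|g_t\|^2/C\le V/C$ for $t<\tau$, where $r_t=\tilde g_t-g_t$ and $\tilde g_t=\clip(g_t;C)=\min\{1,C/\|g_t\|\}g_t$. We argue pointwise first, then take conditional expectations, and finally use the localization to bound the second moment.

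\emph{Pointwise bound on the residual.} On the event $\{\|g_t\|\le C\}$ clipping is inactive, so $r_t=0$. On $\{\|g_t\|>C\}$ we have $r_t=(C/\|g_t\|-1)g_t$, hence
\[
\|r_t\|=\|g_t\|-C\le \|g_t\|.
\]
Since $\|g_t\|/C>1$ on this event, we also have $\|g_t\|\le \|g_t\|^2/C$. Combining the two events gives
\[
\|r_t\|\le \|g_t\|\one_{\{\|g_t\|>C\}}\le \frac{\|g_t\|^2}{C}\quad\text{almost surely.}
\]
Taking $\EE_{t-1}$ of both sides yields the first inequality.

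\emph{Conditional second moment.} Expand $\|g_t\|^2=\|\nabla f(x_t)\|^2+2\inner{\nabla f(x_t)}{\xi_t}+\|\xi_t\|^2$. Because $x_t$ is determined by $g_0,\dots,g_{t-1}$, it is $\calF_{t-1}$-measurable, so the cross term has zero conditional mean by Assumption~\ref{ass:noise}. Hence
\[
\EE_{t-1}\|g_t\|^2=\|\nabla f(x_t)\|^2+\EE_{t-1}\|\xi_t\|^2\le \|\nabla f(x_t)\|^2+\sigma^2.
\]
For $t<\tau$, Lemma~\ref{lem:beforetau_grad} gives $\|\nabla f(x_t)\|\le G$. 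The event $\{t<\tau\}$ is $\calF_{t-1}$-measurable (Definition~\ref{def:tau}), so this bound may be used inside the conditional expectation. Therefore $\EE_{t-1}\|g_t\|^2\le G^2+\sigma^2=V$, which is the second inequality.

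The only point needing care is measurability: both $x_t$ and $\one_{\{t<\tau\}}$ must be $\calF_{t-1}$-measurable so that the gradient bound and the zero-mean property can be applied conditionally. Both are guaranteed by the construction of the filtration and of the predictable stopping time $\tau$.
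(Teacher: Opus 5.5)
Your proof is correct and follows the paper's argument. It uses the same pointwise bound $\|r_t\|=(\|g_t\|-C)_+\le\|g_t\|^2/C$, takes $\EE_{t-1}$, and then bounds $\EE_{t-1}\|g_t\|^2=\|\nabla f(x_t)\|^2+\EE_{t-1}\|\xi_t\|^2\le V$ via the localized gradient bound. Your explicit check that $x_t$ and $\one_{\{t<\tau\}}$ are $\mathcal{F}_{t-1}$-measurable just spells out what the paper leaves implicit.
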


\begin{proof}
$\|r_t\|=(\|g_t\|-C)_+\le \|g_t\|^2/C$ implies $\EE_{t-1}\|r_t\|\le \EE_{t-1}\|g_t\|^2/C$.
For $t<\tau$, $\|\nabla f(x_t)\|\le G$ and $\EE_{t-1}\|\xi_t\|^2\le \sigma^2$ give
$\EE_{t-1}\|g_t\|^2=\|\nabla f(x_t)\|^2+\EE_{t-1}\|\xi_t\|^2\le G^2+\sigma^2=V$.
\end{proof}

\begin{lemma}[Bias coupled to $A_t$]
\label{lem:bias-coupled}
For all $t<\tau$,
\[
\textcolor{cBias}{\Big|\inner{D_t^{-1}\nabla f(x_t)}{\EE_{t-1}[r_t]}\Big|}
\le
\frac{1}{4}A_t + \frac{V^2}{\underlineD\,C^2}.
\]
\end{lemma}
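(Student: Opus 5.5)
Define $a\coloneqq D_t^{-1/2}\nabla f(x_t)$ and $b\coloneqq D_t^{-1/2}\EE_{t-1}[r_t]$. The plan is to split the inner product symmetrically through the predictable metric $D_t^{-1}$; this is allowed because $D_t$ is diagonal with positive entries. Then Cauchy--Schwarz gives
\[
\Big|\inner{D_t^{-1}\nabla f(x_t)}{\EE_{t-1}[r_t]}\Big| = |\inner{a}{b}| \le \|a\|\,\|b\| = \sqrt{A_t}\,\|D_t^{-1/2}\EE_{t-1}[r_t]\|,
\]
since $\|a\|^2=\inner{\nabla f(x_t)}{D_t^{-1}\nabla f(x_t)}=A_t$.

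Next we bound the second factor. The warm start gives $D_t\succeq\underline D I$ for every $t$, because $y_t\ge y_{\mathrm{init}}\one$ coordinatewise. Hence $\|D_t^{-1/2}\|\le \underline D^{-1/2}$, and Jensen gives $\|\EE_{t-1}[r_t]\|\le \EE_{t-1}\|r_t\|$. Combining these with Lemma~\ref{lem:clip1}, valid for $t<\tau$, yields
\[
\|D_t^{-1/2}\EE_{t-1}[r_t]\| \le \frac{V}{C\sqrt{\underline D}}.
\]

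Finally we apply Young's inequality in the form $ab\le \frac14 a^2+b^2$, with first factor $\sqrt{A_t}$ and second factor $V/(C\sqrt{\underline D})$. This gives exactly $\frac14 A_t + \frac{V^2}{\underline D C^2}$.

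There is no real obstacle. The only care points are using the symmetric $D_t^{-1/2}$ split, rather than bounding by $\|D_t^{-1}\|$ directly, so that $\sqrt{A_t}$ appears exactly, and noting that the lower bound $D_t\succeq \underline D I$ holds at index $t$ and not only at $t+1$.
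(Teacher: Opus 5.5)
Your proof is correct and follows essentially the paper's argument. It uses Cauchy--Schwarz, the floor $D_t\succeq\underline D I$, Jensen together with Lemma~\ref{lem:clip1}, and Young's inequality with exactly the paper's choice $a=\sqrt{A_t}$, $b=V/(C\sqrt{\underline D})$.

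The only cosmetic difference is where the $\underline D^{-1/2}$ factor goes. The paper puts it on the gradient side via $\|D_t^{-1}\nabla f(x_t)\|^2\le\|D_t^{-1}\|A_t\le A_t/\underline D$, instead of splitting $D_t^{-1}=D_t^{-1/2}D_t^{-1/2}$. So the direct route also yields $\sqrt{A_t}$ exactly, and your symmetric split is a convenience rather than a necessity.
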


\begin{proof}
$\|D_t^{-1}\nabla f(x_t)\|^2=\inner{\nabla f(x_t)}{D_t^{-2}\nabla f(x_t)}
\le \|D_t^{-1}\|\,A_t\le \frac{1}{\underlineD}A_t$.
Then
\[
\Big|\inner{D_t^{-1}\nabla f(x_t)}{\EE_{t-1}[r_t]}\Big|
\le
\|D_t^{-1}\nabla f(x_t)\|\,\|\EE_{t-1}[r_t]\|
\le
\sqrt{\frac{A_t}{\underlineD}}\;\EE_{t-1}\|r_t\|
\le
\sqrt{\frac{A_t}{\underlineD}}\;\frac{V}{C},
\]
and apply $ab\le \frac14 a^2+b^2$ with $a=\sqrt{A_t}$ and $b=\frac{V}{C\sqrt{\underlineD}}$.
\end{proof}

\subsubsection{Bounding \textcolor{cSame}
{$\sum_{t<\tau}E_t$}}
\label{sec:l4}
\begin{lemma}[Deterministic bound on the cumulative same-step correction]
\label{lem:samestep-nl}
For every sample path,
\[
\sum_{t<\tau} |E_t|
\le
\frac{dGC}{\underlineD}.
\]
Consequently,
\[
\sum_{t<\tau} E_t
\le
\frac{dGC}{\underlineD}.
\]
\end{lemma}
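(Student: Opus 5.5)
The plan is to exploit the diagonal structure of the preconditioner and the monotonicity of the accumulator, so that each same-step correction $E_t$ is dominated by a nonnegative, coordinatewise telescoping quantity. No probabilistic argument is needed; everything is pathwise.

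\textbf{Step 1 (coordinatewise expansion).} Since $D_t$ and $D_{t+1}$ are both diagonal, I write
\[
E_t=\sum_{i=1}^d \nabla_i f(x_t)\,\bigl(D_{t,i}^{-1}-D_{t+1,i}^{-1}\bigr)\,\tilde g_{t,i},
\]
where $D_{t,i}=\sqrt{y_{t,i}}+\epsilon_0$.

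\textbf{Step 2 (monotonicity).} Algorithm~\ref{alg:clipped_adagrad} gives $y_{t+1,i}=y_{t,i}+\tilde g_{t,i}^2\ge y_{t,i}$. Hence $D_{t+1,i}\ge D_{t,i}>0$, so each weight $D_{t,i}^{-1}-D_{t+1,i}^{-1}$ is nonnegative. This sign information is what allows the absolute value to be moved onto the other two factors without losing the telescoping structure.

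\textbf{Step 3 (uniform factor bounds).} For $t<\tau$, Lemma~\ref{lem:beforetau_grad} gives $|\nabla_i f(x_t)|\le\|\nabla f(x_t)\|\le G$. Clipping gives $|\tilde g_{t,i}|\le\|\tilde g_t\|\le C$. Combining these with Step 2 yields
\[
|E_t|\le GC\sum_{i=1}^d\bigl(D_{t,i}^{-1}-D_{t+1,i}^{-1}\bigr).
\]

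\textbf{Step 4 (telescoping).} I sum over $t=0,\dots,\tau-1$ and exchange the finite sums. For each coordinate the inner sum collapses to $D_{0,i}^{-1}-D_{\tau,i}^{-1}\le D_{0,i}^{-1}$, using $D_{\tau,i}^{-1}>0$. Because $y_0=y_{\mathrm{init}}\mathbf 1$, every coordinate has $D_{0,i}=\sqrt{y_{\mathrm{init}}}+\epsilon_0=\underline D$. This gives the total $dGC/\underline D$.

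\textbf{Step 5 (conclusion).} The second claim follows at once from $\sum_{t<\tau}E_t\le\sum_{t<\tau}|E_t|$.

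The only point needing care is the bookkeeping in Step 3. The bound $|\nabla_i f(x_t)|\le G$ is only available for $t<\tau$, so the sum must be restricted to $t<\tau$ before the gradient bound is applied. Since $\tau\le T$ is finite, the telescoping in Step 4 then runs over a finite, pathwise-determined range. No conditional expectations are involved, so the bound holds on every sample path. Beyond this, I expect no substantive obstacle.
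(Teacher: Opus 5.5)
Your proposal is correct and follows the paper's proof essentially step for step. You use the coordinatewise expansion, the nonnegativity of $D_{t,i}^{-1}-D_{t+1,i}^{-1}$ from the monotonicity of $y_{t,i}$, the bounds $|\nabla_i f(x_t)|\le G$ for $t<\tau$ and $|\tilde g_{t,i}|\le C$, and the coordinatewise telescoping to $GC\sum_{i=1}^d D_{0,i}^{-1}=dGC/\underline D$, exactly as the paper does.
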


\begin{proof}
Fix a coordinate $i$.
Because $y_{t+1,i}\ge y_{t,i}$, the scalar sequence $(D_{t,i}^{-1})_{t\ge 0}$ is nonincreasing, so
\[
D_{t,i}^{-1}-D_{t+1,i}^{-1}\ge 0.
\]
For $t<\tau$, Lemma~\ref{lem:beforetau_grad} gives $|\nabla_i f(x_t)|\le G$, while clipping gives $|\tilde g_{t,i}|\le C$.
Hence
\[
|E_t|
\le
\sum_{i=1}^d |\nabla_i f(x_t)|\,(D_{t,i}^{-1}-D_{t+1,i}^{-1})\,|\tilde g_{t,i}|
\le
GC\sum_{i=1}^d (D_{t,i}^{-1}-D_{t+1,i}^{-1}).
\]
Summing over $t=0,\dots,\tau-1$ telescopes coordinatewise:
\[
\sum_{t<\tau}|E_t|
\le
GC\sum_{i=1}^d\sum_{t=0}^{\tau-1}(D_{t,i}^{-1}-D_{t+1,i}^{-1})
=
GC\sum_{i=1}^d(D_{0,i}^{-1}-D_{\tau,i}^{-1}).
\]
Finally $D_{0,i}=\sqrt{\yinit}+\epsz=\underlineD$ and $D_{\tau,i}^{-1}\ge 0$, so
\[
\sum_{t<\tau}|E_t|\le GC\sum_{i=1}^d D_{0,i}^{-1} = \frac{dGC}{\underlineD}.
\]
The second claim is immediate from $\sum_{t<\tau}E_t\le \sum_{t<\tau}|E_t|$.
\end{proof}

\subsubsection{Bounding $\sum_{t<\tau}\textcolor{cRem}{\|D_{t+1}^{-1}\tilde g_t\|^2}$}

For coordinate $i$, define
\[
a_t:=\tilde g_{t,i}^2,\qquad s_t:=y_{t,i}+\epsz^2.
\]
Then $s_{t+1}=s_t+a_t$ and $s_t\ge s_0=\yinit+\epsz^2$.

\begin{lemma}[Lagged harmonic bound]
\label{lem:harmonic}
Assume $a_t\le C^2$ for $t=0,\dots,T-1$. Then
\[
\sum_{t=0}^{T-1}\frac{a_t}{s_t}
\le
\log\Big(\frac{s_T}{s_0}\Big) + \frac{C^2}{s_0}.
\]
\end{lemma}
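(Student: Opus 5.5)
The plan is to treat the claim as a scalar statement about a nondecreasing positive sequence $s_0\le s_1\le\dots\le s_T$ with increments $a_t=s_{t+1}-s_t\in[0,C^2]$, and to compare each ratio $a_t/s_t$ with the \emph{non-lagged} ratio $a_t/s_{t+1}$. The non-lagged sum is handled by the standard logarithmic bound. The difference between the lagged and non-lagged sums then telescopes.

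\textbf{Step 1 (non-lagged log bound).} For each $t$, use $1-u\le -\log u$ with $u=s_t/s_{t+1}\in(0,1]$. This gives
\[
\frac{a_t}{s_{t+1}}=1-\frac{s_t}{s_{t+1}}\le \log\frac{s_{t+1}}{s_t}.
\]
Summing over $t=0,\dots,T-1$ yields $\sum_t a_t/s_{t+1}\le \log(s_T/s_0)$.

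\textbf{Step 2 (lagged-minus-non-lagged correction).} Write
\[
\frac{a_t}{s_t}-\frac{a_t}{s_{t+1}}=a_t\Big(\frac1{s_t}-\frac1{s_{t+1}}\Big)\le C^2\Big(\frac1{s_t}-\frac1{s_{t+1}}\Big),
\]
using $a_t\le C^2$ and the fact that $1/s_t$ is nonincreasing. Summing telescopes to $C^2(1/s_0-1/s_T)\le C^2/s_0$.

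Adding Steps 1 and 2 gives the claimed bound $\log(s_T/s_0)+C^2/s_0$. This is the same deterministic telescoping device as in Lemma~\ref{lem:samestep}, now applied to $1/s_t$ rather than to $D_{t,i}^{-1}$.

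\textbf{Main obstacle.} The only delicate point is that the lagged ratio $a_t/s_t$ is not directly dominated by $\log(s_{t+1}/s_t)$ when $a_t$ is large relative to $s_t$. Splitting off the correction in Step 2 is exactly what absorbs this, and the clipping bound $a_t\le C^2$ is essential there. Positivity $s_0=\yinit+\epsz^2>0$ ensures every quantity is well defined.
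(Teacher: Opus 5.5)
Your proof is correct and matches the paper's argument: the paper writes $\frac{a_t}{s_t}=\frac{a_t}{s_{t+1}}+\frac{a_t^2}{s_ts_{t+1}}$ and bounds the first piece by $\log(1+a_t/s_t)\ge a_t/s_{t+1}$. It then telescopes the second piece via $\frac{a_t^2}{s_ts_{t+1}}=a_t\bigl(\frac1{s_t}-\frac1{s_{t+1}}\bigr)\le C^2\bigl(\frac1{s_t}-\frac1{s_{t+1}}\bigr)$, which is exactly your Steps 1 and 2.
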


\begin{proof}
Identity:
$\frac{a_t}{s_t}=\frac{a_t}{s_{t+1}}+\frac{a_t^2}{s_t s_{t+1}}$.
Also $\log(s_{t+1})-\log(s_t)=\log(1+a_t/s_t)\ge \frac{a_t}{s_{t+1}}$, so
$\sum_{t=0}^{T-1}\frac{a_t}{s_{t+1}}\le \log(s_T/s_0)$.

Exact telescoping:
$\frac{1}{s_t}-\frac{1}{s_{t+1}}=\frac{a_t}{s_t s_{t+1}}$ implies
$\frac{a_t^2}{s_t s_{t+1}}=a_t(\frac{1}{s_t}-\frac{1}{s_{t+1}})\le C^2(\frac{1}{s_t}-\frac{1}{s_{t+1}})$.
Summing gives $\sum_{t=0}^{T-1}\frac{a_t^2}{s_t s_{t+1}}\le C^2(\frac{1}{s_0}-\frac{1}{s_T})\le C^2/s_0$.
\end{proof}

\begin{lemma}[Pathwise bound on $\sum_{t<\tau}\|D_t^{-1}\tilde g_t\|^2$]
\label{cor:Dtinvgt}
Let $s_0=\yinit+\epsz^2$. Then almost surely,
\[
\sum_{t<\tau}\textcolor{cRem}{\|D_t^{-1}\tilde g_t\|^2}
\le
H_T
:=
d\log\Big(1+\frac{TC^2}{s_0}\Big)+d\,\frac{C^2}{s_0}.
\]
\end{lemma}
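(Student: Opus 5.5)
The plan is to reduce the vector sum to $d$ scalar sums, one per coordinate, and apply Lemma~\ref{lem:harmonic} to each. Since $D_t$ is diagonal,
\[
\|D_t^{-1}\tilde g_t\|^2=\sum_{i=1}^d \frac{\tilde g_{t,i}^2}{(\sqrt{y_{t,i}}+\epsz)^2}.
\]
The first key step is the elementary inequality $(\sqrt{y_{t,i}}+\epsz)^2 = y_{t,i}+2\epsz\sqrt{y_{t,i}}+\epsz^2 \ge y_{t,i}+\epsz^2 = s_t$. This holds because $y_{t,i}\ge 0$ and $\epsz>0$. It gives the pointwise bound
\[
\|D_t^{-1}\tilde g_t\|^2\le \sum_{i=1}^d \frac{a_t^{(i)}}{s_t^{(i)}},
\]
where $a_t^{(i)}=\tilde g_{t,i}^2$ and $s_t^{(i)}=y_{t,i}+\epsz^2$ are exactly the quantities defined before Lemma~\ref{lem:harmonic}.

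Next, I remove the stopping time. Algorithm~\ref{alg:clipped_adagrad} runs for all $t\le T-1$ regardless of $\tau$, and every summand is nonnegative. Hence $\sum_{t<\tau}\le\sum_{t=0}^{T-1}$ pathwise, so no measurability issue arises; the bound is purely deterministic.

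For each coordinate I then check the hypothesis of Lemma~\ref{lem:harmonic}. Clipping gives $a_t^{(i)}=\tilde g_{t,i}^2\le\|\tilde g_t\|^2\le C^2$. The lemma therefore yields
\[
\sum_{t=0}^{T-1}\frac{a_t^{(i)}}{s_t^{(i)}}\le \log\frac{s_T^{(i)}}{s_0}+\frac{C^2}{s_0}.
\]
To control the logarithm, I note that $s_T^{(i)}=s_0+\sum_{t=0}^{T-1}a_t^{(i)}\le s_0+TC^2$. This gives $\log(s_T^{(i)}/s_0)\le\log(1+TC^2/s_0)$.

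Summing over $i=1,\dots,d$ produces $H_T$ exactly. There is no real obstacle here; the only point needing care is the inequality $D_{t,i}^2\ge s_t$, which is what lets the lagged harmonic lemma, with $s_t$ rather than $s_{t+1}$ in the denominator, apply to $D_t$.

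Finally, because $D_{t+1}^{-1}\preceq D_t^{-1}$ coordinatewise, the same $H_T$ also bounds the remainder term $\sum_{t<\tau}\|D_{t+1}^{-1}\tilde g_t\|^2$ that appears in Lemma~\ref{lem:summed_descent}.
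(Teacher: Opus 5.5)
Your proof is correct and follows the paper's argument: reduce coordinatewise, use $(\sqrt{y_{t,i}}+\epsz)^2\ge y_{t,i}+\epsz^2$, then apply Lemma~\ref{lem:harmonic}. The only cosmetic difference is that you extend the sum from $t<\tau$ to $t\le T-1$ by nonnegativity, whereas the paper applies the harmonic lemma pathwise up to horizon $\tau$ and then uses $\tau\le T$; your closing remark that $D_{t+1}^{-1}\preceq D_t^{-1}$ carries the bound over to $\sum_{t<\tau}\|D_{t+1}^{-1}\tilde g_t\|^2$ is a worthwhile addition, since that is the quantity Lemma~\ref{lem:summed_descent} actually uses and the paper does not spell this step out.
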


\begin{proof}
Fix a coordinate $i$ and sum only over the active times $t=0,\dots,\tau-1$.
For such times, $(\sqrt{y_{t,i}}+\epsz)^2\ge y_{t,i}+\epsz^2=s_t$, hence
\[
\sum_{t<\tau}\frac{\tilde g_{t,i}^2}{(\sqrt{y_{t,i}}+\epsz)^2}
\le
\sum_{t=0}^{\tau-1}\frac{a_t}{s_t}
\le
\log\Big(1+\frac{\tau C^2}{s_0}\Big)+\frac{C^2}{s_0}
\le
\log\Big(1+\frac{TC^2}{s_0}\Big)+\frac{C^2}{s_0},
\]
where we used Lemma~\ref{lem:harmonic} and the bound $\tau\le T$.
Summing over coordinates $i=1,\dots,d$ yields the claim.
\end{proof}

\subsubsection{Bounding $\sum_{t<\tau}\textcolor{cMart}{M_t}$}

\begin{lemma}[Stopped martingale sum has zero mean]
\label{lem:martingale}
Define
\[
M_t := \inner{D_t^{-1}\nabla f(x_t)}{\tilde\xi_t}.
\]
Then $\EE_{t-1}[M_t]=0$ and
\[
\EE\Big[\sum_{t<\tau} M_t\Big]=0.
\]
\end{lemma}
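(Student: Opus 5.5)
The first claim, $\EE_{t-1}[M_t]=0$, follows from predictability. By construction $D_t$ depends only on $\tilde g_0,\dots,\tilde g_{t-1}$, and $x_t$ is a deterministic function of the same quantities together with $x_0$. Hence the vector $D_t^{-1}\nabla f(x_t)$ is $\calF_{t-1}$-measurable. I would pull it out of the conditional expectation:
\[
\EE_{t-1}[M_t]=\inner{D_t^{-1}\nabla f(x_t)}{\EE_{t-1}[\tilde\xi_t]}=0,
\]
using $\EE_{t-1}[\tilde\xi_t]=\EE_{t-1}[\tilde g_t]-\mu_t=0$ from the definition $\mu_t=\EE_{t-1}[\tilde g_t]$ in the \textbf{Clipped Noise} paragraph. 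For this to be legitimate, $M_t$ must be integrable. This is where I would take care, since $\nabla f(x_t)$ need not be bounded after the trajectory leaves the level set. The clean fix is to work with the stopped increments $M_t\one_{\{t<\tau\}}$. On $\{t<\tau\}$, Lemma~\ref{lem:beforetau_grad} gives $\|\nabla f(x_t)\|\le G$, and we also have $\|D_t^{-1}\|\le 1/\underlineD$ and $\|\tilde\xi_t\|\le 2C$. So $|M_t\one_{\{t<\tau\}}|\le 2GC/\underlineD$, which is bounded. For the unstopped statement, note that $\|\tilde \xi_t\|\le 2C$ is bounded and $D_t^{-1}\nabla f(x_t)$ is $\calF_{t-1}$-measurable. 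Conditionally on $\calF_{t-1}$ the expectation is therefore finite, so the conditional identity holds without any global integrability assumption.

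For the stopped sum, I would write $\sum_{t<\tau}M_t=\sum_{t=0}^{T-1}\one_{\{t<\tau\}}M_t$, which is a finite sum of at most $T$ terms. By Definition~\ref{def:tau}, $\one_{\{t<\tau\}}$ is $\calF_{t-1}$-measurable. Then, by the tower property, for each $t$,
\[
\EE\big[\one_{\{t<\tau\}}M_t\big]=\EE\big[\one_{\{t<\tau\}}\EE_{t-1}[M_t]\big]=0,
\]
where each term is integrable by the bound $2GC/\underlineD$. Summing the $T$ zero terms by linearity gives $\EE[\sum_{t<\tau}M_t]=0$.

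The only subtle step is verifying the predictability of $\one_{\{t<\tau\}}$. The event $\{t<\tau\}$ means $f(x_s)-f^\star\le F$ for all $s\le t$, and $x_s$ for $s\le t$ is determined by $g_0,\dots,g_{t-1}$. I would take this as given from Definition~\ref{def:tau}. No optional stopping theorem is needed, because $\tau\le T$ is bounded and the sum is finite.
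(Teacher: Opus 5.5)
Your proof is correct and follows the paper's own argument. Like the paper, you use the $\calF_{t-1}$-measurability of $D_t^{-1}\nabla f(x_t)$ and of $\one_{\{t<\tau\}}$, pull these out against $\EE_{t-1}[\tilde\xi_t]=0$, apply the tower property, and sum over $t=0,\dots,T-1$; your explicit integrability check via the bound $|M_t\one_{\{t<\tau\}}|\le 2GC/\underline{D}$ is a welcome tightening that the paper leaves implicit.
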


\begin{proof}
$D_t^{-1}\nabla f(x_t)$ is $\calF_{t-1}$-measurable and $\EE_{t-1}[\tilde\xi_t]=0$, hence $\EE_{t-1}[M_t]=0$.
Because $\one_{\{t<\tau\}}$ is $\calF_{t-1}$-measurable (Definition~\ref{def:tau}),
$\EE[M_t\one_{\{t<\tau\}}]=\EE[\one_{\{t<\tau\}}\EE_{t-1}M_t]=0$.
Summing over $t=0,\dots,T-1$ yields the claim.
\end{proof}

\begin{lemma}[Freedman tradeoff for the stopped clipped-noise martingale]
\label{lem:freedman-tradeoff}
For every $\delta\in(0,1)$, with probability at least $1-\delta$,
\[
-\sum_{t<\tau} M_t
\le
\frac14\sum_{t<\tau} A_t + \Gamma\log\!\frac{1}{\delta},
\qquad
\Gamma := \frac{2V + \frac{2}{3}CG}{\underlineD}.
\]
\end{lemma}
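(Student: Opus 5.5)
The plan is to apply a Freedman/Bernstein-type exponential supermartingale inequality to the stopped increments $X_t \coloneqq -M_t\one_{\{t<\tau\}}$, and then trade the variance term against $\sum_{t<\tau}A_t$. The exponential parameter will be chosen to depend only on deterministic constants, so no union bound over a grid is needed.

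\textbf{Step 1 (martingale structure).} By Lemma~\ref{lem:martingale} and the $\calF_{t-1}$-measurability of $\one_{\{t<\tau\}}$, we have $\EE_{t-1}[X_t]=0$.

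\textbf{Step 2 (almost-sure bound on the increments).} For $t<\tau$, Lemma~\ref{lem:beforetau_grad} and $D_t\succeq \underlineD I$ give
\[
|X_t|\le \|D_t^{-1}\nabla f(x_t)\|\,\|\tilde\xi_t\|\le \frac{G}{\underlineD}\cdot 2C = b .
\]

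\textbf{Step 3 (conditional variance).} We have
\[
\EE_{t-1}[X_t^2]\le \|D_t^{-1}\nabla f(x_t)\|^2\,\EE_{t-1}\|\tilde\xi_t\|^2 .
\]
Two ingredients control this. First, $\|D_t^{-1}\nabla f(x_t)\|^2\le A_t/\underlineD$, exactly as in the proof of Lemma~\ref{lem:bias-coupled}. Second, $\EE_{t-1}\|\tilde\xi_t\|^2\le \EE_{t-1}\|\tilde g_t\|^2\le \EE_{t-1}\|g_t\|^2\le V$, since clipping is a contraction toward $0$ and centering reduces the second moment. Together these give $\EE_{t-1}[X_t^2]\le V A_t\one_{\{t<\tau\}}/\underlineD$.

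\textbf{Step 4 (exponential supermartingale).} For $\lambda\in(0,3/b)$, the Bernstein mgf bound is
\[
\EE_{t-1}e^{\lambda X_t}\le \exp\!\Big(\frac{\lambda^2\,\EE_{t-1}X_t^2}{2(1-\lambda b/3)}\Big).
\]
Hence
\[
Z_k=\exp\!\Big(\lambda\sum_{t<k}X_t-\frac{\lambda^2}{2(1-\lambda b/3)}\cdot\frac{V}{\underlineD}\sum_{t<k}A_t\one_{\{t<\tau\}}\Big)
\]
is a nonnegative supermartingale with $Z_0=1$. Markov's inequality applied to $Z_T$ then gives, with probability at least $1-\delta$,
\[
-\sum_{t<\tau}M_t\le \frac{\lambda V}{2\underlineD(1-\lambda b/3)}\sum_{t<\tau}A_t+\frac{\log(1/\delta)}{\lambda}.
\]

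\textbf{Step 5 (choice of $\lambda$; the main obstacle).} The delicate step is choosing a deterministic $\lambda$ that makes the coefficient of $\sum A_t$ at most $1/4$ while giving $1/\lambda\le\Gamma$. The first thing to try is
\[
\lambda=\frac{\underlineD}{2V+\tfrac23 CG},
\]
so that $1/\lambda=\Gamma$ exactly. Then $\lambda b/3=\frac{2CG/3}{2V+2CG/3}<1$, hence
\[
1-\frac{\lambda b}{3}=\frac{2V}{2V+\tfrac23CG}.
\]
The coefficient becomes
\[
\frac{\lambda V}{2\underlineD}\cdot\frac{2V+\tfrac23CG}{2V}=\frac{1}{4}\cdot\frac{\lambda(2V+\tfrac23CG)}{\underlineD}=\frac14 ,
\]
which is exactly the required bound. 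The computation must be matched exactly, since that is where the constants $2V$ and $\tfrac23 CG$ in $\Gamma$ come from. If the factor-$2$ bound $\|\tilde\xi_t\|\le 2C$ makes the constants fail to match, I would instead use $\EE_{t-1}\|\tilde\xi_t\|^2\le V$ together with the cruder increment bound from Step 2, and absorb any slack into the $2V$ term.
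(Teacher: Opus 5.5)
Your proposal is correct and matches the paper's proof: you use the same stopped increments $X_t=-M_t\one_{\{t<\tau\}}$, the same bounds $|X_t|\le b=2CG/\underlineD$ and $\EE_{t-1}X_t^2\le (V/\underlineD)A_t\one_{\{t<\tau\}}$, and the same deterministic choice $\lambda=\underlineD/(2V+\frac23 CG)$, which makes the coefficient exactly $\frac14$ and $1/\lambda=\Gamma$. You also make explicit the exponential supermartingale the paper only calls standard, which is valid because $A_t\one_{\{t<\tau\}}$ is $\calF_{t-1}$-measurable; your closing fallback is unnecessary, since the constants match exactly.
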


\begin{proof}
Define the stopped martingale-difference sequence
\[
X_t := -M_t\one_{\{t<\tau\}}.
\]
Then $\EE_{t-1}[X_t]=0$ because $\one_{\{t<\tau\}}$ is $\calF_{t-1}$-measurable and Lemma~\ref{lem:martingale} gives $\EE_{t-1}[M_t]=0$.
Also,
\[
|X_t|
\le
\one_{\{t<\tau\}}\|D_t^{-1}\nabla f(x_t)\|\,\|\tilde\xi_t\|
\le
\frac{2CG}{\underlineD}
\;=:\; b.
\]
Next,
\[
\EE_{t-1}[X_t^2]
\le
\one_{\{t<\tau\}}\|D_t^{-1}\nabla f(x_t)\|^2\,\EE_{t-1}\|\tilde\xi_t\|^2.
\]
Since $\EE_{t-1}\|\tilde\xi_t\|^2 \le \EE_{t-1}\|\tilde g_t\|^2 \le \EE_{t-1}\|g_t\|^2 \le V$ and
$\|D_t^{-1}\nabla f(x_t)\|^2\le A_t/\underlineD$ by the estimate used in Lemma~\ref{lem:bias-coupled}, we obtain
\[
\EE_{t-1}[X_t^2]
\le
\frac{V}{\underlineD}A_t\one_{\{t<\tau\}}.
\]
A standard Bernstein-Freedman exponential supermartingale bound therefore yields, for every $\lambda\in(0,3/b)$,
\[
\PP\!\left(
\sum_{t=0}^{T-1} X_t
>
\frac{\lambda}{2(1-\lambda b/3)}\frac{V}{\underlineD}\sum_{t<\tau}A_t
+
\frac{\log(1/\delta)}{\lambda}
\right)
\le
\delta.
\]
Choose
\[
\lambda := \frac{\underlineD}{2V+\frac{2}{3}CG}.
\]
Then $\lambda<3/b$ and
\[
\frac{\lambda}{2(1-\lambda b/3)}\frac{V}{\underlineD} = \frac14,
\qquad
\frac{1}{\lambda} = \frac{2V+\frac{2}{3}CG}{\underlineD} = \Gamma.
\]
Substituting this choice into the previous display gives the claim.
\end{proof}

\subsubsection{High probability upper bound on $D_t$}
\label{sec:hpbbb}
\begin{lemma}[High-probability bound on $D_t$ via the total stopped energy]
\label{lem:Dhp}
Define the total stopped clipped-gradient energy
\[
\widetilde Y_T \, := \, \sum_{t=0}^{T-1}\|\tilde g_t\|^2\,\one_{\{t<\tau\}}.
\]
Then for every $\delta\in(0,1)$, with probability at least $1-\delta$,
\[
\widetilde Y_T
\le
TV + \sqrt{2TC^2V\log\!\frac{1}{\delta}} + \frac{2C^2}{3}\log\!\frac{1}{\delta}
\;=:\; B_{T,\delta}.
\]
Consequently, on the intersection $\{\tau=T\}\cap\{\widetilde Y_T\le B_{T,\delta}\}$,
\[
\max_{t\le T}\|D_t\|
\le
\sqrt{\yinit + B_{T,\delta}} + \epsz
\;=:\; \overlineD_{\delta}.
\]
\end{lemma}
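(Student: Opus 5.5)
The plan is to apply a Freedman/Bernstein-type concentration inequality to the centered stopped energies and then translate the resulting energy bound into a bound on the diagonal preconditioner. Set
\[
Z_t \coloneqq \|\tilde g_t\|^2\one_{\{t<\tau\}},\qquad W_t\coloneqq Z_t-\EE_{t-1}[Z_t].
\]
Since $\one_{\{t<\tau\}}$ is $\calF_{t-1}$-measurable (Definition~\ref{def:tau}), $(W_t)$ is a martingale-difference sequence with respect to $(\calF_t)$.

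\textbf{Step 1 (mean).} For $t<\tau$, clipping is a contraction toward the origin, so $\|\tilde g_t\|\le\|g_t\|$. Hence, exactly as in Lemma~\ref{lem:clip1},
\[
\EE_{t-1}[Z_t]\le \one_{\{t<\tau\}}\EE_{t-1}\|g_t\|^2\le V .
\]
Summing gives $\sum_t \EE_{t-1}[Z_t]\le TV$ deterministically.

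\textbf{Step 2 (range and variance).} Since $0\le Z_t\le C^2$, we get $W_t\le C^2$, which supplies the one-sided range constant $b=C^2$ for Freedman's inequality. For the variance,
\[
\EE_{t-1}[W_t^2]\le \EE_{t-1}[Z_t^2]\le C^2\,\EE_{t-1}[Z_t]\le C^2V,
\]
so the predictable quadratic variation is bounded deterministically by $TC^2V$.

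\textbf{Step 3 (concentration).} Freedman's inequality with a deterministic variance bound yields, with probability at least $1-\delta$,
\[
\sum_t W_t\le \sqrt{2TC^2V\log(1/\delta)}+\tfrac{2C^2}{3}\log(1/\delta).
\]
This is the Bernstein form: solving $x^2/(2(v+bx/3))=\log(1/\delta)$ gives $x\le\sqrt{2v\log(1/\delta)}+\tfrac{2b}{3}\log(1/\delta)$. Adding Step 1 then gives $\widetilde Y_T\le B_{T,\delta}$.

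\textbf{Step 4 (preconditioner bound).} On $\{\tau=T\}$ all indicators equal one. For every $t\le T$ and every coordinate $i$,
\[
y_{t,i}=\yinit+\sum_{s<t}\tilde g_{s,i}^2\le \yinit+\sum_{s<T}\|\tilde g_s\|^2=\yinit+\widetilde Y_T\le \yinit+B_{T,\delta}.
\]
Since $D_t$ is diagonal, $\|D_t\|=\max_i(\sqrt{y_{t,i}}+\epsz)\le\overlineD_\delta$.

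The only delicate point is Step 3. We must make sure the correct one-sided version of Freedman is used, namely an upper-tail bound requiring only $W_t\le b$. We also need the variance proxy to be deterministic, so that no union bound over the variance level is required. This holds because the stopped conditional second moments are uniformly bounded by $C^2V$ thanks to the predictable indicator. If only a two-sided version is available, $|W_t|\le C^2$ still holds, since $Z_t$ and $\EE_{t-1}Z_t$ both lie in $[0,C^2]$, so the same constants go through.
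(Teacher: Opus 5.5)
Your proposal is correct and matches the paper's proof essentially step for step. You use the same centered stopped energies with the $\calF_{t-1}$-measurable indicator, the conditional-mean bound $V$ via $\|\tilde g_t\|\le\|g_t\|$, and the variance bound $C^2V$ obtained from $0\le Z_t\le C^2$. You then apply Freedman with the deterministic proxy $TC^2V$ and range $C^2$, and finish with coordinatewise monotonicity of $y_t$ on $\{\tau=T\}$ to bound $\|D_t\|$. Your remark that the one-sided increment bound suffices (and that $|W_t|\le C^2$ also holds) is only a minor refinement of the paper's use of the two-sided bound.
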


\begin{proof}
Set
\[
X_t := \|\tilde g_t\|^2\one_{\{t<\tau\}},
\qquad
Z_t := X_t - \EE_{t-1}[X_t].
\]
Because $\one_{\{t<\tau\}}$ is $\calF_{t-1}$-measurable and $\|\tilde g_t\|\le C$, we have $0\le X_t\le C^2$, hence $|Z_t|\le C^2$.
Also, on $\{t<\tau\}$ we have $\|\nabla f(x_t)\|\le G$ by Lemma~\ref{lem:beforetau_grad}, so
\[
\EE_{t-1}[X_t]
\le
\EE_{t-1}[\|g_t\|^2\one_{\{t<\tau\}}]
=
\one_{\{t<\tau\}}\EE_{t-1}\|g_t\|^2
\le
\one_{\{t<\tau\}}V
\le
V.
\]
Moreover,
\[
\EE_{t-1}[Z_t^2]
\le
\EE_{t-1}[X_t^2]
\le
C^2\EE_{t-1}[X_t]
\le
C^2V.
\]
Thus the predictable quadratic variation of $\sum_{s=0}^{t} Z_s$ is bounded by $TC^2V$.
Applying Freedman's inequality to the martingale-difference sequence $(Z_t)$ with increment bound $C^2$ gives, for every $\delta\in(0,1)$,
\[
\PP\!\left(\sum_{t=0}^{T-1} Z_t > \sqrt{2TC^2V\log\!\frac{1}{\delta}} + \frac{2C^2}{3}\log\!\frac{1}{\delta}\right)
\le
\delta.
\]
Since $\sum_{t=0}^{T-1}\EE_{t-1}[X_t]\le TV$, the displayed bound implies
\[
\widetilde Y_T = \sum_{t=0}^{T-1}X_t
\le
TV + \sqrt{2TC^2V\log\!\frac{1}{\delta}} + \frac{2C^2}{3}\log\!\frac{1}{\delta}
\]
with probability at least $1-\delta$.
Finally, on $\{\tau=T\}$ we have
\[
y_{T,i} = \yinit + \sum_{t=0}^{T-1}\tilde g_{t,i}^2 \le \yinit + \sum_{t=0}^{T-1}\|\tilde g_t\|^2 = \yinit + \widetilde Y_T,
\]
and $y_t$ is coordinatewise nondecreasing, so $\max_{t\le T}y_{t,i}\le y_{T,i}$. Therefore on $\{\tau=T\}\cap\{\widetilde Y_T\le B_{T,\delta}\}$,
\[
\max_{t\le T}D_{t,i}
\le
\sqrt{\yinit+B_{T,\delta}}+\epsz
\qquad \text{for every } i,
\]
which yields the claimed operator-norm bound.
\end{proof}

\subsubsection{Proof of Lemma~\ref{lem:summed_descent}}
\label{sec:l5}
\begin{lemma}[Summed descent inequality up to \(\tau\)]
\label{lem:summed_descent_}
Assume the neighborhood condition (\ref{eq:neigh}) holds. Then, for every $\delta \in (0,1)$, with probability at least $1-\delta$, we have
\begin{equation}
\label{eq:3_}
f(x_\tau)-f^\star + \frac{h}{2}\sum_{t<\tau} \textcolor{cMain}{A_t}
\le
\Delta_0
+ \underbrace{\textcolor{cBias}{\frac{hT V^2}{\underline D C^2}}}_{\mathclap{\substack{\text{clipping-residual}\\\text{contribution}}}}
+ \quad 
\underbrace{\textcolor{cRem}{\frac{L}{2} h^2 H_T}}_{\mathclap{\substack{\text{generalized}\\\text{smoothness remainder}}}}
+ \quad 
\underbrace{h \textcolor{cSame}{\frac{dGC}{\underline D}}}_{\mathclap{\substack{\text{same-step}\\ \text{correction}}}}
+
h\textcolor{cMart}{\Gamma\log(1/\delta)},
\end{equation}
, where $H_T = d\log\left(1+\frac{TC^2}{y_{\mathrm{init}} + \varepsilon_0^2}\right) + d\frac{C^2}{y_{\mathrm{init}} + \varepsilon_0^2}$ and $\Gamma = \frac{2V+ \frac{2}{3}CG}{\underline D}$.
\end{lemma}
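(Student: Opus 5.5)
The plan is to sum the one-step inequality of Lemma~\ref{lem:onestep_} over $t=0,\dots,\tau-1$ and bound each of the four error sums with the auxiliary lemmas already proved. First, I will telescope the left-hand side. Since $\tau$ is predictable and each step $t<\tau$ satisfies the hypotheses of Lemma~\ref{lem:onestep_} under (\ref{eq:neigh}), summing gives
\[
f(x_\tau)-f(x_0)\le -h\sum_{t<\tau}A_t - h\sum_{t<\tau}\inner{D_t^{-1}\nabla f(x_t)}{\EE_{t-1}[r_t]} - h\sum_{t<\tau}M_t + h\sum_{t<\tau}E_t + \frac{L}{2}h^2\sum_{t<\tau}\|D_{t+1}^{-1}\tilde g_t\|^2 .
\]
Subtracting $f^\star$ turns $f(x_0)-f^\star$ into $\Delta_0$. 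Note that $\Delta_0$ here should be read as $f(x_0)-f^\star$, which is at most $F$.

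Next come the deterministic bounds. For the bias term, Lemma~\ref{lem:bias-coupled} gives a contribution of at most $\frac14\sum_{t<\tau}A_t+\tau V^2/(\underlineD C^2)$. Using $\tau\le T$, this becomes $\frac{h}{4}\sum A_t + hTV^2/(\underlineD C^2)$. For the same-step correction, Lemma~\ref{lem:samestep-nl} gives $h\,dGC/\underlineD$ pathwise.

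For the quadratic remainder, Lemma~\ref{cor:Dtinvgt} is stated with $D_t^{-1}$, but the remainder involves $D_{t+1}^{-1}$. Since $y_{t+1}\ge y_t$ coordinatewise, we have $D_{t+1}^{-1}\preceq D_t^{-1}$, so $\|D_{t+1}^{-1}\tilde g_t\|^2\le\|D_t^{-1}\tilde g_t\|^2$. The sum is therefore bounded by $H_T$ almost surely, with $s_0=\yinit+\epsz^2$. This matches the $H_T$ in the statement and yields $\frac L2h^2H_T$.

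The stochastic term $-h\sum_{t<\tau}M_t$ is the only place where probability enters. Lemma~\ref{lem:freedman-tradeoff} gives, on an event of probability at least $1-\delta$, the bound $\frac h4\sum_{t<\tau}A_t+h\Gamma\log(1/\delta)$. Everything else holds pathwise, so no union bound is needed.

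Collecting terms, the right-hand side contains $-h\sum A_t+\frac h4\sum A_t+\frac h4\sum A_t=-\frac h2\sum A_t$. Moving this to the left gives exactly (\ref{eq:3_}).

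I expect the main point of care to be the martingale step. One must check that the stopped increments $X_t=-M_t\one_{\{t<\tau\}}$ have a conditional variance proxy proportional to $A_t$. This comes from $\|D_t^{-1}\nabla f(x_t)\|^2\le A_t/\underlineD$, which holds because $D_t\succeq\underlineD I$. One must also check that the Freedman bound remains valid when the variance proxy is random rather than deterministic. Lemma~\ref{lem:freedman-tradeoff} already handles this through the exponential supermartingale with a fixed $\lambda$, so I would simply invoke it.

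A secondary subtlety is that Lemma~\ref{lem:onestep_} needs $\|\nabla f(x_t)\|\le G$ and a step length of at most $G/L$ for every $t<\tau$. Both are supplied by Lemma~\ref{lem:beforetau_grad} and (\ref{eq:xx}). Finally, the endpoint term $f(x_\tau)$ requires no level-set control, because only the inequality up to step $\tau$ is being summed.
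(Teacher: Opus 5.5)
Your proposal is correct and follows the paper's proof: you sum the one-step inequality over $t<\tau$, bound the bias, same-step, remainder and martingale terms with the same auxiliary lemmas, and absorb two quarters of $\sum_{t<\tau}A_t$ into the left-hand side. You also use $D_{t+1}^{-1}\preceq D_t^{-1}$ to transfer the pathwise bound from $\|D_t^{-1}\tilde g_t\|^2$ to $\|D_{t+1}^{-1}\tilde g_t\|^2$, which makes explicit a step the paper leaves implicit when it cites the harmonic lemma directly.
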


\begin{proof}
Summing (\ref{eq:xx}) for $t<\tau$ from Lemma~\ref{lem:onestep_} we get
\begin{equation}
\label{eq:temp1}
f(x_\tau) - f(x_0) \le -h \sum_{t<\tau} A_t - h \sum_{t<\tau} \textcolor{cBias}{\langle D_t^{-1}\nabla f(x_t), \mathbb{E}_{t-1}[r_t]\rangle} - h \sum_{t<\tau} \textcolor{cMart}{M_t} + h \sum_{t<\tau} \textcolor{cSame}{E_t} + \frac{L}{2} h^2 \sum_{t<\tau} \textcolor{cRem}{\|D_{t+1}^{-1}\tilde{g}_t\|^2}
\end{equation}

We bound the four error terms on the right-hand side separately:
\begin{itemize}
    \item \textcolor{cBias}{\textbf{Clipping Bias:}} Using Lemma~\ref{lem:bias-coupled}, $-h \sum_{t<\tau} \langle D_t^{-1}\nabla f(x_t), \mathbb{E}_{t-1}[r_t]\rangle \le \frac{h}{4} \sum_{t<\tau} A_t + \frac{h T V^2}{\underline{D} C^2}$.
    \item \textcolor{cSame}{\textbf{Same-Step Correction:}} Using Lemma~\ref{lem:samestep-nl}, $h\sum_{t<\tau} E_t \le \frac{h d GC}{\underline{D}}$.  
    \item \textcolor{cRem}{\textbf{Smoothness Remainder:}} Using Lemma~\ref{lem:harmonic}, $\frac{L}{2}h^2 \sum_{t<\tau} \|D_{t+1}^{-1}\tilde{g}_t\|^2 \le \frac{L^\circ}{2} h^2 H_T$, where $H_T = d \log(1 + \frac{T C^2}{y_{\mathrm{init}} + \epsilon_0^2}) + d \frac{C^2}{y_{\mathrm{init}} + \epsilon_0^2}$.
    \item \textcolor{cMart}{\textbf{Martingale Fluctuation:}} Using Lemma~\ref{lem:freedman-tradeoff}, $-h \sum_{t<\tau} M_t \le \frac{h}{4} \sum_{t<\tau} A_t + h \Gamma \log(1/\delta)$, where $\Gamma = \frac{2V + \frac{2}{3}C G}{\underline{D}}$.
\end{itemize}

Substituting these bounds in (\ref{eq:temp1}) and rearranging, we obtain (\ref{eq:3_}):

$$f(x_\tau) - f^* + \frac{h}{2} \sum_{t<\tau} A_t \le \Delta_0 + R_{T,\delta}^{NL}(F)$$

where $R_{T,\delta}^{NL}(F) := \frac{h T V^2}{\underline{D} C^2} + \frac{L}{2} h^2 H_T + h \Gamma \log(1/\delta) + h \frac{d GC}{\underline{D}}$.
   
\end{proof}

Now, we will prove Theorem~\ref{thm:main}.

\subsection{Proof of Theorem~\ref{thm:main}}
\label{sec:theorem1proof}
\begin{theorem}[True finite-horizon Euclidean stationarity for clipped same-step AdaGrad]
\label{thm:main_}
Suppose Assumptions~\ref{ass:1},~\ref{ass:gen}, and~\ref{ass:noise} holds. Let
\[
\Delta_0 \coloneqq f(x_0)-f^\star > 0,
\quad
F^\circ \coloneqq 2\Delta_0,
\quad
G^\circ \coloneqq G(F^\circ),
\quad
L^\circ \coloneqq \ell(2G^\circ),
\quad
V^\circ \coloneqq (G^\circ)^2+\sigma^2.
\]
Fix \(\kappa>0\), choose \((h,C)\) from~\eqref{eq:calib}, and set $y_{\mathrm{init}} \coloneqq C^2,$ $
\epsilon_0 \coloneqq C$. Suppose moreover that
\begin{equation}
\label{eq:temp_3}
\eta \le
\min\left\{
\frac{2G^\circ}{L^\circ},
\sqrt{\frac{\Delta_0}{4L^\circ d}},
\frac{\Delta_0\kappa}{8V^\circ},
\frac{\Delta_0\kappa^3}{4(V^\circ)^2},
\frac{3\Delta_0}{8G^\circ},
\frac{\Delta_0}{dG^\circ}
\right\},
\end{equation} 
then the neighborhood condition (\ref{eq:neigh}) holds at $F^\circ$. Then, with probability at least \(1-\delta\),
\[
f(x_t)-f^\star \le 2\Delta_0
\qquad \text{for all } t=0,\dots,T,
\]
and
\[
    \frac{1}{T}\!\sum_{t=0}^{T-1}\!\|\nabla f(x_t)\|^2\!
    \le\!
    \frac{4\Delta_0\sqrt{\Lambda_{T,\delta}}}{\eta\sqrt{T}}
    \!\!\left[\!
        \!\left(\!
        V^\circ
        \!+\!
        \kappa\sqrt{2V^\circ\log\frac{2}{\delta}}\,T^{-1/6}
        \!+\!
        \kappa^2\!\!\left(\!1\!+\!\frac{2}{3}\log\frac{2}{\delta}\!\right)\!T^{-1/3}\!
        \right)^{\!\!1/2}
        \!\!\!\!\!\!\!+
        \!\kappa T^{-1/6}
    \!\right]\!\!.
\]
Consequently, for fixed problem parameters,
\[
\frac{1}{T}\sum_{t=0}^{T-1}\|\nabla f(x_t)\|^2 \le \epsilon
\qquad\Longrightarrow\qquad
T = \widetilde{\mathcal O}(\epsilon^{-2}).
\]
\end{theorem}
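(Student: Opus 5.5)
The plan is to run the localization argument at the anchored level $F=F^\circ=2\Delta_0$, so that $G=G^\circ$, $L=L^\circ$, $V=V^\circ$ throughout. With $y_{\mathrm{init}}=\epsilon_0^2=C^2$ we have $\underline D=2C$ and $s_0=y_{\mathrm{init}}+\epsilon_0^2=2C^2$.

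\emph{Step 1: neighborhood condition.} Check \eqref{eq:neigh} directly: $hC/\underline D=h/2\le \eta/2\le G^\circ/L^\circ$ by the first constraint in \eqref{eq:temp_3}, since $\Lambda_{T,\delta}\ge 1$. Lemmas~\ref{lem:onestep} and~\ref{lem:summed_descent} are then available.

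\emph{Step 2: bound the residual in Lemma~\ref{lem:summed_descent}.} Apply the lemma with confidence $\delta/2$, and estimate each term of $R$.
\begin{itemize}
\item Clipping residual: $hTV^2/(2C^3)=hV^2/(2\kappa^3)\le \eta V^2/(2\kappa^3)\le \Delta_0/8$.
\item Remainder: $H_T=d\log(1+T/2)+d/2\le d\,\Lambda_{T,\delta}$. Since $h^2=\eta^2/\Lambda_{T,\delta}$, the logarithm cancels, giving $\frac{L}{2}h^2H_T\le L\eta^2 d/2\le \Delta_0/8$.
\item Same-step term: it collapses to $hdG/2\le \Delta_0/2$, using $\eta\le\Delta_0/(dG)$.
\item Martingale term: $h\Gamma\log(2/\delta)=h\frac{2V+\frac23 CG}{2C}\log\frac2\delta$.
\end{itemize}
The martingale term is where I expect the main obstacle. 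The $V/C$ part is fine, since $h\log(2/\delta)\le\eta$ because $\sqrt{\Lambda_{T,\delta}}\ge\log(2/\delta)$, and it is bounded using $\eta\le\Delta_0\kappa/(8V)$ together with $C\ge\kappa$. The $\frac13 G\,h\log\frac2\delta\le \eta G/3$ part needs $\eta\le 3\Delta_0/(8G)$. I will have to allocate the constants so that the total stays at most $\Delta_0$. If the stated constants do not quite sum to $\le\Delta_0$, I would tighten the Young split or rescale, because that total is the crux of the argument.

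\emph{Step 3: nonexit.} On the event of Step 2, $f(x_\tau)-f^\star\le\Delta_0+R<2\Delta_0=F^\circ$. If $\tau<T$, then by definition $f(x_\tau)-f^\star>F^\circ$, a contradiction. Hence $\tau=T$, and the bound holds at every $t$, since $\tau$ is the first exceedance.

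\emph{Step 4: Euclidean conversion.} Intersect with the event of Lemma~\ref{lem:HPD} at confidence $\delta/2$; a union bound gives total probability $1-\delta$. On this event $\frac h2\sum_{t<T}A_t\le 2\Delta_0$, so by the Remark after Lemma~\ref{lem:HPD},
\[
\sum_t\|\nabla f(x_t)\|^2\le \overline D_\delta\,\frac{4\Delta_0}{h}.
\]
Now expand $\overline D_\delta=\sqrt{C^2+B_{T,\delta/2}}+C$ with $C=\kappa T^{1/3}$, factoring out $\sqrt T$:
\[
\frac{\overline D_\delta}{\sqrt T}=\Bigl(V+\kappa\sqrt{2V\log\tfrac2\delta}\,T^{-1/6}+\kappa^2\bigl(1+\tfrac23\log\tfrac2\delta\bigr)T^{-1/3}\Bigr)^{1/2}+\kappa T^{-1/6}.
\]
Dividing by $T$ and substituting $h=\eta/\sqrt{\Lambda_{T,\delta}}$ yields exactly the displayed bound. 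The complexity claim follows because the right-hand side is $\mathcal O(\sqrt{\Lambda_{T,\delta}/T})$, so $T=\widetilde{\mathcal O}(\epsilon^{-2})$.
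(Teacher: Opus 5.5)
Your proposal is correct and follows the paper's proof essentially step for step. Both arguments check the neighborhood condition via $hC/\underline D=h/2\le\eta/2$ and apply Lemma~\ref{lem:summed_descent} at confidence $\delta/2$ with the same term-by-term bounds. Both then rule out exit from the level set, take a union bound with Lemma~\ref{lem:HPD} at $\delta/2$, and expand $\overline D_\delta$ the same way.

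Your hedge in Step 2 is unnecessary. The budgets are $\Delta_0/8$ for the bias, $\Delta_0/8$ for the remainder, $\Delta_0/8+\Delta_0/8$ for the two martingale pieces, and $\Delta_0/2$ for the same-step term, which sum to exactly $\Delta_0$. This gives only $f(x_\tau)-f^\star\le 2\Delta_0$, not the strict inequality you wrote. That still suffices, because leaving the level set requires $f(x_t)-f^\star>F^\circ$.
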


\begin{proof}
% \paragraph{Step 1: Fixing parameters and localized descent}

We define the predictable level-set stopping time $\tau := \min\{t \le T: f(x_t) - f^* > F^\circ\} \wedge T$, where $F^\circ := 2\Delta_0$. For all $t < \tau$, the trajectory remains in the generalized-smoothness neighborhood, allowing us to apply the local descent inequality with local constants $G^\circ = G(F^\circ)$, $L^\circ = \ell(2G^\circ)$, and $V^\circ = (G^\circ)^2 + \sigma^2$.

Next, we invoke Lemma~\ref{lem:summed_descent_} under the anchored choices of parameters induced by \(F^\circ\). Then, with probability at least \(1-\delta/2\), we obtain\footnote{The equation below holds when Lemma~\ref{lem:freedman-tradeoff} is applied with parameter $\delta/2$.}:

\begin{equation}
\label{eq:temp_4}
    f(x_\tau) - f^* + \frac{h}{2} \sum_{t<\tau} A_t \le \Delta_0 + R_{T,\delta/2}^{NL}(F^\circ)
\end{equation}

where 
\begin{equation}
\label{eq:temp_2}
R_{T,\delta/2}^{NL}(F^\circ) := \frac{h T (V^\circ)^2}{\underline{D} C^2} + \frac{L}{2} h^2 H_T + h \Gamma^\circ \log(2/\delta) + h \frac{d G^\circ C}{\underline{D}}
\end{equation}.

We now apply the calibration $y_{\mathrm{init}} = C^2$ and $\epsilon_0 = C$, which ensures $\underline{D} = 2C$. We set $h = \eta / \sqrt{\Lambda_{T,\delta}}$ and $C = \kappa T^{1/3}$.

\paragraph{Step 1: Neighborhood condition.} The neighborhood condition (\ref{eq:neigh}) is
\[
\frac{hC}{\underlineD} \le \frac{G_\circ}{L_\circ}.
\]
Using $\underlineD=2C$ and $h=\eta/\sqrt{\Lambda_{T,\delta}}$, the left-hand side simplifies to
\[
\frac{hC}{\underlineD}
=
\frac{1}{2}\,\frac{\eta}{\sqrt{\Lambda_{T,\delta}}}
\le
\frac{\eta}{2}.
\]
Since $\Lambda_{T,\delta}\ge 1$, the first bound in (\ref{eq:temp_3}) implies
\[
\frac{\eta}{2}
\le
\frac{G_\circ}{L_\circ},
\]
and therefore (\ref{eq:neigh}) holds at the anchored level $F_\circ$.

\paragraph{Step 2: Exact form of $R_{T,\delta/2}(F^\circ)$.}
We now compute the four terms in
\[
R_{T,\delta/2}(F^\circ)
=
\frac{hT V_\circ^2}{\underlineD\,C^2}
+
\frac{L_\circ}{2}h^2 H_T
+
h\,\Gamma^\circ\log\frac{2}{\delta}
+
h \frac{d G^\circ C}{\underline{D}},
\]
where
\[
H_T = d\log\!\Big(1+\frac{TC^2}{y_{\mathrm{init}} + \epsilon_0}\Big)+d\,\frac{C^2}{y_{\mathrm{init}} + \epsilon_0},
\qquad
\Gamma^\circ=\frac{2V_\circ+\frac23 C G_\circ}{\underlineD}.
\]

\smallskip
\noindent\emph{First term.}
Using $\underlineD=2C$,
\[
\frac{hT V_\circ^2}{\underlineD\,C^2}
=
\frac{hT V_\circ^2}{2C^3}.
\]
Now, from (\ref{eq:calib})
\[
C^3 = \kappa^3 T,
\qquad
h = \frac{\eta}{\sqrt{\Lambda_{T,\delta}}},
\]
so the factor $T$ cancels and one gains an additional $\Lambda_{T,\delta}^{-1/2}$ improvement:
\[
\frac{hT V_\circ^2}{2C^3}
=
\frac{\eta V_\circ^2}{2\kappa^3\sqrt{\Lambda_{T,\delta}}}
\le
\frac{\eta V_\circ^2}{2\kappa^3}.
\]
Thus the bias term is uniformly bounded and is in fact smaller than before by the factor $\Lambda_{T,\delta}^{-1/2}$.

\smallskip
\noindent\emph{Second term.}
Let $s_0 = y_{\mathrm{init}} + \epsilon_0^2$, then $s_0=2C^2$,
\[
\frac{TC^2}{s_0} = \frac{T}{2},
\qquad
\frac{C^2}{s_0} = \frac12.
\]
Therefore
\[
H_T
=
d\log\!\Big(1+\frac{T}{2}\Big)+\frac{d}{2}.
\]
Multiplying by $h^2=\eta^2/\Lambda_{T,\delta}$ gives
\[
\frac{L_\circ}{2}h^2H_T
=
\frac{L_\circ\eta^2}{2\Lambda_{T,\delta}}
\left[d\log\!\Big(1+\frac{T}{2}\Big)+\frac{d}{2}\right].
\]
Because
\[
\log\!\Big(1+\frac{T}{2}\Big) \le \log(1+T)
\qquad\text{and}\qquad
\frac12\le 1,
\]
we have
\[
d\log\!\Big(1+\frac{T}{2}\Big)+\frac{d}{2}
\le
d\bigl(\log(1+T)+1\bigr)
\le d\,\Lambda_{T,\delta}.
\]
Hence
\[
\frac{L_\circ}{2}h^2H_T
\le
\frac{L_\circ d\eta^2}{2}.
\]

\smallskip
\noindent\emph{Third term.}
We denote $\lambda_\delta = \log\frac{2}{\delta}$ for simplicity. Since $\underlineD=2C$, and ,
\[
\Gamma^\circ
=
\frac{2V_\circ+\frac23 C G_\circ}{2C}
=
\frac{V_\circ}{C} + \frac{G_\circ}{3}.
\]
Therefore
\[
h\Gamma^\circ\lambda_\delta
=
\frac{\eta\lambda_\delta}{\sqrt{\Lambda_{T,\delta}}}
\left(\frac{V_\circ}{C}+\frac{G_\circ}{3}\right).
\]
We bound the two pieces separately.
For the $V_\circ/C$ piece, the calibration gives
\[
\frac{1}{C}
=
\frac{1}{\kappa T^{1/3}},
\]
so
\[
\frac{\eta\lambda_\delta}{\sqrt{\Lambda_{T,\delta}}}\cdot\frac{V_\circ}{C}
=
\frac{\eta V_\circ\lambda_\delta}{\kappa T^{1/3}\sqrt{\Lambda_{T,\delta}}}.
\]
Now $T\ge 1$ implies $T^{-1/3}\le 1$, and $\lambda_\delta\le \sqrt{\Lambda_{T,\delta}}$ by definition of $\Lambda_{T,\delta}$. Therefore
\[
\frac{\lambda_\delta}{T^{1/3}\sqrt{\Lambda_{T,\delta}}}
\le
1,
\]
and hence
\[
\frac{\eta\lambda_\delta}{\sqrt{\Lambda_{T,\delta}}}\cdot\frac{V_\circ}{C}
\le
\frac{\eta V_\circ}{\kappa}.
\]
For the $G_\circ/3$ piece, again using $\lambda_\delta\le\sqrt{\Lambda_{T,\delta}}$,
\[
\frac{\eta\lambda_\delta}{\sqrt{\Lambda_{T,\delta}}}\cdot\frac{G_\circ}{3}
\le
\frac{\eta G_\circ}{3}.
\]
Combining the two estimates yields
\[
h\Gamma^\circ\lambda_\delta
\le
\frac{\eta V_\circ}{\kappa} + \frac{\eta G_\circ}{3}.
\]

\smallskip
\noindent\emph{Third term.}
Because $\underlineD=2C$,
\[
h\,\frac{dG_\circ C}{\underlineD}
=
\frac{h dG_\circ}{2}
=
\frac{\eta dG_\circ}{2\sqrt{\Lambda_{T,\delta}}}
\le
\frac{\eta dG_\circ}{2},
\]
where we again used $\Lambda_{T,\delta}\ge1$.
\paragraph{Step 3: Global bound on $R_{T,\delta/2}(F^\circ)$.}
Summing the four pieces established above gives
\[
R_{T,\delta/2}(F^\circ)
\le
\frac{\eta V_\circ^2}{2\kappa^3}
+
\frac{L_\circ d\eta^2}{2}
+
\frac{\eta V_\circ}{\kappa}
+
\frac{\eta G_\circ}{3}
+
\frac{\eta dG_\circ}{2}.
\]
Now each term is controlled by one of the explicit bounds in \eqref{eq:temp_3}:
\[
\frac{\eta V_\circ^2}{2\kappa^3} \le \frac{\Delta_0}{8},
\qquad
\frac{L_\circ d\eta^2}{2} \le \frac{\Delta_0}{8},
\qquad
\frac{\eta V_\circ}{\kappa} \le \frac{\Delta_0}{8},
\qquad
\frac{\eta G_\circ}{3} \le \frac{\Delta_0}{8},
\qquad
\frac{\eta dG_\circ}{2} \le \frac{\Delta_0}{2}
.
\]
Summing these upper bounds yeilds
\[
R_{T,\delta/2}(F^\circ)
\le
\Delta_0.
\]
Recalling that $F_\circ=2\Delta_0$, we conclude that
\[
F_\circ = 2\Delta_0 \ge \Delta_0 + R_{T,\delta/2}(F_\circ),
\]

Consequently, (\ref{eq:temp_4}) simplifies to $f(x_\tau) - f^* \le 2\Delta_0 = F^\circ$. By definition of the stopping time, this strictly rules out $\tau < T$, proving that $\tau = T$ (the trajectory never exits the level set) and yielding:

\begin{equation}
\label{eq:temp_5}
\frac{1}{T}\sum_{t=0}^{T-1} A_t \le \frac{2F^\circ}{h T} = \frac{4\Delta_0}{h T}
\end{equation}
with probability $1-\delta/2$.

\paragraph{Step 4: Euclidean Conversion and Final Complexity.}

To convert the preconditioned bound $A_t$ into an unconditioned Euclidean bound, we must formalize the intersection of two high-probability events.

First, let $\mathcal{E}_{pre}$ denote the event that our preconditioned descent bound from Step 4 holds, i.e (\ref{eq:temp_5}). Specifically,
\[
\mathcal{E}_{pre}
:=
\left\{
\tau=T
\ \text{ and }\ 
\frac{1}{T}\sum_{t=0}^{T-1}A_t\le \frac{4\Delta_0}{hT}
\right\}.
\]
Because this bound relies on the Bernstein Freedman supermartingale inequality applied with a confidence parameter of $\delta/2$\footnote{This holds when Lemma~\ref{lem:freedman-tradeoff} is applied with parameter $\delta/2$.}, we have $\mathbb{P}(\mathcal{E}_{pre}) \ge 1 - \delta/2$. On this event, $\tau = T$ and $\frac{1}{T}\sum_{t=0}^{T-1} A_t \le \frac{4\Delta_0}{h T}$.

Second, let $\mathcal{E}_{den}$ denote the denominator concentration event, i.e $
\mathcal{E}_{den} := \{\widetilde Y_T\le B_{T,\delta/2}\}$. By Lemma ~\ref{lem:Dhp}, applied to the stopped clipped-gradient energy $\tilde{Y}_T$ with a confidence parameter of $\delta/2$, the total energy is bounded by:$$B_{T,\delta/2} := T V^\circ + \sqrt{2T C^2 V^\circ \log(2/\delta)} + \frac{2C^2}{3}\log(2/\delta)$$This event also holds with probability $\mathbb{P}(\mathcal{E}_{den}) \ge 1 - \delta/2$.

To ensure both bounds hold simultaneously, we apply the union bound. The probability that either event fails is at most $\delta/2 + \delta/2 = \delta$. Therefore, the probability that both events hold is:$$\mathbb{P}(\mathcal{E}_{pre} \cap \mathcal{E}_{den}) \ge 1 - \delta$$We proceed on this intersection. On $\mathcal{E}_{den}$, the maximum denominator norm is deterministically bounded for all $t \le T$:$$\|D_t\| \le \overline{D}_\delta := \sqrt{y_{\mathrm{init}} + B_{T,\delta/2}} + \epsilon_0$$Because the diagonal metric satisfies $D_t \le \overline{D}_\delta I$, its inverse satisfies $D_t^{-1} \ge \overline{D}_\delta^{-1} I$. This allows us to lower-bound the preconditioned gradient progress:$$A_t = \langle\nabla f(x_t), D_t^{-1}\nabla f(x_t)\rangle \ge \overline{D}_\delta^{-1} \|\nabla f(x_t)\|^2$$Averaging this inequality over the horizon $T$ and applying the bound from $\mathcal{E}_{pre}$ yields:$$\frac{1}{T} \sum_{t=0}^{T-1} \|\nabla f(x_t)\|^2 \le \overline{D}_\delta \left( \frac{1}{T} \sum_{t=0}^{T-1} A_t \right) \le \overline{D}_\delta \frac{4\Delta_0}{h T}$$To isolate the explicit dependence on the horizon $T$, we expand $\overline{D}_\delta$ using our parameter calibrations $y_{\mathrm{init}} = C^2$, $\epsilon_0 = C$, and $C = \kappa T^{1/3}$:$$\overline{D}_\delta = \sqrt{C^2 + T V^\circ + \sqrt{2T C^2 V^\circ \log(2/\delta)} + \frac{2C^2}{3}\log(2/\delta)} + C$$We factor out $\sqrt{T}$ from both terms. Noting that $C^2 / T = \kappa^2 T^{-1/3}$ and $C / \sqrt{T} = \kappa T^{-1/6}$, we obtain:$$\overline{D}_\delta \le \sqrt{T} \left[ \left( V^\circ + \kappa \sqrt{2V^\circ \log(2/\delta)} T^{-1/6} + \kappa^2\left(1 + \frac{2}{3}\log(2/\delta)\right) T^{-1/3} \right)^{1/2} + \kappa T^{-1/6} \right]$$Finally, we substitute this expanded $\overline{D}_\delta$ and the calibrated step size $h = \eta / \sqrt{\Lambda_{T,\delta}}$ back into the Euclidean bound. The $\sqrt{T}$ from the denominator bound and the $1/T$ from the averaging cleanly collapse:$$\overline{D}_\delta \frac{4\Delta_0}{h T} = \sqrt{T}[\dots] \cdot \frac{4\Delta_0 \sqrt{\Lambda_{T,\delta}}}{\eta T} = \frac{4\Delta_0 \sqrt{\Lambda_{T,\delta}}}{\eta \sqrt{T}} [\dots]$$This simplifies exactly to the finite-horizon theorem guarantee. Because the bracketed term limits to $\sqrt{V^\circ}$ as $T \to \infty$ and $\Lambda_{T,\delta}$ contributes only polylogarithmic factors, this directly validates the $\tilde{\mathcal{O}}(T^{-1/2})$ convergence rate.

The actual rate is:$$ \mathcal{O}\left( \frac{d\big(\sqrt{\log T}+ \log(1/\delta)\big)}{\sqrt{T}} \right) $$.

\subsubsection{Discussion on specific choice of $F^\circ = 2\Delta_0$}
\label{sec:appex}
While Theorem~\ref{thm:main} explicitly anchors the analysis at the level $F^\circ = 2\Delta_0$, this choice is an analytic convenience rather than an algorithmic limitation. The $\tilde{\mathcal{O}}(T^{-1/2})$ Euclidean stationarity guarantee natively extends to any arbitrary target level $F > \Delta_0$. For any such choice, the induced local generalized-smoothness constants $G(F)$ and $L(F)$ remain finite and fixed, yielding a strictly positive feasibility slack $F - \Delta_0 > 0$. Because the cumulative algorithmic and stochastic penalties, specifically the clipping-residual bias, the deterministic same-step correction, the martingale fluctuation, and the smoothness remainder, scale proportionately with the hyperparameters, there always exists a sufficiently small step-size parameter $\eta$ capable of compressing the total error budget entirely within this available slack.We specifically highlight $F^\circ = 2\Delta_0$ because it constructs an optimal, symmetric $1:1$ feasibility partition: it dedicates exactly $\Delta_0$ to the initial suboptimality and leaves an equal maximum allowance of $\Delta_0$ for the noise budget. To understand why this balance is critical, consider the mechanical tradeoff in the final Euclidean guarantee. Selecting a substantially larger target level (e.g., $F \gg \Delta_0$) artificially widens the error budget, which permissibly allows for a larger step-size parameter $\eta$. Since $\eta$ dictates the denominator of the final complexity bound, increasing it nominally improves the theoretical convergence rate. However, under generalized smoothness, the localized gradient bound $G(F)$ and smoothness parameter $L(F)$ grow monotonically, and potentially rapidly, with the level-set threshold $F$. Because these geometric constants dictate the numerator of the final bound (acting through the variance factor $V(F)$ and the denominator bound $\overline{D}_\delta$), a larger $F$ causes the numerator to inflate significantly faster than the step size improves. Consequently, an overly large $F$ severely degrades the leading constants hidden within the asymptotic rate, rendering the theoretical upper bound loose and uninformative. Thus, anchoring at $2\Delta_0$ serves as an elegant structural optimal: it provides exactly enough slack to ensure feasibility without unnecessarily deteriorating the problem's local geometry.

Conversely, while selecting a tighter target level $\Delta_0 < F < 2\Delta_0$ would marginally improve the local curvature constants, it would severely shrink the available feasibility slack. This would force a proportionally microscopic step size $\eta$, unnecessarily inflating the iteration complexity.
% Under this calibration, (\ref{eq:temp_2}) strictly evaluates to five components:

% \[R_{T,\delta/2}^{NL}(F^\circ) \le \frac{\eta (V^\circ)^2}{2\kappa^3} + \frac{L^\circ d\eta^2}{2} + \frac{\eta V^\circ}{\kappa} + \frac{\eta G^\circ}{3} + \frac{\eta d G^\circ}{2}\]
\end{proof}
% \begin{lemma}[Deterministic bound on the cumulative same-step correction]
% \label{lem:samestep_}
% For every sample path, $\sum_{t<\tau} |\textcolor{cSame}{E_t}| \le \frac{dGC}{\underline D}.
% $ Consequently,$\sum_{t<\tau} \textcolor{cSame}{E_t} \le \frac{dGC}{\underline D}.
% $
% \end{lemma}
% \begin{proof}
% Fix a coordinate $i$.
% Because $y_{t+1,i}\ge y_{t,i}$, the scalar sequence $(D_{t,i}^{-1})_{t\ge 0}$ is nonincreasing, so
% \[
% D_{t,i}^{-1}-D_{t+1,i}^{-1}\ge 0.
% \]
% For $t<\tau$, Lemma~\ref{lem:beforetau_grad} gives $|\nabla_i f(x_t)|\le G$, while clipping gives $|\tilde g_{t,i}|\le C$.
% Hence
% \[
% |E_t|
% \le
% \sum_{i=1}^d |\nabla_i f(x_t)|\,(D_{t,i}^{-1}-D_{t+1,i}^{-1})\,|\tilde g_{t,i}|
% \le
% GC\sum_{i=1}^d (D_{t,i}^{-1}-D_{t+1,i}^{-1}).
% \]
% Summing over $t=0,\dots,\tau-1$ telescopes coordinatewise:
% \[
% \sum_{t<\tau}|E_t|
% \le
% GC\sum_{i=1}^d\sum_{t=0}^{\tau-1}(D_{t,i}^{-1}-D_{t+1,i}^{-1})
% =
% GC\sum_{i=1}^d(D_{0,i}^{-1}-D_{\tau,i}^{-1}).
% \]
% Finally $D_{0,i}=\sqrt{\yinit}+\epsz=\underlineD$ and $D_{\tau,i}^{-1}\ge 0$, so
% \[
% \sum_{t<\tau}|E_t|\le GC\sum_{i=1}^d D_{0,i}^{-1} = \frac{dGC}{\underlineD}.
% \]
% The second claim is immediate from $\sum_{t<\tau}E_t\le \sum_{t<\tau}|E_t|$.
% \end{proof}

\end{document}